\title[]{Theoretical bounds on estimation error for meta-learning}
\DeclareMathOperator*{\argmax}{\textrm{argmax}}
\newcommand{\iid}{\emph{i.i.d }}
\newcommand{\lossfn}{\ell}
\newcommand{\metricfn}{\rho}
\newcommand{\monofn}{\psi}
\newcommand{\estimator}{\hat{\theta}}
\newcommand{\loss}[3]{\lossfn(\estimator_{#1}(#2), \theta_{#3})}
\newcommand{\exploss}[4]{\bbE_{\substack{#3\sim #1^n\\\testS \sim #2^k}}\left[ \loss{#3}{#4}{#2} \right]}
\newcommand{\iidexploss}{\bbE_{S\sim P^k}\left[ \loss{}{S}{P} \right]}
\newcommand{\bx}{\mathbf{x}}
\newcommand{\by}{\mathbf{y}}
\newcommand{\envS}{S_{1:M}}
\newcommand{\testS}{S_{M+1}}
\newcommand{\bbE}{\mathbb{E}}
\newcommand{\argmin}{\mathop{\textrm{argmin}}}
\newcommand{\bbP}{\mathbb{P}}
\newcommand{\bbR}{\mathbb{R}}
\newcommand{\bbN}{\mathbb{N}}
\newcommand{\KL}[2]{D_{\mathrm{KL}}\left(#1\Vert#2\right)}
\newcommand{\calJ}{\mathcal{J}}
\newcommand{\calP}{\mathcal{P}}
\newcommand{\calV}{\mathcal{V}}
\newcommand{\calN}{\mathcal{N}}
\newcommand{\calF}{\mathcal{F}}
\newcommand{\bball}{\mathbb{B}}
\newcommand{\spaceZ}{\mathcal{Z}}
\newcommand{\spaceX}{\mathcal{X}}
\newcommand{\spaceY}{\mathcal{Y}}
\newcommand{\btheta}{\bm{\theta}}
\newcommand{\Covop}{\mathop{\mathrm{Cov}}}
\newcommand{\barP}{\bar{P}}
\newcommand{\norm}[1]{\lVert #1 \rVert}
\newcommand{\bignorm}[1]{\left\lVert #1 \right\rVert}
\newcommand{\bestimator}{\bm{\estimator}}
\newcommand{\bestimatorNovel}{\bm{\estimator}_{M+1}}
\newcommand{\lregparams}{\btheta}
\newcommand{\btau}{\bm{\tau}}
\newcommand{\bxi}{\bm{\xi}}
\newcommand{\beps}{\bm{\epsilon}}
\newcommand{\tr}{\mathrm{Tr}}
\newcommand{\sminX}{s_1}
\newcommand{\sminY}{s_2}
\newcommand{\posC}[1]{\tilde{C}_{#1}}
\newcommand{\posCX}{\posC{1}}
\newcommand{\nX}{{n}}
\newcommand{\nY}{{k}}
\newcommand{\DXi}[1]{X_{#1}}
\newcommand{\DX}{\DXi{}}
\newcommand{\DXP}{\DX \CTheta}
\newcommand{\DY}{\DXi{M+1}}
\newcommand{\Ci}[1]{\sigma_{#1}^2 I}
\newcommand{\CX}{\sigma_1^2 I}
\newcommand{\condX}{\kappa}
\newcommand{\condY}{\kappa_{M+1}}
\newcommand{\condtC}{\tilde{\kappa}}
\newcommand{\condtau}{\kappa_\tau}
\newcommand{\alphaX}{\alpha_1}
\newcommand{\alphaY}{\alpha_2}
\newcommand{\thetai}[1]{\bm{\theta}_{#1}}
\newcommand{\thetaY}{\thetai{M+1}}
\newcommand{\smax}{{s_{\max}}}
\newcommand{\smin}{{s_{\min}}}
\newcommand{\sCY}{\sigma_{M+1}}
\newcommand{\CTheta}{{\sigma_\theta^2 I}}
\newcommand{\smaxP}{{\sigma_\theta^2}}
\newcommand{\sminP}{{\sigma_\theta^2}}
\newcommand{\smaxCX}{{\sigma_1^2}}
\newcommand{\smaxCY}{{\sigma_{M+1}^2}}
\newcommand{\sminCX}{{\sigma_1^2}}
\newcommand{\XPXT}{\sigma_\theta^2 \DX \DX^\top}
\newcommand{\XPXTi}[1]{\sigma_\theta^2 \DXi{#1} \DXi{#1}^\top}
\newcommand{\XTCYX}{\sigma_{M+1}^{-2} \DY^\top \DY}
\newcommand{\XTCY}{\sigma_{M+1}^{-2} \DY^\top}
\newcommand{\sminCYInv}{\sCY^{-2}}
\newcommand{\CYInv}{\sCY^{-2}}
\newcommand{\CXs}{\sigma_1^2}
\newcommand{\CovPosTheta}{\Sigma_{\thetaY \vert \allData}}
\newcommand{\muPosTheta}{\mu_{\thetaY \vert \allData}}
\newcommand{\CovPosXThetaTau}{\Sigma_{\theta + \tau \vert \sourceData}}
\newcommand{\CovPosXTau}{\Sigma_{\tau \vert \sourceData}}
\newcommand{\muPosXTau}{\mu_{\tau \vert \sourceData}}
\newcommand{\sourceData}{Y_{1:M}}
\newcommand{\novelData}{\by_{M+1}}
\newcommand{\allData}{Y_{1:M+1}}
\newcommand{\PrecPosXTauExpanded}{\sum^{M}_{i=1}\DXi{i}^\top(\XPXTi{i} + \CXs I)^{-1}\DXi{i}}
\newcommand{\muPosXTauExpanded}{\CovPosXTau \sum^{M}_{i=1}\DXi{i}^\top (\XPXTi{i} + \CXs I)^{-1}\by_{i}}
\newcommand{\thetaEst}{\bestimatorNovel}
\newcommand{\CPos}{{\Sigma'}}
\newcommand{\CPosT}{{\Sigma'_{0}}}
\newcommand{\MConst}{A}
\newcommand{\MMConst}{A_1}
\newcommand{\MMMConst}{A_2}
\newcommand{\LConst}{L}
\newcommand{\LLConst}{L_1}
\newcommand{\LLLConst}{L_2}
\renewcommand{\dim}{d}
\newcommand{\cond}{\kappa}
\begin{document}

\maketitle

\begin{abstract}
    Machine learning models have traditionally been developed under the assumption that the training and test distributions match exactly. However, recent success in few-shot learning and related problems are encouraging signs that these models can be adapted to more realistic settings where train and test distributions differ. Unfortunately, there is severely limited theoretical support for these algorithms and little is known about the difficulty of these problems. In this work, we provide novel information-theoretic lower-bounds on minimax rates of convergence for algorithms that are trained on data from multiple sources and tested on novel data. Our bounds depend intuitively on the information shared between sources of data, and characterize the difficulty of learning in this setting for arbitrary algorithms. We demonstrate these bounds on a hierarchical Bayesian model of meta-learning, computing both upper and lower bounds on parameter estimation via maximum-a-posteriori inference.
\end{abstract}
\vspace{0.3in}
\section{Introduction}
Many practical machine learning applications deal with distributional shift from training to testing. One example is few-shot classification
\citep{ravi2016optimization, vinyals2016matching}, where new classes need to be learned at test time based on only a few examples for each novel class. Recently, few-shot classification has seen increased success; however,
the theoretical properties of this problem remain poorly understood. 

In this paper we analyze the {\it meta-learning} setting, where 
the learner is given access to samples from a set of meta-training distributions, or tasks.  At test-time, the learner is exposed to only a small number of samples from some novel task. The meta-learner aims to uncover a useful inductive bias from the original samples, which allows them to learn a new task more efficiently.\footnote{Note that this definition encompasses few-shot learning.}
While some progress has been made towards understanding the generalization performance of specific meta-learning algorithms \citep{amit2017meta, khodak2019provable, pmlr-v98-bullins19a, pmlr-v97-denevi19a, cao2019theoretical}, little is known about the difficulty of the meta-learning problem in general.
Existing work has studied generalization upper-bounds for novel data distributions \citep{ben2010theory, amit2017meta}, yet to our knowledge, the inherent difficulty of these tasks relative to the \iid case has not been characterized. 

In this work, we derive novel bounds for meta learners. We first present a general information theoretic lower bound, Theorem~\ref{thm:env_lower_bound}, that we use to derive bounds in particular settings. Using this result, we derive lower bounds in terms of the number of training tasks, data per training task, and data available in a novel target task. Additionally, we provide a specialized analysis for the case where the space of learning tasks is only partially observed, proving that infinite training tasks or data per training task are insufficient to achieve zero minimax risk (Corollary~\ref{thm:env_lower_bound_asymp}).

We then derive upper and lower bounds for a particular meta-learning setting. In recent work, \citet{grant2018recasting} recast the popular meta-learning algorithm MAML \citep{finn2017model} in terms of inference in a Bayesian hierarchical model. Following this, we provide a theoretical analysis of a hierarchical Bayesian model for meta-linear-regression. We compute sample complexity bounds for posterior inference under Empirical Bayes \citep{robbins1956} in this model and compare them to our predicted lower-bounds in the minimax framework.  Furthermore, through asymptotic analysis of the error rate upper bound of the MAP estimator, we identify crucial features of the meta-learning environment which are necessary for novel task generalization.

\noindent Our primary contributions can be summarized as follows:
\begin{itemize}
    \item We introduce novel lower bounds on minimax risk of parameter estimation in meta-learning.
    \item Through these bounds, we compare the relative utility of samples from meta-training tasks and the novel task and emphasize the importance of the relationship between the tasks.
    \item We provide novel upper bounds on the error rate for estimation in a hierarchical meta-linear-regression problem, which we verify through an empirical evaluation.
\end{itemize}

\section{Related work}

\citet{baxter2000model} introduced a formulation for inductive bias learning where the learner is embedded in an environment of multiple tasks. The learner must find a hypothesis space which enables good generalization on average tasks within the environment, using finite samples. In our setting, the learner is not explicitly tasked with finding a reduced hypothesis space but instead learns
using a two-stage approach, which matches the standard meta-learning paradigm \citep{vilalta2002perspective}. In the first stage an inductive bias is extracted from the data, and in the second stage the learner estimates using data from a novel task distribution. Further, we focus on bounding minimax risk of meta learners. Under minimax risk, an optimal learner achieves minimum error on the hardest learning problem in the environment. While average case risk of meta learners is more commonly studied, recent work has turned attention towards the minimax setting \citep{pmlr-v75-kpotufe18a, NIPS2019_9179, hanneke2020no}. The worst-case error 
in meta-learning is particularly important in safety-critical systems, for example in medical diagnosis.

There is a large volume of prior work studying upper-bounds on generalization error in multi-task environments \citep{ben2008notion, ben2010theory, pentina2014pac, amit2017meta}. While the approaches in these works vary, one common factor is the need to characterize task-relatedness. Broadly, these approaches either assume a shared distribution for sampling tasks \citep{baxter2000model, pentina2014pac, amit2017meta}, or a measure of distance between distributions \citep{ben2008notion, ben2010theory, mohri2012new}. Our lower-bounds utilize a weak form of task relatedness, assuming that the environment contains a finite set that is suitably separated in parameter space but close in KL divergence---this set of assumptions also arises often when computing \iid minimax lower bounds.

One practical approach to meta-learning
is learning a linear mapping on top of a learned feature space. 
For example, Prototypical Networks~\citep{snell2017prototypical} effectively learn a  discriminative embedding function and performs linear classification on top using the novel task data.
Analyzing these approaches is challenging due to metric-learning inspired objectives (that require non-\iid sampling) and the simultaneous learning of feature mappings and top-level linear functions yet
some progress has been made \citep{jin2009regularized, saunshi2019theoretical, wang2019multitask, du2020few}. \citet{maurer2009transfer}, for example, explores linear models fitted over a shared linear feature map in a Hilbert space. Our results can be applied in these settings if a suitable packing of the representation space is defined.

Other approaches to meta-learning aim to parameterize learning algorithms themselves. Traditionally, this has been achieved by hyper-parameter tuning~\citep{gpml,stn} but
recent fully parameterized optimizers also show promising performance in deep neural network optimization~\citep{l2l}, few-shot learning~\citep{ravi2016optimization}, unsupervised learning~\citep{metaunsup}, and reinforcement learning~\citep{duan2016rl2}. Yet another approach learns the initialization of task-specific parameters, that are further adapted through regular gradient descent. Model-Agnostic Meta-Learning \citep{finn2017model}, or MAML, augments the global parameters with a meta-initialization of the weight parameters. \citet{grant2018recasting} recast MAML in terms of inference in a Bayesian hierarchical model.
In Section~\ref{sec:hierarchical_bayes}, we consider learning in a hierarchical environment of linear models and provide both lower and upper bounds on the error of estimating the parameters of a novel linear regression problem.

Lower bounding estimation error is a critical component of understanding learning problems (and algorithms). Accordingly, there is a large body of literature producing such lower bounds \citep{khas1979lower, yang1999information, loh2017lower}. We focus on producing lower-bounds for parameter estimation using local packing sets, but expect that extending these results to density estimation or non-parametric estimation is feasible.

\section{Novel task environment risk}
\label{sec:minimax_setting}
Most existing theoretical work studying out-of-distribution generalization focuses on providing upper-bounds on generalization performance \citep{ben2010theory, pentina2014pac, amit2017meta}. We begin by instead exploring the converse: what is the best performance we can hope to achieve on any given task in the environment? After introducing notation and minimax risks, we then show how these ideas can be applied, using meta linear regression as an example.

A full reference table for notation can be found in Appendix~\ref{app:notation} and a short summary is given here.
We consider algorithms that learn in an environment $(\spaceZ, \calP)$, with data domain $\spaceZ = \spaceX \times \spaceY$ and $\calP$ a space of distributions with support $\spaceZ$. In the typical \iid setting, the algorithm is provided training data $S \in \spaceZ^k$, consisting of $k$ \iid samples from $P \in \calP$. 

In the standard {\it multi-task} setting, we sample training data from a set of training tasks\\$\{P_1,\ldots,P_{M+1}\} \subset \calP$.
We extend this to a meta-learning, or {\it novel-task} setting by first drawing $\envS$: $n$ training data points from the first $M$ distributions, for a total of $nM$ samples. We call this the {\it meta-training set}. We then draw a small sample of novel data,
called a {\it support set}, $\testS \in \spaceZ^k$, from $P_{M+1}$. 

Consider a symmetric loss function $\lossfn(a,b) = \monofn(\metricfn(a, b))$ for non-decreasing $\monofn$ and arbitrary metric $\metricfn$. We seek to estimate the output of $\theta: \calP \rightarrow \Omega$, a functional that maps distributions to a metric space $\Omega$. For example, $\theta(P)$ may describe the coefficient vector of a high-dimensional hyperplane when $\calP$ is a space of linear models, and $\metricfn$ may be the Euclidean distance.

\paragraph{The \iid minimax risk} Before studying the meta-learning setting, we first begin with a definition of the \iid minimax risk that measures the worst-case error of the best possible estimator,
\begin{equation}\label{eqn:iid_risk}
    R^* = \inf_{\estimator}\sup_{P \in \calP}\iidexploss.
\end{equation}
For notational convenience, we denote the output of $\theta(P)$ by $\theta_P$. The estimator for $\theta$ is denoted, $\estimator: \spaceZ^k \rightarrow \Omega$, and maps $k$ samples from $P$ to an estimate of $\theta_P$.

\paragraph{Novel-task minimax risk} In the novel-task setting, we wish to estimate $\theta_{P_{M+1}}$, the parameters of the novel task distribution $P_{M+1}$.
We consider two-stage estimators for $\theta_{P_{M+1}}$. In the first stage, the meta-learner uses a learning algorithm $f: \envS \mapsto \bestimator_{\envS}$,  that maps the meta-training set to an estimation algorithm, $\bestimator_{\envS}:~\spaceZ^k\rightarrow~\Omega$. In the second stage, the learner computes $\bestimator_{\envS}(\testS)$, the estimate of $\theta_{P_{M+1}}$.

\noindent The novel-task minimax risk is given by, 
\begin{equation}\label{eqn:novel_risk}
    R^*_\calP = \inf_{\estimator} \sup_{P_1,\ldots,P_{M+1} \in \calP} \exploss{P_{1:M}}{P_{M+1}}{\envS}{\testS}
\end{equation}

The estimator for $\theta_{M+1}$ now depends additionally on the $Mn$ samples in $\envS$, where only $k\ll~Mn$ samples from $P_{M+1}$ are available to the learner. This quantity addresses the domain shift expected at test-time in the meta-learning setting and allows the learner to use data from multiple tasks.

The goal of $f$ in this setting is to learn an inductive bias from $\envS$ such that useful inferences can be made with only $k$ data points from the novel distribution, $P_{M+1}$. In this setting, $k$ is equivalent to the number of shots in the few-shot learning setup.

\paragraph{An example with meta-linear regression}
We present here a short summary based on meta linear regression, which we will analyze
in more detail in Section~\ref{sec:hierarchical_bayes}.

\begin{figure}
\centering
\includegraphics[width=\linewidth]{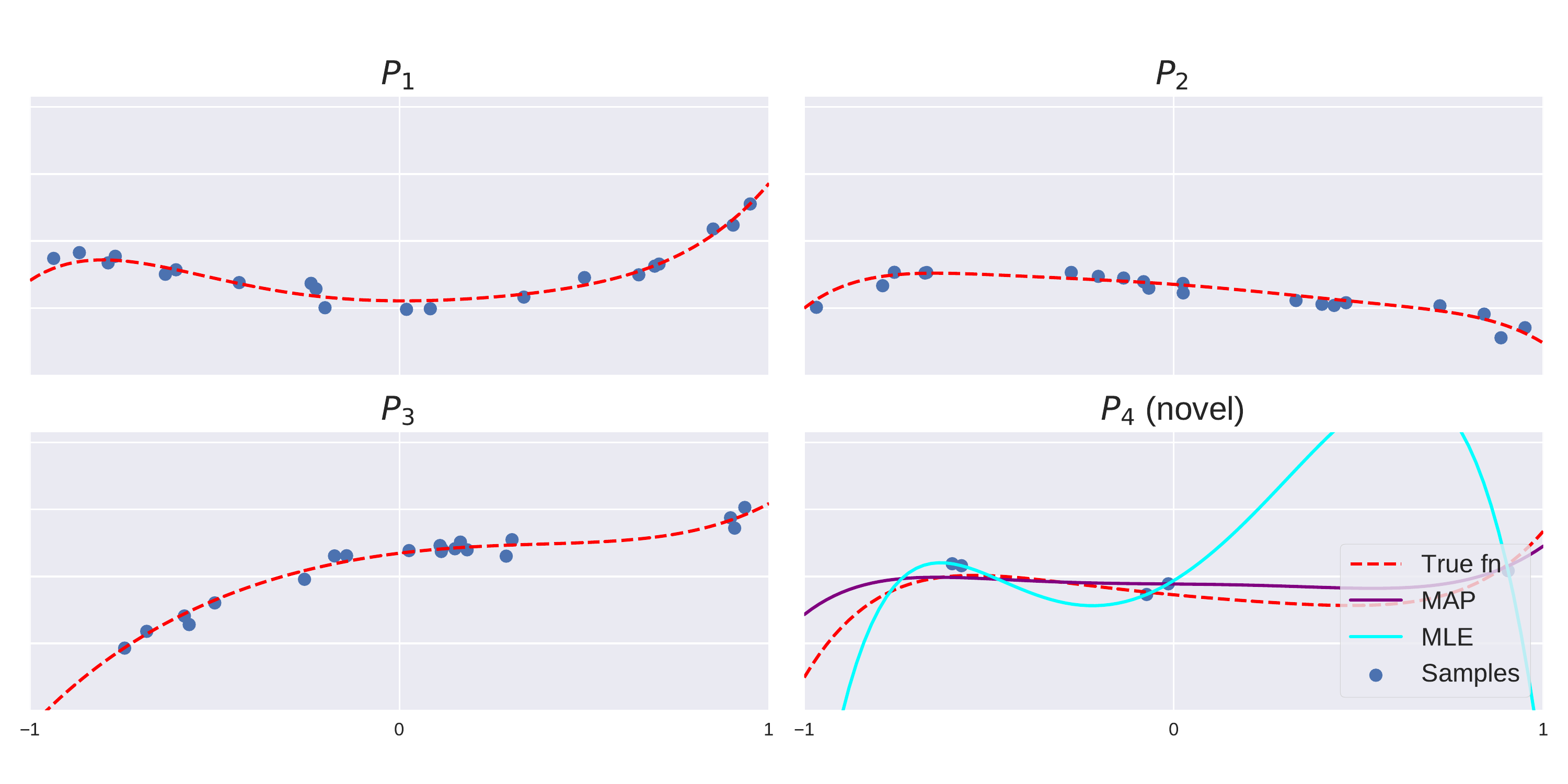}%
\caption{\textbf{Meta-learning 1D-regression:} The parameters of a 1D regression model are fitted
from a small support set. The training distributions ($P_1,P_2,P_3$) give a useful inductive bias
for fitting $P_4$ using only 5 points. The MLE solution on the novel task for those 5 points is also
displayed.}
\label{fig:meta_lin_ref}
\end{figure}
In Figure~\ref{fig:meta_lin_ref}, we show observed data samples from a family of polynomial
regression models. Our aim is to output an algorithm which recovers the parameters of a new polynomial function from limited observations--we choose a MAP estimator which is described fully in
Section~\ref{sec:hierarchical_bayes}. In the bottom right, we are given only 5 data points from a
novel task distribution and estimate the parameters of the model with both the MLE and MAP
estimators --- the MLE overfits the support set while the MAP estimator is close to the true function.

\noindent In terms of the terminology used above, the set,
\[\calP = \{p_{\lregparams}(y)=\calN(\bx^\top \lregparams,\sigma^2): \lregparams \in \bbR^{d}, \bx=[1,x,\ldots,x^{d-1}]\},\]
is the space of polynomial regression models, parameterized by $\lregparams$. For this problem, we take $\lossfn(\estimator, \theta) = \norm{\estimator - \theta}_2^2$. The tasks are generated with $p(\theta) = \calN(\tau, \sigma^2_\theta)$, for unknown, sparse, $\tau \in \bbR^d$. Thus, each model is a polynomial function with few large coefficients. The algorithm $f$, first takes samples from $P_1,P_2,P_3$ and computes an estimate, $\hat{\tau}$. This estimate of $\tau$ is then used to compute $\bestimator(\testS; \hat{\tau}) = \argmax_{\lregparams_4} p(\lregparams_4|\hat{\tau}, \testS)$. Note that this approach is able to learn the correct inductive bias from the data, without requiring a carefully designed regularizer.  The lower bounds we derive in Section~4 can be applied to problems of this general type, and the upper and lower bounds in Section~5 apply specifically to this setting.

\section{Information theoretic lower bounds on novel task generalization}
\label{sec:lbounds}
\vspace{-0.1in}

In this section, we first present our most general result: Theorem~\ref{thm:env_lower_bound}. Using this, we derive Corollary~\ref{corollary:env_lbound_packing} that gives a lower bound in terms of the sample size in the training and novel tasks. Corollary~\ref{corollary:env_lbound_packing} recovers the well-known \iid lower bound (Theorem~\ref{thm:lower_bound}) when $Mn=0$, and, importantly, highlights that the novel task data is significantly more valuable than the training task data. Additionally, we provide a specialized bound that applies when the environment is partially observed --- proving that in this setting training task data is insufficient to drive the minimax risk to zero.

In Theorem~\ref{thm:env_lower_bound}, we assume that $\calP$ contains $J$ distinct $2\delta$-separated distributions but the learner observes data from only $M+1 \leq J$ of them. Intuitively, the learner observes the environment and their error rate lower-bound shrinks as the amount of information shared between the training tasks and the novel task grows.
All proofs are given in Appendix~\ref{app:proofs:lbounds}. Recall $\lossfn(a,b) = \monofn(\metricfn(a, b))$ for non-decreasing $\monofn$ and arbitrary metric $\metricfn$.

\begin{restatable}[Minimax novel task risk lower bound]{theorem}{envlbound}\label{thm:env_lower_bound}
Let $\calJ \subset \calP$ contain $J$ distinct distributions such that $\metricfn(\theta_{P}, \theta_{P'})~\geq~2\delta$ for all $P,P' \in \calJ$. 
Let $\pi$ be a random ordering of the $J$ elements, and $Z|\pi$ be a vector of $k$ \iid samples from $P_{\pi_{M+1}}$.
Further, define $W|\pi$ to be an $n \times M$ matrix whose $j^{th}$ column consist of $n$ \iid samples from $P_{\pi_j}$. Then,
\begin{align*}
R^*_\calP \geq
\monofn(\delta)\left(1 - \dfrac{I(\pi_{M+1}; W) + I(\pi_{M+1};Z) + 1}{\log_2 J}\right).
\end{align*}
\end{restatable}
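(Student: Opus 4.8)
The plan is to reduce the estimation problem to a multi-way hypothesis test (Fano-type argument) over the packing set $\calJ$, but with the twist that the "parameter" being tested, $\pi_{M+1}$, is entangled with the training data $W$ through the random ordering $\pi$. First I would observe that since $\monofn$ is non-decreasing and the distributions in $\calJ$ are $2\delta$-separated in the metric $\metricfn$, any two distinct elements of $\calJ$ have parameters at distance at least $2\delta$; hence the usual testing-from-estimation reduction applies: given any estimator $\hat\theta$, the "decoder" $\hat\pi_{M+1} := \argmin_{j} \metricfn(\hat\theta, \theta_{P_j})$ over the packing satisfies, by the triangle inequality, $\{\hat\pi_{M+1} \neq \pi_{M+1}\} \subseteq \{\metricfn(\hat\theta, \theta_{P_{\pi_{M+1}}}) \geq \delta\}$, which in turn (again by monotonicity of $\monofn$) lower-bounds the loss: $\bbE[\lossfn(\hat\theta, \theta_{P_{\pi_{M+1}}})] \geq \monofn(\delta)\,\bbP(\hat\pi_{M+1} \neq \pi_{M+1})$. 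Since $R^*_\calP$ is a supremum over choices of the $M+1$ task distributions, it dominates the Bayes risk under the uniform prior on orderings $\pi$, so $R^*_\calP \geq \monofn(\delta)\, \inf_{\hat\pi_{M+1}} \bbP(\hat\pi_{M+1} \neq \pi_{M+1})$, where the infimum is over all (possibly randomized) functions of the observed data $(W, Z)$.

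Next I would bound the Bayes error of recovering $\pi_{M+1}$ from $(W,Z)$ using Fano's inequality. The key point is that $\pi_{M+1}$ is uniform on a set of $J$ values (marginally over $\pi$), so $H(\pi_{M+1}) = \log_2 J$. Fano gives
\begin{equation*}
\bbP(\hat\pi_{M+1}\neq\pi_{M+1}) \;\geq\; \frac{H(\pi_{M+1}\mid W,Z) - 1}{\log_2 J} \;=\; \frac{\log_2 J - I(\pi_{M+1}; W,Z) - 1}{\log_2 J}.
\end{equation*}
The remaining step is to control the joint mutual information $I(\pi_{M+1}; W, Z)$. By the chain rule, $I(\pi_{M+1}; W, Z) = I(\pi_{M+1}; W) + I(\pi_{M+1}; Z \mid W)$. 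Here one must be slightly careful: $Z$ and $W$ are not independent in general because both depend on the shared ordering $\pi$. However, conditioned on the full ordering $\pi$, $Z$ and $W$ are independent (they are fresh i.i.d. draws from $P_{\pi_{M+1}}$ and from $P_{\pi_1},\ldots,P_{\pi_M}$ respectively), and moreover $Z \mid \pi$ depends on $\pi$ only through $\pi_{M+1}$. I would use this conditional-independence structure to argue $I(\pi_{M+1}; Z \mid W) \leq I(\pi_{M+1}; Z)$ — intuitively, additionally conditioning on $W$ cannot increase how much $Z$ tells you about $\pi_{M+1}$ once you account for the Markov structure; concretely, since $W \to \pi \to (\pi_{M+1}, Z)$ and $\pi_{M+1}$ is a deterministic coordinate of $\pi$, one gets the Markov chain $W \to \pi_{M+1} \to Z$ fails in general, so instead I would bound $I(\pi_{M+1};Z\mid W)$ by noting $I(\pi_{M+1};Z\mid W) \le I(\pi_{M+1};Z\mid W) + I(\pi_{-(M+1)};Z\mid W,\pi_{M+1}) = I(\pi;Z\mid W) \le I(\pi;Z) $ — wait, this needs the direction $I(\pi;Z\mid W)\le I(\pi;Z)$, which holds when $W \to \pi \to Z$ is Markov (true, since given $\pi$, $Z$ is independent of $W$), and then $I(\pi;Z) = I(\pi_{M+1};Z)$ because $Z\mid\pi$ depends only on $\pi_{M+1}$. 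Combining, $I(\pi_{M+1};Z\mid W) \le I(\pi_{M+1};Z)$, hence $I(\pi_{M+1};W,Z) \le I(\pi_{M+1};W) + I(\pi_{M+1};Z)$, and substituting into the Fano bound yields exactly
\begin{equation*}
R^*_\calP \;\geq\; \monofn(\delta)\left(1 - \frac{I(\pi_{M+1};W) + I(\pi_{M+1};Z) + 1}{\log_2 J}\right).
\end{equation*}

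The main obstacle I anticipate is handling the dependence between $W$ and $Z$ correctly when invoking the chain rule — ensuring that the decomposition of $I(\pi_{M+1}; W, Z)$ into the stated two mutual-information terms is a genuine upper bound and not an equality that could go the wrong way. The clean route is to exploit that $Z \perp W \mid \pi$ and that $Z \mid \pi$ is a function of $\pi_{M+1}$ alone, turning the joint information into a sum; I would state this as a small lemma (data-processing plus the Markov chain $W \to \pi \to Z$) and verify it with care. Everything else — the estimation-to-testing reduction and Fano's inequality — is standard, and the monotonicity of $\monofn$ together with the $2\delta$-separation packs cleanly into the $\monofn(\delta)$ prefactor.
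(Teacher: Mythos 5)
Your proposal is correct and follows essentially the same route as the paper: reduce estimation to a $J$-way test for the index $\pi_{M+1}$ via the $2\delta$-separation, Markov's inequality and the triangle inequality, apply Fano, and then split $I(\pi_{M+1}; W, Z)$ into $I(\pi_{M+1};W)+I(\pi_{M+1};Z)$ using the conditional independence of $W$ and $Z$ given the ordering (the paper invokes Khas'minskii's mutual-information identity for this last step, while you run a chain-rule/data-processing argument through $\pi$; the two are equivalent). One small slip worth noting: the Markov chain $W \to \pi_{M+1} \to Z$ does in fact hold, since $p(Z \mid \pi, W) = p(Z \mid \pi_{M+1})$ gives $Z \perp W \mid \pi_{M+1}$, but your detour through the full ordering $\pi$ is valid regardless, so nothing breaks.
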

 
To derive this result, we bound the statistical estimation error by the error on a corresponding decoding problem where we must predict the novel task index, given the meta-training set $\envS$ and $\testS$. Fano's inequality provides best-case error probabilities for this problem.

Using Theorem~\ref{thm:env_lower_bound}, we derive our first bound on the novel-task minimax risk that depends on the number of training tasks and datapoints per training task. The following corollary implies that if $J$ of the previous meta-training tasks are close
(in terms of their pairwise KL distance), then learning a novel task from training samples drawn
from the meta-training tasks requires significantly more examples; in particular, learning the novel task from samples drawn from the
meta training set requires  $\Omega(J)$ times the sample complexity of the novel task. This matches our intuition that learning the novel task implies the ability to distinguish it from all $J$ well-separated meta-training tasks.
\begin{corollary}\label{corollary:env_lbound_packing}
Assume the same setting as in Theorem~\ref{thm:env_lower_bound} and additionally that $\KL{P_i}{P_j} \leq \alpha$ for all $P_i,P_j \in \calJ,\:i \neq j$. Then,
\begin{align*}
R^*_\calP \geq \monofn(\delta) \left(1 - \dfrac{1 + (\frac{Mn}{J-1}+k)\alpha}{\log_2 J}\right).
\end{align*}
\end{corollary}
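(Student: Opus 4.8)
The plan is to bound the two mutual‑information terms appearing in Theorem~\ref{thm:env_lower_bound} and substitute. Throughout I use the elementary consequence of the convexity of $D_{\mathrm{KL}}(p\Vert\,\cdot\,)$ that for a uniform source $U$ on $\{1,\dots,J\}$ one has $I(U;Y)\le \frac{1}{J^2}\sum_{i\ne i'}\KL{P_{Y\mid U=i}}{P_{Y\mid U=i'}}$, together with tensorization, $\KL{P^{\otimes m}}{Q^{\otimes m}}=m\,\KL{P}{Q}$. The $Z$ term is immediate: conditioned on $\pi_{M+1}=i$, the vector $Z$ is distributed as $P_i^{\otimes k}$, so $I(\pi_{M+1};Z)\le\frac{1}{J^2}\sum_{i\ne i'}k\,\KL{P_i}{P_{i'}}\le\frac{J-1}{J}k\alpha\le k\alpha$ by the hypothesis $\KL{P_i}{P_j}\le\alpha$.

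The crux is to show $I(\pi_{M+1};W)\le\frac{Mn}{J-1}\alpha$, which is sharper than the naive estimate $Mn\alpha$ precisely because $W$ depends on $\pi$ only through the coordinates $\pi_1,\dots,\pi_M$, which are constrained to differ from $\pi_{M+1}$. Conditioned on $\pi_{M+1}=i$, the matrix $W$ is a uniform mixture of the product laws $P_{W\mid\sigma}=\bigotimes_{j=1}^{M}P_{\sigma_j}^{\otimes n}$ over all ordered $M$‑tuples $\sigma$ of distinct labels avoiding $i$. For $i\ne i'$ I would write both conditionals in the common form $\lambda A+(1-\lambda)B_{(\cdot)}$, where $A$ is the uniform mixture over tuples avoiding \emph{both} $i$ and $i'$ (the same measure for $i$ and $i'$), $B_i$ (resp.\ $B_{i'}$) is the uniform mixture over tuples that avoid $i$ but contain $i'$ (resp.\ avoid $i'$ but contain $i$), and $\lambda=\frac{(J-2)!/(J-2-M)!}{(J-1)!/(J-1-M)!}=\frac{J-1-M}{J-1}$ is the fraction of avoid‑$i$ tuples that also avoid $i'$. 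Joint convexity of $D_{\mathrm{KL}}$ then kills the shared component, $\KL{\lambda A+(1-\lambda)B_i}{\lambda A+(1-\lambda)B_{i'}}\le(1-\lambda)\KL{B_i}{B_{i'}}=\frac{M}{J-1}\KL{B_i}{B_{i'}}$, and the label‑swap bijection (replace $i'$ by $i$ in the unique coordinate where it occurs) matches the mixture components of $B_i$ and $B_{i'}$ into pairs of product laws differing in exactly one column, so a further application of joint convexity gives $\KL{B_i}{B_{i'}}\le n\,\KL{P_{i'}}{P_i}\le n\alpha$. Hence each pairwise term is at most $\frac{Mn\alpha}{J-1}$, and summing yields $I(\pi_{M+1};W)\le\frac{J-1}{J}\cdot\frac{Mn\alpha}{J-1}=\frac{Mn\alpha}{J}\le\frac{Mn\alpha}{J-1}$. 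Substituting both estimates into Theorem~\ref{thm:env_lower_bound} gives the claim.

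The step needing care is the mixture decomposition: one must check that conditioning on $\pi_{M+1}=i$ genuinely induces the uniform law over avoid‑$i$ tuples (so the two conditionals really share the component $A$ with weight exactly $\lambda$), compute $\lambda$ correctly, and handle the degenerate case $M=J-1$, where no tuple avoids both $i$ and $i'$, so $A$ is vacuous, $\lambda=0$, and the bound correctly reduces to $n\alpha$. Everything else is the standard convexity toolkit for KL divergence and its behaviour under tensor products.
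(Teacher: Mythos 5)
Your proposal is correct and follows essentially the same route as the paper: apply Theorem~\ref{thm:env_lower_bound}, bound $I(\pi_{M+1};Z)$ by $k\alpha$ via the standard local packing argument, and bound $I(\pi_{M+1};W)$ by $\tfrac{Mn}{J-1}\alpha$ via convexity of the KL divergence over the mixture of orderings (the paper's Lemma~\ref{lemma:env_local_product_packing}, which reaches the same count by pairing orderings that differ in a single column). Your explicit decomposition of each conditional into a shared component $\lambda A$ plus a disjoint part handled by the label-swap bijection is a slightly cleaner packaging of the paper's direct counting, but the underlying argument is identical.
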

Typically, practical instances of this bound require $\monofn(\delta) = O(1/k)$ or similar, as in Theorem~\ref{thm:mlreg_lower_bound} below.

\paragraph{A tighter bound on partially observed environments} We now consider the special case of Theorem~\ref{thm:env_lower_bound} when $M < J-1$, meaning that the meta-training tasks cannot cover the full environment. In this setting, we prove that no algorithm can generalize perfectly to tasks in unseen regions of the space with small $k$, regardless of the number of data points $n$ observed in each meta-training task.

\begin{restatable}[]{corollary}{envlboundasymp}\label{thm:env_lower_bound_asymp}
Consider the setting given in Corollary~\ref{corollary:env_lbound_packing}, with $M+1 < J$. Then,
\begin{align*}
R^*_\calP \geq \monofn(\delta)\left(\dfrac{\log_2 (J - M) - k\alpha - 1}{\log_2 J}\right).
\end{align*}
\end{restatable}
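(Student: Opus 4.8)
The plan is to leverage Theorem~\ref{thm:env_lower_bound} directly, but with a more careful treatment of the mutual information $I(\pi_{M+1}; W)$ that exploits the fact that $\pi_{M+1}$ cannot be determined from $W$ alone when $M+1 < J$. The key observation is that $W$ contains samples from the tasks indexed by $\pi_1, \ldots, \pi_M$, and the novel index $\pi_{M+1}$ is one of the remaining $J - M$ indices; even with unbounded $n$, observing $W$ only reveals which $M$ tasks appear in the meta-training set, leaving $\pi_{M+1}$ uniform over the complementary set of size $J - M$. Concretely, I would bound $I(\pi_{M+1}; W) \leq H(\pi_{M+1}) - H(\pi_{M+1} \mid W)$ and argue that conditioned on any realization of $W$ (hence on the identity of the set $\{\pi_1,\ldots,\pi_M\}$), the index $\pi_{M+1}$ remains uniform on a set of cardinality $J - M$, so $H(\pi_{M+1}\mid W) \geq \log_2(J-M)$. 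Since $H(\pi_{M+1}) = \log_2 J$, this yields $I(\pi_{M+1}; W) \leq \log_2 J - \log_2(J - M)$.

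Next I would bound $I(\pi_{M+1}; Z)$ exactly as in the proof of Corollary~\ref{corollary:env_lbound_packing}: since $Z \mid \pi$ consists of $k$ i.i.d.\ samples from $P_{\pi_{M+1}}$ and all pairwise KL divergences among the $J$ distributions are at most $\alpha$, a standard mixture/convexity argument (or the usual ``$I(\text{index};\text{sample}) \le $ average pairwise KL'' bound via Jensen applied to $D_{\mathrm{KL}}$) gives $I(\pi_{M+1}; Z) \leq k\alpha$. Substituting both estimates into the conclusion of Theorem~\ref{thm:env_lower_bound}:
\begin{align*}
R^*_\calP &\geq \monofn(\delta)\left(1 - \frac{(\log_2 J - \log_2(J-M)) + k\alpha + 1}{\log_2 J}\right)\\
&= \monofn(\delta)\left(\frac{\log_2(J-M) - k\alpha - 1}{\log_2 J}\right),
\end{align*}
which is exactly the claimed bound.

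The step I expect to be the main obstacle is making the conditioning argument for $H(\pi_{M+1}\mid W)$ fully rigorous. Because $W$ is a random matrix of samples rather than a discrete label, I need to be careful: the cleanest route is to introduce the auxiliary random variable $B = \{\pi_1,\ldots,\pi_M\}$ (the unordered set of meta-training indices, equivalently the full ordering $\pi_{1:M}$ if one wants to be conservative), observe that $\pi_{M+1} \to B \to W$ need not hold but that $W$ is a function of $B$-indexed distributions only, and apply the data-processing-type reasoning via $I(\pi_{M+1}; W) \le I(\pi_{M+1}; \pi_{1:M})$. Then $I(\pi_{M+1};\pi_{1:M}) = H(\pi_{M+1}) - H(\pi_{M+1}\mid \pi_{1:M}) = \log_2 J - \log_2(J-M)$ since, given the first $M$ entries of a uniform random ordering, the $(M+1)$-st entry is uniform over the $J-M$ unused symbols. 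One subtlety is whether $\pi_{M+1}$ and $W$ are conditionally independent given $\pi_{1:M}$; they are, because $W\mid\pi$ depends on $\pi$ only through $\pi_{1:M}$, which validates $I(\pi_{M+1};W)\le I(\pi_{M+1};\pi_{1:M})$ by the data processing inequality applied to the Markov chain $\pi_{M+1} \leftarrow \pi_{1:M} \to W$ with $\pi_{M+1}$ and $W$ conditionally independent given $\pi_{1:M}$. Everything else is routine bookkeeping.
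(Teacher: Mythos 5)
Your proposal is correct and follows essentially the same route as the paper's proof: the key step in both is the data processing inequality applied to the Markov chain $\pi_{M+1} \rightarrow \pi_{1:M} \rightarrow W$, giving $I(\pi_{M+1};W) \leq I(\pi_{M+1};\pi_{1:M}) = \log_2 \frac{J}{J-M}$, combined with the standard local packing bound $I(\pi_{M+1};Z) \leq k\alpha$. The subtlety you flag about conditional independence of $\pi_{M+1}$ and $W$ given $\pi_{1:M}$ is resolved exactly as you describe, and the rest is the same bookkeeping the paper performs.
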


In this work, we have focused on the setting where $W$ contains an equal number of samples from each of the meta-training tasks --- this is the sampling scheme shown in Figure~2. However, it is possible to extend these results to different sampling schemes for $W$. For example, in the appendix we derive bounds with $W|\pi$ as a mixture distribution. Surprisingly, despite the task identity being hidden from the learner, the asymptotic rate for these two sampling schemes match.

\subsection{Measuring Task-Relatedness}

The use of local packing requires the design of an appropriate set of distributions whose corresponding parameters are $2\delta$-separated but maintain small KL divergences. In the multi-task setting such an assumption is intuitively reasonable: challenging tasks should require separated parameters for ideal explanations ($2\delta$-separated) but should satisfy some relatedness measure (small KL). As we will see shortly, lower bounds on minimax risk in the \iid setting may also assume the same notion of relatedness between the distributions in $\calP$.

Task relatedness is a necessary feature for upper-bounds on novel task generalization, but is typically difficult to define (see e.g. \citet{ben2008notion}). Our lower bounds utilize a relatively weak notion of task-relatedness, and thus may be overly pessimistic compared to the upper bounds computed in existing work. Note however that task relatedness of the form utilized here can be formulated in a representation space shared across tasks and thus can be applied in settings like those explored by e.g. \citet{du2020few}. Deriving lower bounds under other
task relatedness assumptions present in the literature would make for exciting future work.

\subsection{Comparison to risk of \iid learners}

From the statement of Theorem~\ref{thm:env_lower_bound} it is not clear how this lower-bound compares to that of the \iid learner which has access only to the $k$ samples from $\testS$.
To investigate the benefit of additional meta-training tasks, we compare our derived minimax risk lower bounds to those achieved by \iid learners. To do so, we revisit a standard result on minimax lower bounds that can be found in e.g. \citet{loh2017lower}. 

\begin{restatable}[IID minimax lower-bound]{theorem}{iidlbound}\label{thm:lower_bound}
Suppose $\{P_1,\dots,P_J\} \subseteq \calP$ satisfy $\metricfn(\theta_{P_i}, \theta_{P_j}) \geq 2\delta$ for all $i\neq j$. Additionally assume that $\KL{P_i}{P_j} \leq \alpha,$ for all pairs $i$ and $j$, then,
\begin{align*}
R^* \geq \monofn(\delta)\left(1 - \dfrac{k\alpha + 1}{\log_2 J}\right).
\end{align*}
\end{restatable}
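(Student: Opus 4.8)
Theorem~\ref{thm:lower_bound} is the classical Fano-based minimax lower bound; it can also be recovered from Corollary~\ref{corollary:env_lbound_packing} by taking $Mn=0$ (identify $\calJ$ with the packing $\{P_1,\dots,P_J\}$, so that $W$ is empty and $I(\pi_{M+1};W)=0$ and $R^*_\calP$ collapses to $R^*$), so one option is simply to invoke that corollary. For completeness I would give the direct three-step argument: reduce estimation to a multiple-hypothesis test, lower-bound the test error by Fano's inequality, and control the relevant mutual information via the KL assumption.

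\emph{Step 1: estimation to testing.} For any estimator $\estimator$, lower-bound the worst case by the average over the packing, $\sup_{P\in\calP}\bbE_{S\sim P^k}[\lossfn(\estimator(S),\theta_P)] \ge \frac1J\sum_{j=1}^J \bbE_{S\sim P_j^k}[\lossfn(\estimator(S),\theta_{P_j})]$. Letting $\pi$ be uniform on $\{1,\dots,J\}$ and $Z\mid\pi$ be $k$ i.i.d. samples from $P_\pi$, the right-hand side equals $\bbE\big[\monofn(\metricfn(\estimator(Z),\theta_{P_\pi}))\big]$. Since $\monofn$ is nonnegative and non-decreasing, Markov's inequality gives $\bbE\big[\monofn(\metricfn(\estimator(Z),\theta_{P_\pi}))\big] \ge \monofn(\delta)\,\bbP\big(\metricfn(\estimator(Z),\theta_{P_\pi})\ge\delta\big)$. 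Define the induced test $\hat\Psi(Z)=\argmin_{j}\metricfn(\estimator(Z),\theta_{P_j})$; by the triangle inequality and the $2\delta$-separation of the $\theta_{P_j}$, the event $\metricfn(\estimator(Z),\theta_{P_\pi})<\delta$ forces $\hat\Psi(Z)=\pi$, so $\bbP(\metricfn(\estimator(Z),\theta_{P_\pi})\ge\delta)\ge\bbP(\hat\Psi(Z)\ne\pi)$. Taking the infimum over $\estimator$ reduces the problem to lower-bounding $\inf_{\hat\Psi}\bbP(\hat\Psi(Z)\ne\pi)$.

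\emph{Steps 2--3: Fano and the mutual information bound.} Fano's inequality (in bits, using $H(\pi)=\log_2 J$) gives $\bbP(\hat\Psi(Z)\ne\pi)\ge 1 - \frac{I(\pi;Z)+1}{\log_2 J}$. To bound $I(\pi;Z)$, write $\bar P=\frac1J\sum_i P_i^{\otimes k}$, so $I(\pi;Z)=\frac1J\sum_j\KL{P_j^{\otimes k}}{\bar P}\le\frac1{J^2}\sum_{i,j}\KL{P_j^{\otimes k}}{P_i^{\otimes k}}$ by convexity of $\KL{\cdot}{\cdot}$ in its second argument, and then tensorize, $\KL{P_j^{\otimes k}}{P_i^{\otimes k}}=k\,\KL{P_j}{P_i}\le k\alpha$. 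Hence $I(\pi;Z)\le k\alpha$, and chaining the three steps yields $R^*\ge\monofn(\delta)\big(1-\frac{k\alpha+1}{\log_2 J}\big)$.

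\emph{Main obstacle.} There is no substantive difficulty here — the argument is entirely routine. The only point needing care is the test-construction step, where one verifies that a $\delta$-accurate estimate of $\theta_{P_\pi}$ pins down $\pi$ uniquely; this is exactly where $2\delta$- (rather than $\delta$-) separation of the packing is used, together with the non-decreasing property of $\monofn$ to pass from loss to error probability. Carrying this out for an arbitrary metric $\metricfn$ and monotone $\monofn$ adds only bookkeeping.
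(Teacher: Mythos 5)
Your proposal is correct and follows essentially the same route as the paper's proof: reduce the supremum to an average over the packing, apply Markov's inequality and the triangle-inequality test construction, invoke Fano, and bound $I(\pi;Z)$ by $k\alpha$. The only cosmetic difference is that you derive the mutual-information bound inline (convexity of the KL in its second argument plus tensorization), whereas the paper cites this as a standard local packing lemma.
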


We include a standard proof of this result in Appendix~\ref{app:proofs:lbounds}. As hoped, Corollary~\ref{corollary:env_lbound_packing} recovers Theorem~\ref{thm:lower_bound} when there are no training tasks available. Moreover, this \iid bound is strictly larger than the one computed in Corollary~\ref{corollary:env_lbound_packing} in general. Note that while this \iid minimax risk is asymptotically tight for several learning problems \citep{loh2017lower, raskutti2011minimax}, there is no immediate guarantee that the same is true for our meta-learning minimax bounds. We investigate the quality of these bounds by providing comparable upper bounds in the next section.

\section{Analysis of a hierarchical Bayesian model of meta-learning}
\label{sec:hierarchical_bayes}

Our goal is to analyze the sample complexity of learning in a 
simple yet popular and practical model, where samples are drawn from multiple meta-training tasks and we want to generalize to a new task with only a few data points. After introducing the model, we will compute lower-bounds on the minimax risk using our results from Section~\ref{sec:lbounds}, revealing a $2^{d}$ scaling on the meta-training sample complexity. Following the lower bound, we derive an accompanying upper-bound on the risk of a MAP estimator. Asymptotic analysis of this bound reveals that if the observed samples from the novel task vary considerably more than the task parameters, then observing more meta-training samples may significantly improve convergence in the small $k$ regime.

\begin{minipage}[t]{0.48\linewidth}
For $i = 1...M+1$, where $M+1$ is the total number of tasks, we define,
\begin{align*}
\by_i      &= X_i\lregparams_i + \beps_i, &&  X_i \in \bbR^{n_i \times d}, \by_i \in \bbR^{n_i}, \beps_i \in \bbR^{n_i} \\
\beps_i &\sim \mathcal{N}(0, \sigma^2_i I), && \sigma^2_i \in \bbR^{+} \\
\lregparams_i &= \btau + \bxi, && \btau \in \bbR^{d}, \bxi \in \bbR^{d} \\
\bxi &\sim \mathcal{N}(0, \sigma^2_\theta I), && \sigma^2_\theta \in \bbR^{+}
\end{align*}
Each task has some design matrix $X_i$ and unknown parameters $\lregparams_i$. For simplicity, we assume known isotropic noise models and that $n_i = \nX$ for all $i\leq M$, with $n_{M+1}=\nY$.
\end{minipage}\hfill%
\begin{minipage}[t]{0.48\linewidth}
\centering
    \strut\vspace*{-\baselineskip}\newline
    \begin{tikzpicture}[
    roundnode/.style={circle, draw=gray, very thick, minimum size=8mm},
    observednode/.style={circle, draw=gray, fill=gray!25, very thick, inner sep=2pt},
    ]
    \node[roundnode]        (tau)                              {$\tau$};
    \node[roundnode]        (thetax)       [below left=0.8cm and 1.2cm of tau] {$\btheta_1$};
    \node[roundnode]        (thetay)       [below right=0.8cm and 1.2cm of tau] {$\btheta_2$};
    \node[observednode]     (x1)       [below left=0.8cm and 0.5cm of thetax] {$\by^{(1)}_1$};
    \node[observednode]     (xn)       [below right=0.8cm and 0.5cm of thetax] {$\by^{(n)}_1$};
    \node[observednode]     (y1)       [below left=0.8cm and 0.5cm of thetay] {$\by^{(1)}_2$};
    \node[observednode]     (yk)       [below right=0.8cm and 0.5cm of thetay] {$\by^{(k)}_2$};
     
    \draw[->] (tau) -- (thetax);
    \draw[->] (tau) -- (thetay);
    \draw[->] (thetax) -- (x1.north);
    \path (x1) -- node[auto=false]{\ldots} (xn);
    \draw[->] (thetax) -- (xn.north);
    \draw[->] (thetay) -- (y1.north);
    \path (y1) -- node[auto=false]{\ldots} (yk);
    \draw[->] (thetay) -- (yk.north);
    
    \end{tikzpicture}
    
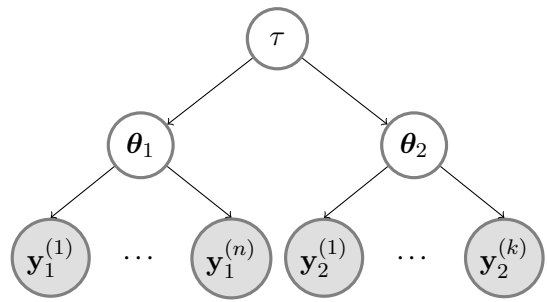
\captionof{figure}{A simple two-task hierarchical model.}
    \label{fig:hierachical_gaussian_model}
\end{minipage}

\noindent We will consider the Maximum a Posterior estimator,
\[\bestimatorNovel = \argmax_{\thetaY}p(\thetaY|\by_1,\ldots,\by_{M+1}),\]
and will characterize its risk, $\bbE[\norm{\bestimatorNovel~-~\thetaY}_2^2]$.

We can derive the posterior distribution under the Empirical Bayes estimate for $\btau$, which we defer to Appendix~\ref{app:proofs:ubounds}. The derivation is standard but dense and we recommend dedicated readers to consult \citet{gelman2013bayesian}, or an equivalent text, for more details.

\subsection{Minimax lower bounds}

We now compute lower bounds for parameter estimation with meta-learning over multiple linear regression tasks.
Beginning with a definition of the space of data generating distributions,
\[\calP_{LR} = \{ p_{\lregparams}(\by) = \calN(X\lregparams, \sigma^2 I): \lregparams \in \bball_2(1), X \in \bbR^{n \times d} \}, \]

where $\lregparams$ are the parameters to be learned, and $X$ is the design matrix of each linear regression task in the environment.
We write $\gamma = \max_{i} \sigma_{\max}(X_i/\sqrt{n})$, which we assume is bounded for all $X$ and $n$ (an assumption that is validated for random Gaussian matrices by \citet{raskutti2011minimax}).

\begin{restatable}[Meta linear regression lower bound]{theorem}{mlreglbound}\label{thm:mlreg_lower_bound}
Consider $\calP_{LR}$ defined as above and let $\lossfn(a,b) = (\norm{a-b}_2)^2$. If $d > 2$, then,
\[R^*_{\calP_{LR}} \geq O\left(\frac{d\sigma^2}{\gamma^2 (2^{-d}\nX M + \nY)}\right) \]
\end{restatable}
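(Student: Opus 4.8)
The plan is to apply the general lower bound of Corollary~\ref{corollary:env_lbound_packing} with the metric $\metricfn(a,b) = \norm{a-b}_2$ and $\monofn(t) = t^2$, after constructing a suitable local packing of $\calP_{LR}$. First I would build a collection of $J$ linear regression models indexed by parameters $\{\lregparams^{(1)},\ldots,\lregparams^{(J)}\} \subset \bball_2(1)$ that are simultaneously $2\delta$-separated in Euclidean distance and have pairwise KL divergences bounded by some $\alpha$. For Gaussian models with shared design matrix $X$ and noise $\sigma^2 I$, we have $\KL{p_{\lregparams}}{p_{\lregparams'}} = \tfrac{1}{2\sigma^2}\norm{X(\lregparams-\lregparams')}_2^2 \le \tfrac{\gamma^2 n}{2\sigma^2}\norm{\lregparams-\lregparams'}_2^2$, so it suffices to pack a Euclidean ball of radius $r$ in $\bbR^d$ with points that are pairwise $\ge 2\delta$ apart; a standard Varshamov–Gilbert / volumetric argument gives $J = 2^{\Omega(d)}$ such points with $2\delta = \Theta(r)$ and hence $\alpha = O(\gamma^2 n r^2/\sigma^2)$. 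Here $n$ in the KL bound should be $\nX$ for the meta-training tasks and $\nY$ for the novel task, which is exactly the asymmetry that Corollary~\ref{corollary:env_lbound_packing} is built to exploit.

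Next I would substitute into Corollary~\ref{corollary:env_lbound_packing}: writing $\alpha_{\text{train}} = c\gamma^2\nX r^2/\sigma^2$ and $\alpha_{\text{novel}} = c\gamma^2\nY r^2/\sigma^2$, the relevant term in the bound is $(\tfrac{\nX M}{J-1} + \nY)\alpha$-type, so the effective sample-count multiplying $\gamma^2 r^2/\sigma^2$ is proportional to $2^{-d}\nX M + \nY$ once we use $J = 2^{\Theta(d)}$. To keep the parenthesized factor $\bigl(1 - \tfrac{1 + (\tfrac{Mn}{J-1}+k)\alpha}{\log_2 J}\bigr)$ bounded below by a constant (say $1/2$), I would choose the packing radius $r$ — equivalently $\delta$ — as large as allowed, i.e. $\delta^2 = \Theta\!\left(\tfrac{d\sigma^2}{\gamma^2(2^{-d}\nX M + \nY)}\right)$, since $\log_2 J = \Theta(d)$. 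Then $R^*_{\calP_{LR}} \ge \tfrac12 \monofn(\delta) = \tfrac12 \delta^2$, which is exactly the claimed rate $O\!\left(\tfrac{d\sigma^2}{\gamma^2(2^{-d}\nX M + \nY)}\right)$. One must also check $\lregparams^{(j)} \in \bball_2(1)$, i.e. that the chosen $r$ is $\le 1$; in the regime where the bound is interesting ($d>2$ and the denominator large) this holds, and otherwise the bound is vacuous anyway.

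The main obstacle I anticipate is the packing construction and, relatedly, tracking the constants so the $2^{-d}$ factor comes out cleanly rather than as $2^{-cd}$ for some unspecified $c$ (which would still give the same $O(\cdot)$ form). Concretely, I need a packing of the unit ball with $J \ge 2^{c_1 d}$ points that are pairwise $\ge 2\delta$-separated with $\delta = c_2$ (a constant) times the covering radius — the tension is that $\delta$ and $J$ are linked, and the bound degrades if either the separation is too small (hurting $\monofn(\delta)$) or $J$ is too small (hurting $\log_2 J$ in the denominator). I expect the Gilbert–Varshamov bound over a rescaled hypercube, or a direct volume argument ($J \ge (r/(2\delta'))^d$ for covering radius considerations), to give what is needed. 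A secondary, minor technical point is that Theorem~\ref{thm:env_lower_bound} as stated uses a single design matrix per task; here I would either fix one common $X$ across all tasks in the packing (legitimate since $\calP_{LR}$ ranges over all $X$) or note that the KL bound only uses $\sigma_{\max}(X/\sqrt n) \le \gamma$ uniformly, so heterogeneous designs are handled by the same estimate with $\gamma$ in place of each $\sigma_{\max}$.
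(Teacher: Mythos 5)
Your proposal is correct and follows essentially the same route as the paper: a volumetric $2\delta$-packing of the unit ball rescaled to give $J\geq 2^d$ points, a pairwise KL bound of order $\gamma^2 n\delta^2/\sigma^2$ via $\sigma_{\max}(X/\sqrt{n})\leq\gamma$, substitution into Corollary~\ref{corollary:env_lbound_packing} with $\log_2 J\geq d$, and the choice $\delta^2=\Theta\bigl(d\sigma^2/(\gamma^2(2^{-d}\nX M+\nY))\bigr)$ to keep the Fano factor bounded below by a constant. The packing-constant worry you raise is resolved in the paper exactly as you anticipate, by taking a radius-$1/2$ packing of the unit ball (so $2^d\leq J\leq 5^d$) and rescaling by $4\delta$, which yields the clean $2^{-d}$ factor.
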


The proof is given in Appendix~\ref{app:proofs:linear_lower}. We see that the size of the meta-training set has an inverse exponential scaling in the dimension, $d$. This reflects the complexity of the space growing exponentially in dimensions and the need for a matching growth in data size to cover the environment sufficiently.

\subsection{Minimax upper bounds}

To compute upper bounds on the estimation error, we require an additional assumption. Namely, we will assume that the design matrices also have bounded minimum singular values, $0 < s \leq \sigma_{\min}(X/\sqrt{n})$ (see \citet{raskutti2011minimax} for some justification). For the upper-bounds, we allow the bounds on the singular values of the design matrices and the observation noise in the novel task to be different than those in the meta-training tasks. We note that we can still recover the setting assumed in the lower bounds, where all tasks match on these parameters, as a special case.

We consider the setting where the learner observes $\nX$ data points from each linear regression model in $\{ P_{\lregparams_1}, \ldots, P_{\lregparams_M} \} \subset \calP$. We then bound the error of estimating the parameters of some new model, $P_{\lregparams_{M+1}}$, of which $\nY$ samples are available.

The expected error rate of the MAP estimator can be decomposed as the posterior variance and bias squared. In the appendix we provide a detailed derivation of these results. The bound depends on dimensionality $d$, the observation noise in each task $\sigma^2_i$, the number of tasks $M$, the number of data points in each meta-training task $n$, and the number of data points in the novel task $k$.

\begin{restatable}[Meta Linear Regression Upper Bound]{theorem}{mlrbv}\label{thm:lregression_bias_variance}
Let $\thetaEst$ be the maximum-a-posteriori estimator, $\muPosTheta$. Then,
\[ R^*_{\calP_{LR}} \leq \sup_{\btheta_1,\ldots,\thetaY \in \bball_2(1)}\bbE[\Vert\thetaEst - \thetaY\Vert^2] \leq O\Bigl(\dim \sigma^2_{M+1} C(M, \nX, \nY)^{-2} D(M, \nX, \nY)\Bigr) \]
where,
\[C(M, \nX, \nY) = \left[ \nY + 
\frac{M\nX}{\frac{\nX (M +\condX^2)\sminY^2}{\alphaY} + A} \right], \textrm{ and,  }D(M, \nX, \nY) = \left[ \nY + \frac{M\nX}{ 
(\frac{\nX}{\LLConst} + \MMConst ) 
(\frac{M\nX}{\LLLConst} + \MMMConst)}\right].\]
Expectations are taken over the data conditioned on $\lregparams_1,\ldots,\lregparams_{M+1}$. Additional terms not depending on $d,\:M,\:\nX,\:\nY$ are defined in Appendix~\ref{app:proofs:ubounds}.
\end{restatable}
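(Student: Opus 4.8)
The plan is to explicitly compute the posterior of $\thetaY$ in the hierarchical Gaussian model using Empirical Bayes for $\btau$, then bound the bias and variance of the resulting MAP estimate $\muPosTheta$ separately. First I would marginalize out $\bxi_i$ in each task, so that $\by_i \mid \btau \sim \calN(X_i\btau,\; \XPXTi{i} + \CXs I)$ for $i \le M$; this gives a Gaussian likelihood for $\btau$ from the meta-training data and hence a Gaussian posterior $\btau \mid \sourceData$ with precision $\PrecPosXTauExpanded$ and mean $\muPosXTauExpanded$. I then plug the Empirical Bayes point estimate $\hat\btau = \muPosXTau$ in as a prior mean for $\thetaY$, so that $\thetaY \mid \hat\btau \sim \calN(\hat\btau, \CTheta)$, and combine with the novel-task likelihood $\novelData \mid \thetaY \sim \calN(\DY\thetaY, \CY)$ to obtain $\CovPosTheta = (\XTCYX + \smaxP^{-1} I)^{-1}$ and $\muPosTheta = \CovPosTheta(\XTCY\novelData + \smaxP^{-1}\hat\btau)$.

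Next, the error decomposition. Writing $\bbE[\norm{\thetaEst - \thetaY}^2] = \norm{\bbE[\thetaEst] - \thetaY}^2 + \tr(\Covop(\thetaEst))$ (conditioning throughout on $\lregparams_1,\ldots,\lregparams_{M+1}$), I would handle the variance term by noting it is a sum of two independent contributions: the novel-task noise $\beps_{M+1}$ passed through $\CovPosTheta\,\XTCY$, and the randomness in $\hat\btau$ passed through $\smaxP^{-1}\CovPosTheta$. Each is a trace of the form $\tr(\CovPosTheta\, M\, \CovPosTheta)$ for a PSD matrix $M$, which I bound using the singular-value assumptions $s \le \sigma_{\min}(X_i/\sqrt{n}) \le \sigma_{\max}(X_i/\sqrt n) \le \gamma$: the eigenvalues of $\XTCYX$ lie in $[\nY s_2^2/\alphaY,\ \nY\gamma_2^2/\alphaY]$-type intervals, and the variance of $\hat\btau$ scales like the inverse of $\sum_i X_i^\top(\cdots)^{-1}X_i$, whose eigenvalues are controlled by $M\nX$ over the same kind of interval. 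Collecting these, the variance is $O(d\,\smaxCY)$ times the ratio $D(M,\nX,\nY)/C(M,\nX,\nY)^2$ after identifying $C$ with (a lower bound on) the posterior precision of $\thetaY$ and $D$ with (an upper bound on) the relevant "effective noise" term. For the bias, I would observe that $\bbE[\hat\btau] = \btau$ exactly (the EB posterior mean is unbiased for $\btau$ under the marginal likelihood), so the only bias in $\thetaEst$ comes from shrinkage toward $\btau$ rather than $\thetaY$; the bias vector is $\smaxP^{-1}\CovPosTheta(\btau - \thetaY)$, whose norm is at most $\smaxP^{-1}\sigma_{\max}(\CovPosTheta)\,\norm{\btau - \thetaY} \le \smaxP^{-1}\sigma_{\max}(\CovPosTheta)\cdot 2$ since both parameters lie in $\bball_2(1)$, and $\sigma_{\max}(\CovPosTheta) \le C(M,\nX,\nY)^{-1}$; so the squared bias is absorbed into the same $C^{-2}$ order term (it contributes at the scale of a single coordinate, dominated by the $d$-dimensional variance).

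The main obstacle I expect is bookkeeping the constants $A,\ \MMConst,\ \MMMConst,\ \LLConst,\ \LLLConst$ so that the two-sided singular-value bounds compose correctly: the EB step nests one matrix inverse inside another (the $\hat\btau$ precision appears inside $\CovPosTheta$ and also inside the variance of $\hat\btau$), and each application of $\sigma_{\min}$/$\sigma_{\max}$ inequalities to a sum like $\XPXTi{i} + \CXs I$ loosens the bound by an additive $\sigma_i^2$ or a multiplicative condition-number factor $\condX$. Keeping these from collapsing the $M\nX$ term — i.e. making sure the denominators in $C$ and $D$ are genuinely $\Theta(M\nX)$ rather than $\Theta(M)$ in the large-$\nX$ regime — is the delicate part, and is exactly what forces the particular algebraic form of $C$ and $D$ in the statement. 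Once the per-task bounds are pinned down, the final step is to take the supremum over $\btheta_1,\ldots,\thetaY \in \bball_2(1)$, which only affects the (lower-order) bias term, and to collect everything into the claimed $O(d\,\smaxCY\, C^{-2} D)$ bound. Full details, including the explicit values of the auxiliary constants, are deferred to Appendix~\ref{app:proofs:ubounds}.
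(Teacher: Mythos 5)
Your overall strategy matches the paper's: a bias--variance decomposition of $\bbE[\norm{\thetaEst-\thetaY}^2]$, a split of the variance into the contribution of the novel-task observation noise and the contribution of the randomness in the estimate of $\btau$, singular-value bookkeeping via the two-sided bounds on $\sigma_{\min}(X_i/\sqrt{\nX})$ and $\sigma_{\max}(X_i/\sqrt{\nX})$, and a bias term controlled by shrinkage together with the assumption that all parameters lie in $\bball_2(1)$. There is, however, one concrete discrepancy that would prevent your plan, as written, from reproducing the stated bound. You form the posterior of $\thetaY$ by plugging in the point estimate $\hat\btau=\muPosXTau$ with prior covariance $\CTheta$, giving $\CovPosTheta=(\XTCYX+\smaxP^{-1}I)^{-1}$. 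The estimator analyzed in the theorem instead propagates the posterior uncertainty in $\btau$: its prior covariance is $\CovPosXThetaTau=\CTheta+\CovPosXTau$, so that $\CovPosTheta^{-1}=\XTCYX+(\CTheta+\CovPosXTau)^{-1}$. This is not cosmetic: the summand $M\nX/(\frac{\nX(M+\condX^2)\sminY^2}{\alphaY}+A)$ in $C(M,\nX,\nY)$ arises precisely because $\smax(\CovPosXTau)$ is controlled by $1/(M\nX)$-type quantities sitting inside that prior covariance; with your plug-in posterior the prior precision is the constant $\smaxP^{-1}$ and the corresponding summand of $C$ degenerates to a quantity with no dependence on $M$ or $\nX$. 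You would then be bounding the risk of a (slightly) different estimator and would obtain a different $C$. The term $\CovPosXTau$ must be carried through the entire singular-value computation --- this is also where the auxiliary condition numbers $\condtC$ and $\condtau$ enter the constants.

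A second, smaller point: under the conditioning stated in the theorem (expectations taken over the data given $\btheta_1,\ldots,\thetaY$), it is not true that $\bbE[\hat\btau]=\btau$. Rather, $\bbE\,\muPosXTau$ is a precision-weighted average of $\btheta_1,\ldots,\btheta_M$; the bias of $\thetaEst$ is $(F+G)^{-1}G(\bbE\,\muPosXTau-\thetaY)$ with $F=\XTCYX$ and $G=\CovPosXThetaTau^{-1}$, and one bounds $\norm{\bbE\,\muPosXTau-\thetaY}=O(1)$ using only the unit-ball assumption. Since you also invoke the unit ball at this point, your bias bound survives in order, but the intermediate unbiasedness claim should be dropped.
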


While the bounds presented in Theorem~\ref{thm:lregression_bias_variance} are relatively complicated, we can probe the asymptotic convergence of the MAP estimator to the true task parameters, $\thetaY$. In the following section, we will discuss some of the consequences of this result and its implications for our lower bounds.

\subsection{Asymptotic behavior of the MAP estimator} 

We first notice that when $k$ 
(the number of novel task examples) is small, the risk cannot be reduced to zero by adding more meta-training data. Recent work has suggested such a relationship may be inevitable \citep{hanneke2020no}. Our lower bound presented in Corollary~\ref{thm:env_lower_bound_asymp} agrees that more samples from a small number of meta-training tasks will not reduce the error to zero. However, 
unlike our lower bounds
based on local packing, the lower bounds presented in this section predict that if the meta-training tasks cover the space sufficiently then an optimal algorithm might hope to reduce the error entirely with enough samples. We hypothesize that this gap is due to limitations in the standard proof techniques we utilize for the lower-bounds when the number of tasks grows, and expect a sharper bound may be possible.

To emulate the few-shot learning setting where $k$ is relatively small, we consider $n \rightarrow \infty$, with $k$ and $M$ fixed. In this case, the risk is bounded as,
\[\sup_{\theta_1,\ldots,\thetaY \in \bball_2(1)}\bbE[\Vert\thetaEst - \thetaY\Vert^2] \leq O\left(\dim \sigma^2_{M+1}\left[k + \frac{2\alpha_2 M}{M + \condX^2} \right]^{-1}\right),\]
where $\alpha_2 = \smaxCY / \smaxP$, is the ratio of the observation noise to the variance in sampling $\btheta$, and $\condX$ is the condition number of the design matrices. This leads to a key takeaway: if the observed samples from $P_{M+1}$ vary considerably more than the parameters $\btheta$, then observing more samples in $\envS$ will significantly improve convergence towards the true parameters in the small $k$ regime. Further, adding more tasks (increasing $M$) also improves these constant factors by removing the dependence on the condition number, $\kappa$.

\section{Empirical Investigations}
\label{sec:empirical}

In this section, we provide additional quantitative exploration of the upper bound studied in Section~\ref{sec:hierarchical_bayes}. 
The aim is to take steps towards relating the bounds to experimental results;
we know of little theoretical work in meta-learning that attempt to relate their results to
practical empirical datasets.

\subsection{Hierarchical Bayes polynomial regression}

We first focus on the setting of polynomial regression over inputs in the range $[-1,1]$. Some examples of these functions and samples are presented in Figure~\ref{fig:meta_lin_ref}, alongside the MAP and MLE estimates for the novel task. Full details of the data used can be found in Appendix~\ref{app:exp_details}.


Figure~\ref{fig:hierarchical_lreg_simulation} shows the analytical expected error rate (risk) under various environment settings. We observe that even in this simple hierarchical model, the estimator exhibits complex behavior that is correctly predicted by Theorem~\ref{thm:lregression_bias_variance}. In Figure~\ref{fig:hierarchical_lreg_simulation}A, we varied the novel task difficulty by increasing the novel task observation noise ($\sigma^2_{M+1}$). We plot three curves for three different dataset size configurations. When the novel task is much noisier than the source tasks, it is greatly beneficial to add more meta-training data (blue vs. red). And while larger $k$ made little difference when the novel task was relatively difficult (blue vs. green), the expected loss was orders of magnitude lower when the novel task became easier. In Figure~\ref{fig:hierarchical_lreg_simulation}B, we fixed the relative task difficulty and instead varied $k$ and $M$. The x-axis now indicates the total data $Mn + k$ available to the learner. We observed that adding more tasks has a large effect in the low-data regime but, as predicted, the error has a non-zero asymptotic lower-bound --- eventually it is more beneficial to add more novel-task data samples.

These empirical simulations verify that our theoretical analysis is predictive of the behavior of this meta learning algorithm, as each of these observations can be inferred from Theorem~\ref{thm:lregression_bias_variance}. While this model is simple, it captures key features of and provides insight into the more general meta-learning problem.

\begin{figure}
\centering
\begin{minipage}{0.5\linewidth}%
    \centering%
    \includegraphics[width=\linewidth]{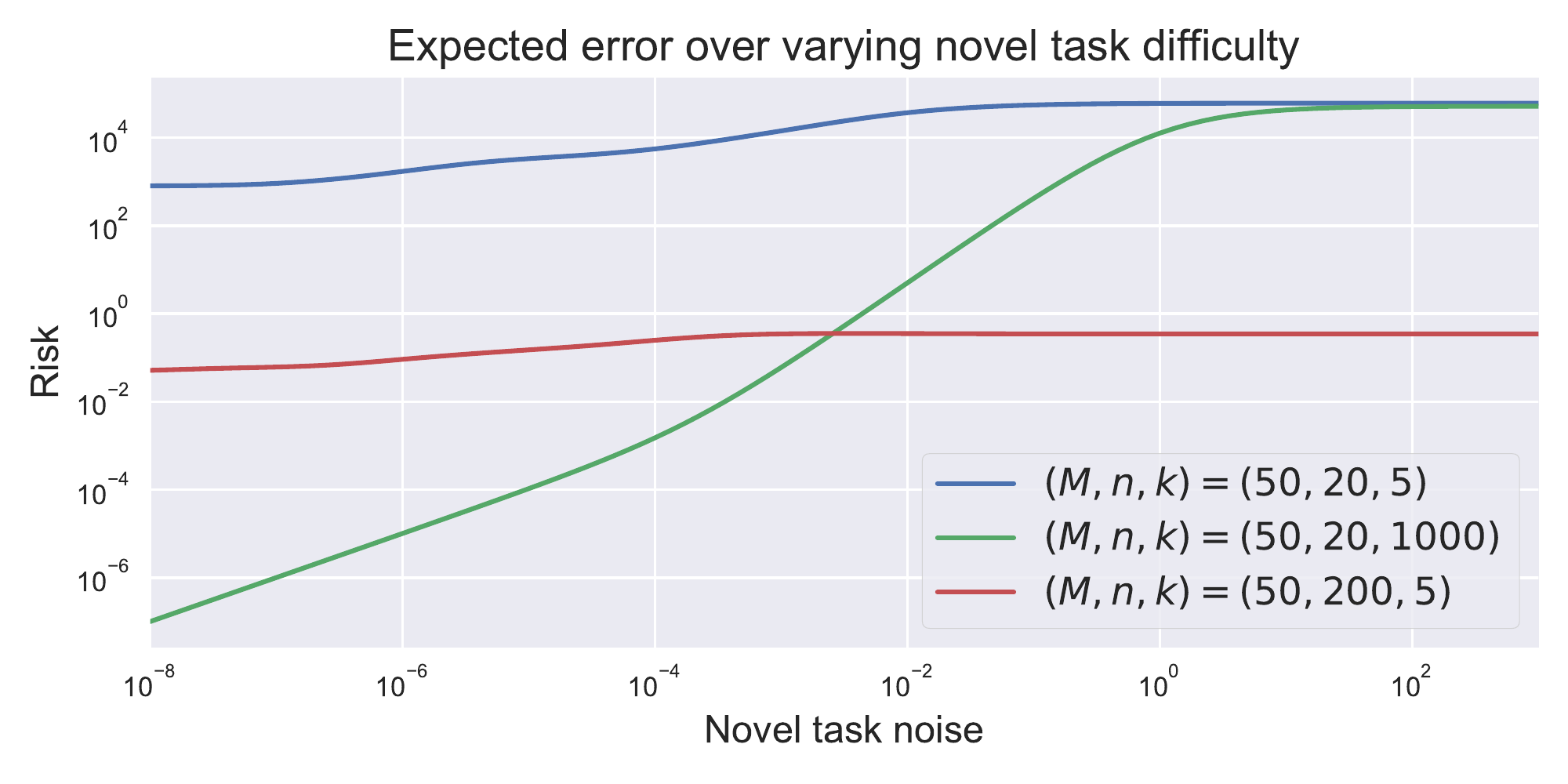}
    (A)
\end{minipage}%
\begin{minipage}{0.5\linewidth}%
    \centering%
    \includegraphics[width=\linewidth]{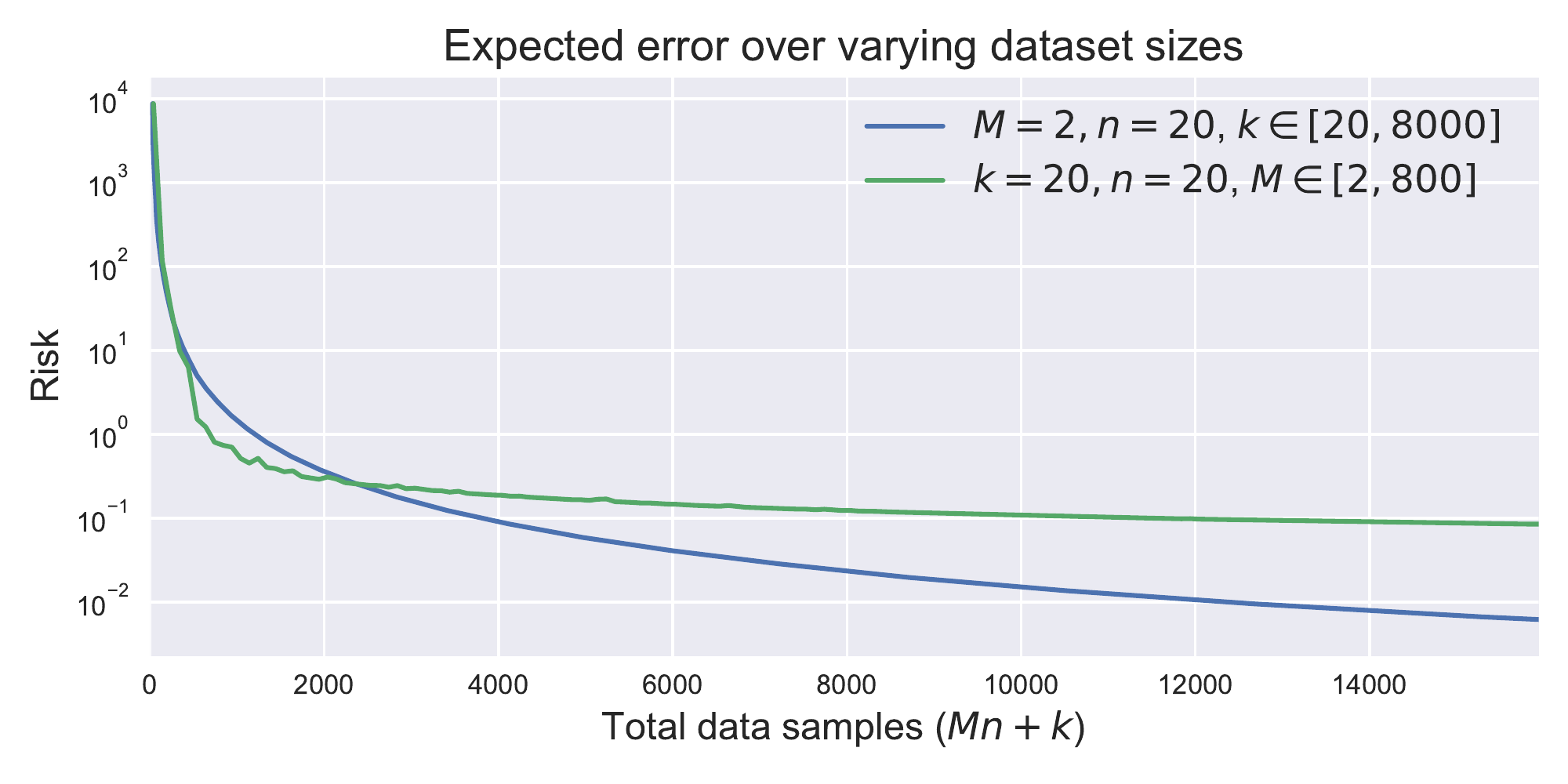}
    (B)
\end{minipage}
\caption{The expected error rate of the hierarchical MAP estimator, $\thetaEst$, over different environment
hyperparameter settings. \textbf{A)} The novel task observation noise is increased, making the novel task harder to learn. \textbf{B)} We increase the size of the dataset, in one case adding new tasks ($M$) and in the other adding new novel task data samples ($k$).}
\label{fig:hierarchical_lreg_simulation}
\end{figure}

\subsection{Sinusoid regression with MAML}

Following the connections between MAML and hierarchical Bayes explored by \citet{grant2018recasting}, we also explored regression on sinusoids using MAML. Our aim was to investigate how predictive our linear theory is for this highly non-linear problem setting. As in \citet{finn2017model}, we sample sinusoid functions by placing a prior over the amplitude and phase. In other works \citep{finn2017model, grant2018recasting} the same prior is used for the training and testing stages. However, to better measure generalization to novel tasks we use different prior distributions when training versus evaluating the model. Full details of the experimental set-up and data sampling procedure can be found in Appendix~\ref{app:exp_details}.

We display the risk averaged over 30 trials in Figure~\ref{fig:maml_sinusoid}. We varied the novel task difficulty by increasing the observation noise in the novel task.
We plot separate curves for different dataset size configurations, and observe that the empirical results align fairly well with the results derived by sampling the hierarchical model (Figure \ref{fig:hierarchical_lreg_simulation}A). Adding more meta-training data (increasing $n$) is beneficial (green vs. yellow)
and adding more test data-points (higher $k$) is also beneficial (red vs. green).  Here however, these relationships did not interact with the task difficulty, as the wins for increased meta-training and meta-testing data were consistent, until task noise prevents any setting of the model from performing the task.

\begin{wrapfigure}{r}{0.48\textwidth}
  \begin{center}
    \includegraphics[width=0.48\textwidth]{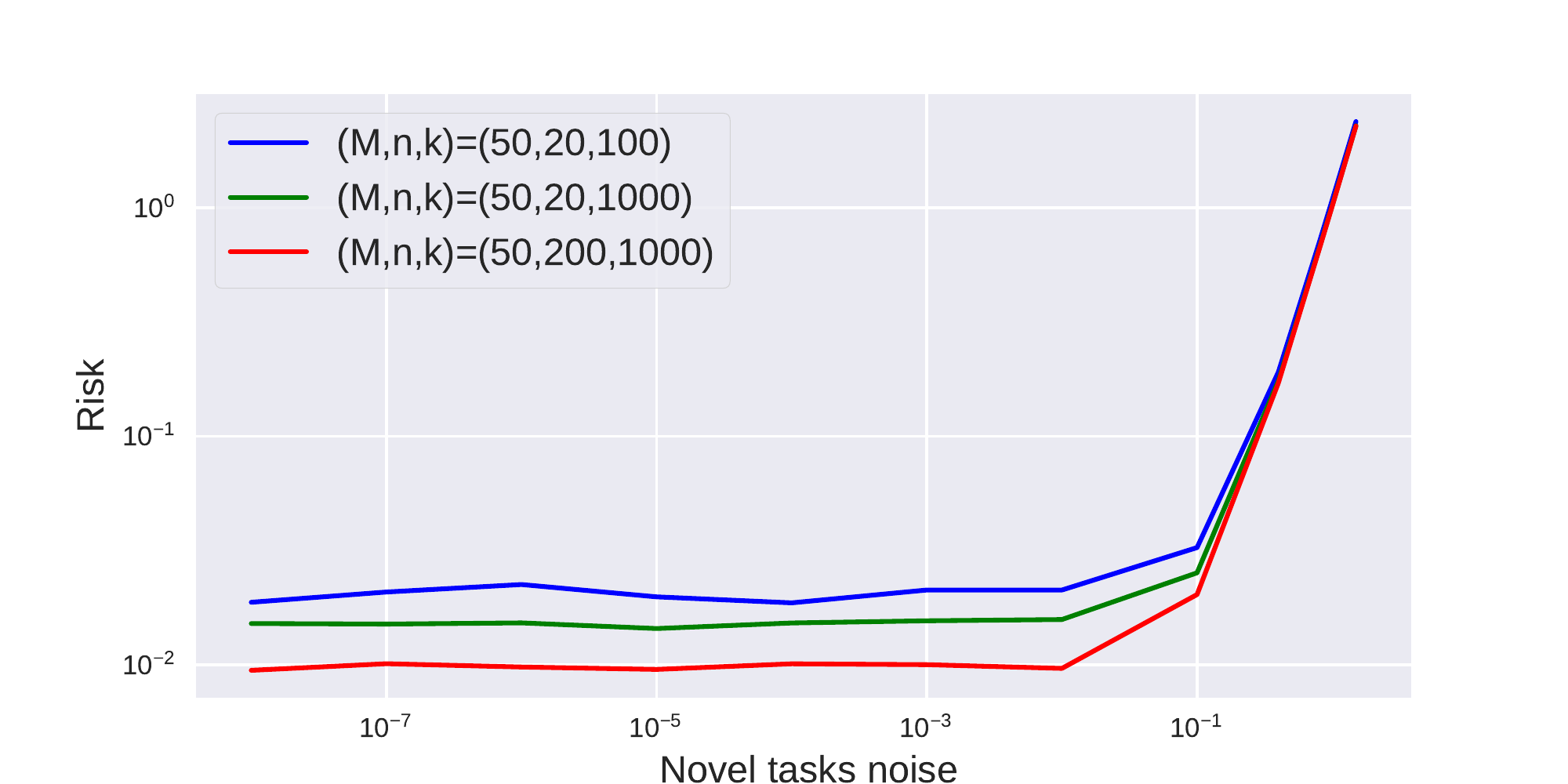}
  \end{center}
  \caption{Average risk for regressing sinusoid functions with MAML.}
  \label{fig:maml_sinusoid}
\end{wrapfigure}

\section{Conclusion}
\vspace{-0.1in}
Meta-learning algorithms identify the inductive bias from source tasks and make models more adaptive towards unseen novel distribution.
In this paper, we take initial steps towards characterizing the difficulty of meta-learning and understanding how these limitations present in practice. We have derived both lower bounds and upper bounds on the error of meta-learners, which are particularly relevant in the few-shot learning setting where $k$ is small. Our bounds capture key features of the meta-learning problem, such as the effect of increasing the number of shots or training tasks.
We have also identified a gap between our lower and upper bounds when there are a large number of training tasks, which we hypothesize is a limitation of the proof technique that we applied to derive the lower bounds --- suggesting an exciting direction for future research.

\section{Acknowledgements}

This work benefited greatly from the input of many other researchers. In particular, we extend our thanks to Shai Ben-David, Karolina Dziugaite, Samory Kpotufe, and Daniel Roy for discussions and feedback on the results presented in this work. We also thank Elliot Creager, Will Grathwohl, Mufan Li, and many of our other colleagues at the Vector Institute for feedback that greatly improved the presentation of this work. Resources used in preparing this research were provided, in part, by the Province of Ontario, the Government of Canada through CIFAR, and companies sponsoring the Vector Institute (\url{www.vectorinstitute.ai/partners}).

\bibliography{references}

\begin{thebibliography}{38}
\providecommand{\natexlab}[1]{#1}
\providecommand{\url}[1]{\texttt{#1}}
\expandafter\ifx\csname urlstyle\endcsname\relax
  \providecommand{\doi}[1]{doi: #1}\else
  \providecommand{\doi}{doi: \begingroup \urlstyle{rm}\Url}\fi

\bibitem[Amit and Meir(2017)]{amit2017meta}
Ron Amit and Ron Meir.
\newblock Meta-learning by adjusting priors based on extended {PAC}-bayes
  theory.
\newblock \emph{arXiv preprint arXiv:1711.01244}, 2017.

\bibitem[Andrychowicz et~al.(2016)Andrychowicz, Denil, Colmenarejo, Hoffman,
  Pfau, Schaul, and de~Freitas]{l2l}
Marcin Andrychowicz, Misha Denil, Sergio~Gomez Colmenarejo, Matthew~W. Hoffman,
  David Pfau, Tom Schaul, and Nando de~Freitas.
\newblock Learning to learn by gradient descent by gradient descent.
\newblock In \emph{Advances in Neural Information Processing Systems 29}, pages
  3981--3989, 2016.

\bibitem[Baxter(2000)]{baxter2000model}
Jonathan Baxter.
\newblock A model of inductive bias learning.
\newblock \emph{Journal of artificial intelligence research}, 12:\penalty0
  149--198, 2000.

\bibitem[Ben-David and Borbely(2008)]{ben2008notion}
Shai Ben-David and Reba~Schuller Borbely.
\newblock A notion of task relatedness yielding provable multiple-task learning
  guarantees.
\newblock \emph{Machine learning}, 73\penalty0 (3):\penalty0 273--287, 2008.

\bibitem[Ben-David et~al.(2010)Ben-David, Blitzer, Crammer, Kulesza, Pereira,
  and Vaughan]{ben2010theory}
Shai Ben-David, John Blitzer, Koby Crammer, Alex Kulesza, Fernando Pereira, and
  Jennifer~Wortman Vaughan.
\newblock A theory of learning from different domains.
\newblock \emph{Machine learning}, 79\penalty0 (1-2):\penalty0 151--175, 2010.

\bibitem[Bullins et~al.(2019)Bullins, Hazan, Kalai, and
  Livni]{pmlr-v98-bullins19a}
Brian Bullins, Elad Hazan, Adam Kalai, and Roi Livni.
\newblock Generalize across tasks: Efficient algorithms for linear
  representation learning.
\newblock In Aur\'elien Garivier and Satyen Kale, editors, \emph{Proceedings of
  the 30th International Conference on Algorithmic Learning Theory}, volume~98
  of \emph{Proceedings of Machine Learning Research}, pages 235--246, Chicago,
  Illinois, 22--24 Mar 2019. PMLR.
\newblock URL \url{http://proceedings.mlr.press/v98/bullins19a.html}.

\bibitem[Cao et~al.(2019)Cao, Law, and Fidler]{cao2019theoretical}
Tianshi Cao, Marc Law, and Sanja Fidler.
\newblock A theoretical analysis of the number of shots in few-shot learning.
\newblock \emph{arXiv preprint arXiv:1909.11722}, 2019.

\bibitem[Cover and Thomas(2012)]{cover2012elements}
Thomas~M Cover and Joy~A Thomas.
\newblock \emph{Elements of information theory}.
\newblock John Wiley \& Sons, 2012.

\bibitem[Denevi et~al.(2019)Denevi, Ciliberto, Grazzi, and
  Pontil]{pmlr-v97-denevi19a}
Giulia Denevi, Carlo Ciliberto, Riccardo Grazzi, and Massimiliano Pontil.
\newblock Learning-to-learn stochastic gradient descent with biased
  regularization.
\newblock In Kamalika Chaudhuri and Ruslan Salakhutdinov, editors,
  \emph{Proceedings of the 36th International Conference on Machine Learning},
  volume~97 of \emph{Proceedings of Machine Learning Research}, pages
  1566--1575, Long Beach, California, USA, 09--15 Jun 2019. PMLR.

\bibitem[Du et~al.(2020)Du, Hu, Kakade, Lee, and Lei]{du2020few}
Simon~S Du, Wei Hu, Sham~M Kakade, Jason~D Lee, and Qi~Lei.
\newblock Few-shot learning via learning the representation, provably.
\newblock \emph{arXiv preprint arXiv:2002.09434}, 2020.

\bibitem[Duan et~al.(2016)Duan, Schulman, Chen, Bartlett, Sutskever, and
  Abbeel]{duan2016rl2}
Yan Duan, John Schulman, Xi~Chen, Peter~L. Bartlett, Ilya Sutskever, and Pieter
  Abbeel.
\newblock Rl{\textdollar}{\^{}}2{\textdollar}: Fast reinforcement learning via
  slow reinforcement learning.
\newblock \emph{CoRR}, abs/1611.02779, 2016.

\bibitem[Fano(1961)]{fano1961transmission}
Robert Fano.
\newblock Transmission of information.
\newblock \emph{A Statistical Theory of Communication}, 1961.

\bibitem[Finn et~al.(2017)Finn, Abbeel, and Levine]{finn2017model}
Chelsea Finn, Pieter Abbeel, and Sergey Levine.
\newblock Model-agnostic meta-learning for fast adaptation of deep networks.
\newblock In \emph{Proceedings of the 34th International Conference on Machine
  Learning-Volume 70}, pages 1126--1135. JMLR. org, 2017.

\bibitem[Gelman et~al.(2013)Gelman, Carlin, Stern, Dunson, Vehtari, and
  Rubin]{gelman2013bayesian}
Andrew Gelman, John~B Carlin, Hal~S Stern, David~B Dunson, Aki Vehtari, and
  Donald~B Rubin.
\newblock \emph{Bayesian data analysis}.
\newblock Chapman and Hall/CRC, 2013.

\bibitem[Grant et~al.(2018)Grant, Finn, Levine, Darrell, and
  Griffiths]{grant2018recasting}
Erin Grant, Chelsea Finn, Sergey Levine, Trevor Darrell, and Thomas Griffiths.
\newblock Recasting gradient-based meta-learning as hierarchical {B}ayes.
\newblock \emph{arXiv preprint arXiv:1801.08930}, 2018.

\bibitem[Hanneke and Kpotufe(2019)]{NIPS2019_9179}
Steve Hanneke and Samory Kpotufe.
\newblock On the value of target data in transfer learning.
\newblock In \emph{Advances in Neural Information Processing Systems 32}, pages
  9871--9881. Curran Associates, Inc., 2019.

\bibitem[Hanneke and Kpotufe(2020)]{hanneke2020no}
Steve Hanneke and Samory Kpotufe.
\newblock A no-free-lunch theorem for multitask learning.
\newblock \emph{arXiv preprint arXiv:2006.15785}, 2020.

\bibitem[Jin et~al.(2009)Jin, Wang, and Zhou]{jin2009regularized}
Rong Jin, Shijun Wang, and Yang Zhou.
\newblock Regularized distance metric learning:theory and algorithm.
\newblock In \emph{Advances in Neural Information Processing Systems 22}, pages
  862--870, 2009.

\bibitem[Khas’~minskii(1979)]{khas1979lower}
Rafail~Z Khas’~minskii.
\newblock A lower bound on the risks of non-parametric estimates of densities
  in the uniform metric.
\newblock \emph{Theory of Probability \& Its Applications}, 23\penalty0
  (4):\penalty0 794--798, 1979.

\bibitem[Khodak et~al.(2019)Khodak, Balcan, and Talwalkar]{khodak2019provable}
Mikhail Khodak, Maria-Florina Balcan, and Ameet Talwalkar.
\newblock Provable guarantees for gradient-based meta-learning.
\newblock \emph{arXiv preprint arXiv:1902.10644}, 2019.

\bibitem[Kpotufe and Martinet(2018)]{pmlr-v75-kpotufe18a}
Samory Kpotufe and Guillaume Martinet.
\newblock Marginal singularity, and the benefits of labels in covariate-shift.
\newblock In S\'ebastien Bubeck, Vianney Perchet, and Philippe Rigollet,
  editors, \emph{Proceedings of the 31st Conference On Learning Theory},
  volume~75 of \emph{Proceedings of Machine Learning Research}, pages
  1882--1886. PMLR, 06--09 Jul 2018.

\bibitem[Loh(2017)]{loh2017lower}
Po-Ling Loh.
\newblock On lower bounds for statistical learning theory.
\newblock \emph{Entropy}, 19\penalty0 (11):\penalty0 617, 2017.

\bibitem[MacKay et~al.(2019)MacKay, Vicol, Lorraine, Duvenaud, and Grosse]{stn}
Matthew MacKay, Paul Vicol, Jonathan Lorraine, David Duvenaud, and Roger~B.
  Grosse.
\newblock Self-tuning networks: Bilevel optimization of hyperparameters using
  structured best-response functions.
\newblock In \emph{7th International Conference on Learning Representations},
  2019.

\bibitem[Maurer(2009)]{maurer2009transfer}
Andreas Maurer.
\newblock Transfer bounds for linear feature learning.
\newblock \emph{Machine learning}, 75\penalty0 (3):\penalty0 327--350, 2009.

\bibitem[Metz et~al.(2019)Metz, Maheswaranathan, Cheung, and
  Sohl{-}Dickstein]{metaunsup}
Luke Metz, Niru Maheswaranathan, Brian Cheung, and Jascha Sohl{-}Dickstein.
\newblock Meta-learning update rules for unsupervised representation learning.
\newblock In \emph{7th International Conference on Learning Representations},
  2019.

\bibitem[Mohri and Medina(2012)]{mohri2012new}
Mehryar Mohri and Andres~Munoz Medina.
\newblock New analysis and algorithm for learning with drifting distributions.
\newblock In \emph{International Conference on Algorithmic Learning Theory},
  pages 124--138. Springer, 2012.

\bibitem[Pentina and Lampert(2014)]{pentina2014pac}
Anastasia Pentina and Christoph Lampert.
\newblock A {PAC}-bayesian bound for lifelong learning.
\newblock In \emph{International Conference on Machine Learning}, pages
  991--999, 2014.

\bibitem[Raskutti et~al.(2011)Raskutti, Wainwright, and
  Yu]{raskutti2011minimax}
Garvesh Raskutti, Martin~J Wainwright, and Bin Yu.
\newblock Minimax rates of estimation for high-dimensional linear regression
  over $\ell\_q $-balls.
\newblock \emph{IEEE transactions on information theory}, 57\penalty0
  (10):\penalty0 6976--6994, 2011.

\bibitem[Rasmussen and Nickisch(2010)]{gpml}
Carl~Edward Rasmussen and Hannes Nickisch.
\newblock Gaussian processes for machine learning {(GPML)} toolbox.
\newblock \emph{J. Mach. Learn. Res.}, 11:\penalty0 3011--3015, 2010.

\bibitem[Ravi and Larochelle(2016)]{ravi2016optimization}
Sachin Ravi and Hugo Larochelle.
\newblock Optimization as a model for few-shot learning.
\newblock \emph{International Conference on Learning Representations}, 2016.

\bibitem[Robbins(1956)]{robbins1956}
Herbert Robbins.
\newblock An empirical bayes approach to statistics.
\newblock In \emph{Proceedings of the Third Berkeley Symposium on Mathematical
  Statistics and Probability, Volume 1: Contributions to the Theory of
  Statistics}, pages 157--163, Berkeley, Calif., 1956. University of California
  Press.

\bibitem[Saunshi et~al.(2019)Saunshi, Plevrakis, Arora, Khodak, and
  Khandeparkar]{saunshi2019theoretical}
Nikunj Saunshi, Orestis Plevrakis, Sanjeev Arora, Mikhail Khodak, and
  Hrishikesh Khandeparkar.
\newblock A theoretical analysis of contrastive unsupervised representation
  learning.
\newblock In \emph{International Conference on Machine Learning}, pages
  5628--5637, 2019.

\bibitem[Snell et~al.(2017)Snell, Swersky, and Zemel]{snell2017prototypical}
Jake Snell, Kevin Swersky, and Richard Zemel.
\newblock Prototypical networks for few-shot learning.
\newblock In \emph{Advances in Neural Information Processing Systems}, pages
  4077--4087, 2017.

\bibitem[Vilalta and Drissi(2002)]{vilalta2002perspective}
Ricardo Vilalta and Youssef Drissi.
\newblock A perspective view and survey of meta-learning.
\newblock \emph{Artificial intelligence review}, 18\penalty0 (2):\penalty0
  77--95, 2002.

\bibitem[Vinyals et~al.(2016)Vinyals, Blundell, Lillicrap, Wierstra,
  et~al.]{vinyals2016matching}
Oriol Vinyals, Charles Blundell, Timothy Lillicrap, Daan Wierstra, et~al.
\newblock Matching networks for one shot learning.
\newblock In \emph{Advances in neural information processing systems}, pages
  3630--3638, 2016.

\bibitem[Von~Neumann(1937)]{von1937some}
John Von~Neumann.
\newblock \emph{Some matrix-inequalities and metrization of matric space}.
\newblock 1937.

\bibitem[Wang et~al.(2019)Wang, Zhang, Liu, Shen, and
  Pineau]{wang2019multitask}
Boyu Wang, Hejia Zhang, Peng Liu, Zebang Shen, and Joelle Pineau.
\newblock Multitask metric learning: Theory and algorithm.
\newblock In Kamalika Chaudhuri and Masashi Sugiyama, editors,
  \emph{Proceedings of Machine Learning Research}, volume~89 of
  \emph{Proceedings of Machine Learning Research}, pages 3362--3371. PMLR,
  16--18 Apr 2019.

\bibitem[Yang and Barron(1999)]{yang1999information}
Yuhong Yang and Andrew Barron.
\newblock Information-theoretic determination of minimax rates of convergence.
\newblock \emph{Annals of Statistics}, pages 1564--1599, 1999.

\end{thebibliography}

\appendix
\onecolumn
\appendix
\section{Notation}
\label{app:notation}

\begin{table}[h]
    \centering
    \noindent\setlength\tabcolsep{4pt}
    \begin{tabular}{c|l}
         &  \textbf{Description} \\\hline
        $\spaceX$ & The domain of the data, e.g. $\bbR^d$ \\
        $\spaceY$ & The range of the data, e.g. $\bbR$ \\
        $\spaceZ$ & The product space $\spaceX \times \spaceY$ \\
        $\calP$ & A collection of distributions over $\spaceZ$ \\
        $\calJ$ & A (finite) subset of distributions in $\calP$ \\
        $P$ & An element of $\calP$\\
        $P^k$ & The product distribution, whose samples correspond to $k$ independent draws from $P$\\
        $P_{1:M}$ & The product distribution, $\Pi_{i=1}^{M} P_i$, for $P_i \in \calP$\\
        $\bar{P}_{1:M}$ & The mixture distribution, $\frac{1}{M}\sum_{i=1}^M P_i$, for $P_i \in \calP$\\
        $\Omega$ & A metric space, containing parameters for each distribution\\
        $\theta$ & A functional, mapping distributions in $\calP$ to parameters in $\Omega$ \\
        $\estimator$ & An estimator $\estimator: \spaceZ^n \rightarrow \calF$\\ 
        $S, \testS$ & Denotes training datasets drawn \iid from some $P \in \calP$.\\&Typically $S = \{z_1,\ldots,z_n\}$,$\testS = \{z'_1,\ldots,z'_k\}$\\
        $S_\calP$ & Denotes a meta-training set drawn \iid from $P_{1:M}$\\
        $[N]$, for $N \in \bbN$ & Indicates the set $\{1,\ldots,N\}$\\
        $\bball_p(r)$ & The $p$-norm ball of radius $r$, centered at $0$.
    \end{tabular}
    \caption{Summary of notation used in this manuscript}
    \label{tab:notation}
\end{table}

\section{Lower Bound Proofs}
\label{app:proofs}

We will make use of several standard results below, which we present here.

\begin{lemma}\textbf{Fano's Inequality \citep{fano1961transmission, cover2012elements} }\label{lemma:fano}
For any estimator $\hat{Y}$ of a random variable $Y$ such that $Y \rightarrow Z \rightarrow \hat{Y}$ forms a Markov chain, it holds that,
\[
\bbP(\hat{Y} \neq Y) \geq \dfrac{H(Y|Z) - 1}{\log_2 |Y|} = \dfrac{H(Y) - I(Y;Z) - 1}{\log_2 |Y|}.
\]
\end{lemma}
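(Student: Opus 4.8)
The plan is to run the standard Fano argument: introduce the error indicator, compute one joint entropy in two different ways, and close with the data-processing inequality. Let $E := \mathbf{1}[\hat Y \neq Y]$ be the $\{0,1\}$-valued error variable and write $p_e := \bbP(\hat Y \neq Y) = \bbP(E = 1)$. First I would compute $H(E, Y \mid \hat Y)$ two ways. Since $E$ is a deterministic function of the pair $(Y,\hat Y)$, we have $H(E \mid Y, \hat Y) = 0$, so the chain rule gives $H(E, Y \mid \hat Y) = H(Y \mid \hat Y) + H(E \mid Y, \hat Y) = H(Y \mid \hat Y)$. Expanding in the other order, $H(E, Y \mid \hat Y) = H(E \mid \hat Y) + H(Y \mid E, \hat Y) \leq H(E) + H(Y \mid E, \hat Y) \leq 1 + H(Y \mid E, \hat Y)$, where $H(E) \leq 1$ because $E$ is binary (all entropies in bits).

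Next I would bound $H(Y \mid E, \hat Y)$ by conditioning on the value of $E$: $H(Y \mid E, \hat Y) = (1 - p_e)\,H(Y \mid \hat Y, E = 0) + p_e\,H(Y \mid \hat Y, E = 1)$. On the event $E = 0$ we have $Y = \hat Y$, so that term vanishes; on the event $E = 1$, $Y$ takes at most $|Y|$ values (indeed at most $|Y| - 1$ when $\hat Y$ lies in the support of $Y$), so $H(Y \mid \hat Y, E = 1) \leq \log_2 |Y|$. Combining the two evaluations of $H(E, Y \mid \hat Y)$ yields $H(Y \mid \hat Y) \leq 1 + p_e \log_2 |Y|$, i.e.,
\[ \bbP(\hat Y \neq Y) = p_e \;\geq\; \frac{H(Y \mid \hat Y) - 1}{\log_2 |Y|}. \]

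Finally I would upgrade the conditioning from $\hat Y$ to $Z$ using the Markov hypothesis $Y \to Z \to \hat Y$. By the data-processing inequality $I(Y; \hat Y) \leq I(Y; Z)$, and since $H(Y \mid \cdot) = H(Y) - I(Y; \cdot)$, this gives $H(Y \mid \hat Y) \geq H(Y) - I(Y; Z) = H(Y \mid Z)$. Substituting into the previous display produces
\[ \bbP(\hat Y \neq Y) \;\geq\; \frac{H(Y \mid Z) - 1}{\log_2 |Y|} \;=\; \frac{H(Y) - I(Y; Z) - 1}{\log_2 |Y|}, \]
which is the claim. There is no genuine obstacle here, as this is the textbook derivation; the only steps demanding a little care are the observation that $E$ contributes no extra entropy once we condition on $(Y, \hat Y)$ (used to equate $H(E, Y \mid \hat Y)$ with $H(Y \mid \hat Y)$) and the data-processing step that replaces $\hat Y$ by $Z$ in the conditioning.
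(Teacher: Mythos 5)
Your derivation is the standard Cover--Thomas proof of Fano's inequality (error-indicator variable, two expansions of $H(E, Y \mid \hat Y)$, then the data-processing inequality to pass from $\hat Y$ to $Z$), and it is correct as written. The paper states this lemma as a cited classical result without proof, so there is no alternative argument to compare against; your version fills that gap faithfully and matches the exact form of the bound used later (with the $\log_2 |Y|$ denominator rather than the sharper $\log_2(|Y|-1)$).
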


\begin{lemma}\textbf{Mutual information equality \citep{khas1979lower}}\label{lemma:mi_ineq}
Consider random variables $Z_1, Z_2, Y$, then,
\[I(Y; (Z_1,Z_2)) + I(Z_1;Z_2) = I(Z_1; (Z_2, Y)) + I(Z_2; Y)\]
\end{lemma}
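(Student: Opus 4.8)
The plan is to obtain the identity purely from the chain rule for mutual information, $I(X;(A,B)) = I(X;A) + I(X;B|A)$, combined with the symmetry of conditional mutual information; no manipulation of entropies is strictly needed. First I would expand the right-hand side by applying the chain rule to $I(Z_1;(Z_2,Y))$ with $Z_2$ placed first,
\[
I(Z_1;(Z_2,Y)) = I(Z_1;Z_2) + I(Z_1;Y|Z_2),
\]
so that, using $I(Z_2;Y)=I(Y;Z_2)$,
\[
I(Z_1;(Z_2,Y)) + I(Z_2;Y) = I(Z_1;Z_2) + I(Z_1;Y|Z_2) + I(Y;Z_2).
\]

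Next I would expand the left-hand side by applying the chain rule to $I(Y;(Z_1,Z_2))$, again placing $Z_2$ first,
\[
I(Y;(Z_1,Z_2)) = I(Y;Z_2) + I(Y;Z_1|Z_2),
\]
hence
\[
I(Y;(Z_1,Z_2)) + I(Z_1;Z_2) = I(Y;Z_2) + I(Y;Z_1|Z_2) + I(Z_1;Z_2).
\]
Comparing the two displays, the claimed equality reduces to $I(Y;Z_1|Z_2) = I(Z_1;Y|Z_2)$, i.e. the symmetry of conditional mutual information (each side equals $H(Z_1|Z_2) - H(Z_1|Z_2,Y)$), which finishes the argument.

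There is no genuine obstacle here: the only thing to watch is the bookkeeping, namely being consistent about which of $Z_1,Z_2$ is conditioned on first when the chain rule is invoked, and noting that the chain rule and conditional-MI symmetry hold for general (not necessarily discrete) random variables, which is what is needed since in the applications of this lemma $Z$ and $W$ are continuous Gaussian samples. As a fallback I would keep a direct entropy computation in reserve: writing each of the four terms via $I(A;B) = H(A) + H(B) - H(A,B)$ and its conditional analogue, both sides collapse to $H(Y)+H(Z_1)+H(Z_2)-H(Y,Z_1,Z_2)$ after cancellation, using the entropy chain rule $H(Z_1,Z_2)+H(Y|Z_1,Z_2)=H(Y,Z_1,Z_2)$; that version additionally needs the relevant entropies to be finite, which is the reason I would present the chain-rule argument as the main proof.
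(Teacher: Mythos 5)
Your proof is correct. Note that the paper itself does not prove this lemma at all: it is stated as one of several ``standard results'' and attributed to \citet{khas1979lower}, so there is no in-paper argument to compare against. Your chain-rule derivation --- expanding $I(Z_1;(Z_2,Y))$ and $I(Y;(Z_1,Z_2))$ each with $Z_2$ first and reducing the identity to the symmetry $I(Y;Z_1\vert Z_2)=I(Z_1;Y\vert Z_2)$ --- is the standard way to establish it, and your remark that the chain-rule version is preferable to the entropy-cancellation version (which would require finite differential entropies) is apt, since the lemma is applied in the paper to continuous Gaussian samples.
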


\begin{lemma}\textbf{Local packing lemma \citep{loh2017lower}}\label{lemma:local_packing}
Consider distributions $P_1,\ldots,P_J \in \calP$. Let $Y$ be a random variable distributed uniformly on $[J]$ and let $Z|\{Y=j\}$ be a vector of $k$ \iid samples from $P_{j}$. Then,
\[
I(Y;Z) \leq \frac{k}{J^2} \sum_{1 \leq i,j \leq J} \KL{P_i}{P_j}.
\]
\end{lemma}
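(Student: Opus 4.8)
The plan is to reduce the statement to two standard facts about relative entropy: the joint convexity of $\KL{\cdot}{\cdot}$, and its tensorization over product measures. Everything else is bookkeeping.

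First I would put $I(Y;Z)$ into its ``distance-to-the-mixture'' form. Since $Y$ is uniform on $[J]$ and, conditioned on $Y=j$, the variable $Z$ has law $P_j^k$ (the product over the $k$ \iid samples), the marginal law of $Z$ is the mixture $\bar Q := \frac1J\sum_{i=1}^J P_i^k$. The elementary identity $I(Y;Z) = \bbE_Y\big[\KL{P_{Z\mid Y}}{P_Z}\big]$ --- itself obtained by writing $I(Y;Z)=\KL{P_{Y,Z}}{P_Y\otimes P_Z}$ and integrating out $Y$ --- then gives $I(Y;Z) = \frac1J\sum_{j=1}^J \KL{P_j^k}{\bar Q}$.

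Next I would bound each summand using convexity of $q\mapsto\KL{p}{q}$ (a consequence of the joint convexity of KL divergence, or directly of the convexity of $-\log$): for every $j$, $\KL{P_j^k}{\bar Q}=\KL{P_j^k}{\tfrac1J\sum_{i}P_i^k}\le\tfrac1J\sum_{i=1}^J\KL{P_j^k}{P_i^k}$. Averaging over $j$ yields $I(Y;Z)\le\frac1{J^2}\sum_{i,j}\KL{P_j^k}{P_i^k}$. Finally, tensorization (the chain rule for relative entropy applied to product distributions) gives $\KL{P_j^k}{P_i^k}=k\,\KL{P_j}{P_i}$; substituting this and relabelling the symmetric double sum produces exactly $I(Y;Z)\le\frac{k}{J^2}\sum_{1\le i,j\le J}\KL{P_i}{P_j}$.

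There is no real obstacle here --- the argument is a standard ingredient in minimax lower-bound proofs (cf. \citet{loh2017lower}); the only points requiring care are invoking the convexity bound in the \emph{second} argument of the KL divergence (the one carrying the mixture), and checking that the conditional/marginal identity for mutual information is applied to the right channel. Both steps hold for arbitrary ($\sigma$-finite) distributions, so nothing about the abstract space $\calP$ is used beyond what makes the relevant divergences well defined.
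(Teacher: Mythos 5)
Your proof is correct and is exactly the standard argument the paper relies on (and reuses in its own Lemmas~\ref{lemma:env_local_product_packing} and~\ref{lemma:env_local_mixture_packing}): write $I(Y;Z)$ as the average KL divergence from each conditional $P_j^k$ to the mixture marginal, apply convexity of the KL divergence in its second argument, and tensorize $\KL{P_j^k}{P_i^k}=k\,\KL{P_j}{P_i}$. The paper itself only cites this lemma from \citet{loh2017lower} without reproving it, so there is nothing further to compare.
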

We will require a novel local packing bound for the novel-task risk, which we present in Lemma~\ref{lemma:env_local_mixture_packing}.

\subsection{IID Lower Bound}\label{app:proofs:lbounds}

We first prove the \iid result, which will serve as a guide for our novel lower bounds.

\iidlbound*

\begin{proof}
First, notice that,
\[
\sup_{P \in \calP}\iidexploss \geq \dfrac{1}{J}\sum_{i=1}^J \bbE_{S\sim P_i^k}\left[ \loss{}{S}{P_i} \right].
\]
Now define the decision rule,
\[f(S) = \argmin_{1\leq j \leq J}\metricfn(\estimator(S), \theta_{P_j})],\]
with ties broken arbitrarily. We proceed by bounding the expected loss. First, using Markov's inequality,
\begin{align*}
\bbE_{S\sim P_i^k}\left[ \loss{}{S}{P_i} \right] &\geq \monofn(\delta)\bbP_{S\sim P^k_i}\left[\monofn(\metricfn(\estimator(S), \theta_{P_i})) \geq \monofn(\delta) \right], \\
&= \monofn(\delta)\bbP_{S\sim P^k_i}\left[\metricfn(\estimator(S), \theta_{P_i}) \geq \delta \right].
\end{align*}
Next, consider the case $\metricfn(\estimator(S), \theta_{i})) < \delta$. Through the triangle inequality,
\begin{align*}
    \metricfn(\estimator(S), \theta_{P_j}) &\geq \metricfn(\theta_{P_i}, \theta_{P_j}) - \metricfn(\estimator(S), \theta_{P_i}) \\
    &\geq 2\delta - \delta > \metricfn(\estimator(S), \theta_{P_i})
\end{align*}
Thus, the probability that the distance is less than $\delta$ is at least as large as the probability that the estimator is correct.
\[
\monofn(\delta)\bbP_{S\sim P^k_i}\left[ \lossfn(\estimator(S), \theta_{P_i}) \geq \monofn(\delta) \right] \geq \monofn(\delta) \bbP(f(S) \neq i). 
\]
Now, using Fano's inequality with $Y = \pi_{M+1}$, and $\hat{Y} = f(S)$ (and the corresponding Markov chain\\$\pi_{M+1} \rightarrow S \rightarrow f(S)$), we have,
\[
\dfrac{1}{J}\sum_{i=1}^J \bbP(f(S) \neq i) \geq \dfrac{\log_2 J - I(\pi_{M+1}; Z) - 1}{\log_2 J}.
\]
Combining the above inequalities with the Local Packing Lemma gives the final result.
\end{proof}

\subsection{Proof of Theorem~\ref{thm:env_lower_bound}}

\envlbound*

\begin{proof}
As in the \iid case, we first bound the supremum from below with an average,
\[\sup_{P'_1,\ldots,P'_{M+1} \in \calP} \exploss{(P'_{1:M})}{(P'_i)}{\envS}{\testS}
\geq 
\frac{1}{J}\sum^{J}_{i=1} \frac{1}{{J-1 \choose M}}\sum_{\pi|\{\pi_{M+1} = i\}}
\bbE_{\substack{w\sim W|\pi\\z \sim Z|\pi}}\left[ \loss{w}{z}{i} \right],
\]
where the inner sum is over all length $M$ orderings, $\pi$ with $\pi_{M+1} = i$.

\noindent As before, we consider the following estimator,
\[f(W,Z) = \argmin_{1\leq j\leq J} \rho(\hat{\theta}_{W}(Z), \theta(P_j))\]
Using Markov's inequality, and then following the proof of Theorem~\ref{thm:lower_bound}, we have,
\begin{align*}
    \frac{1}{J}\sum^{J}_{i=1} \frac{1}{{J-1 \choose M}}\sum_{\pi|\{\pi_{M+1} = i\}}
\bbE_{\substack{w\sim W|\pi\\z \sim Z|\pi}}\left[ \loss{w}{z}{i} \right]
&\geq\frac{1}{J}\sum^{J}_{i=1} \frac{1}{{J-1 \choose M}}\sum_{\pi|\{\pi_{M+1} = i\}}\monofn(\delta)\bbP[f(W,Z) \neq i\vert \pi] \\
    &= \monofn(\delta)\bbP[f(W,Z) \neq \pi_{M+1}]
\end{align*}
with the use of Fano's inequality, we arrive at,
\begin{align*}
\monofn(\delta)\left(1 - \dfrac{I(\pi_{M+1}; (W,Z)) + 1}{\log_2 J}\right)
\end{align*}
Conditioned on $Y$, each element of $W$ and $Z$ are independent but they are not identically distributed. Thus, with the application of Lemma~\ref{lemma:mi_ineq},
\[ I(\pi_{M+1}; (W,Z)) \leq I(\pi_{M+1};Z) + I(\pi_{M+1};W)\]
The result follows by combining these inequalities.
\end{proof}

\paragraph{Remark}
In the above proof of Theorem~\ref{thm:env_lower_bound}, we did not need to make use of the form of the distribution of $W|{Y=i}$, only that the correct graph structure was observed. This grants us some flexibility, which we utilize later in Section~\ref{app:mixture_lbounds} to prove lower bounds for mixture distributions.

\noindent We now proceed with proofs of the following corollaries.

\envlboundasymp*

\begin{proof}
This result follows as an application of the data processing inequality. Notice that $\pi_{M+1} \rightarrow \pi_{1:M} \rightarrow W$ forms a Markov chain. Thus,
\[I(\pi_{M+1};W) \leq I(\pi_{M+1};\pi_{1:M}),\]
by the data processing inequality. We can compute $I(\pi_{M+1};\pi_{1:M})$ in closed form:
\[I(\pi_{M+1};\pi_{1:M}) = \log \frac{J}{J-M}.\]
The proof is completed by plugging in the \iid local packing bound alongside the above.
\end{proof}

\subsection{Local packing results}\label{app:proofs:lpacking}

\begin{restatable}[Meta-learning local packing]{lemma}{looproductpacking}\label{lemma:env_local_product_packing}
Consider the same setting as in Theorem~\ref{thm:env_lower_bound}, then
\[I(\pi_{M+1}; W) \leq \frac{Mn}{J^2(J-1)} \sum_{1 \leq i,j \leq J} \KL{P_{i}}{P_{j}}\]
\end{restatable}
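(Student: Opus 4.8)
The plan is to bound $I(\pi_{M+1}; W)$ by exploiting the structure of $W$: conditioned on the full ordering $\pi$, the columns of $W$ are mutually independent, and column $j$ consists of $n$ i.i.d.\ samples from $P_{\pi_j}$. The key reduction is to relate the mutual information between $\pi_{M+1}$ and the whole matrix $W$ to a sum of per-column contributions, and then apply the existing \iid local packing bound (Lemma~\ref{lemma:local_packing}) column-by-column.

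First I would use the data processing inequality together with the chain structure: since $\pi_{M+1} \to \pi_{1:M} \to W$ is a Markov chain is \emph{not} quite what we want here (that would lose too much); instead I would work directly with $W$. The cleaner route is to note that $I(\pi_{M+1}; W) \le I(\pi_{M+1}; (W, \pi_{1:M}'))$ for a suitable auxiliary, or more simply to bound $I(\pi_{M+1}; W) \le \sum_{j=1}^M I(\pi_{M+1}; W_{\cdot j})$ using independence of the columns given $\pi$. To get this column-wise decomposition rigorously, I would condition appropriately and use that $I(\pi_{M+1}; (W_{\cdot 1}, \ldots, W_{\cdot M})) \le \sum_j I(\pi_{M+1}; W_{\cdot j} \mid W_{\cdot 1}, \ldots, W_{\cdot j-1})$ (chain rule), and then argue each conditional term is at most $I(\pi_{M+1}; W_{\cdot j})$ — or alternatively symmetrize over the random ordering $\pi$ so that each column looks marginally like a draw from a uniformly random element of $\calJ$, making the columns exchangeable and letting the subadditivity go through cleanly.

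Next, for a single column $W_{\cdot j}$: marginally over the random ordering $\pi$, the index $\pi_j$ is (close to) uniform on $[J]$ and, conditioned on $\pi_{M+1} = i$, uniform on $[J] \setminus \{i\}$. Treating the pair $(\pi_{M+1}, W_{\cdot j})$ as an input-output pair for a channel and applying the \iid local packing argument of Lemma~\ref{lemma:local_packing} with sample size $n$ gives $I(\pi_{M+1}; W_{\cdot j}) \le \frac{n}{J^2}\sum_{1 \le a,b \le J}\KL{P_a}{P_b}$, up to the combinatorial factor accounting for the conditioning on distinct indices (this is where the $(J-1)$ in the denominator of the claimed bound comes from: $\pi_{M+1}$ and $\pi_j$ are distinct, so effectively we average over $J(J-1)$ ordered pairs of distinct indices). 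Summing over the $M$ columns then yields $I(\pi_{M+1}; W) \le \frac{Mn}{J^2(J-1)}\sum_{1 \le i,j \le J}\KL{P_i}{P_j}$.

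The main obstacle I anticipate is making the column-wise decomposition fully rigorous while keeping the constant exactly as stated: the columns of $W$ are conditionally independent given $\pi$ but \emph{not} unconditionally independent (they are coupled through the shared ordering, since the indices must be distinct), so naive subadditivity of mutual information does not immediately apply. The fix is to either (a) carefully track the dependence and show the coupling only helps (reduces information), or (b) use Lemma~\ref{lemma:mi_ineq} (the Khas'minskii-type mutual information equality) iteratively to peel off columns one at a time, exactly as in the proof of Theorem~\ref{thm:env_lower_bound}, absorbing the $I(\pi_i; \pi_j)$ cross terms correctly. Getting the combinatorial bookkeeping to produce precisely the $J^2(J-1)$ denominator — rather than a looser $J^3$ — is the delicate part, and I would handle it by computing the relevant conditional distribution of $\pi_j \mid \pi_{M+1}$ explicitly (uniform on the remaining $J-1$ indices) before invoking the local packing estimate.
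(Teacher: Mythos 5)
Your plan has the right second half: conditioned on $\pi_{M+1}=i$, a single column $W_{\cdot j}$ is distributed as $n$ \iid draws from a uniformly chosen $P_a$ with $a\neq i$, so the leave-one-out mixture computation (exactly as in Lemma~\ref{lemma:env_local_mixture_packing}, applied to the product measures $P_a^n$ and using $\KL{P_a^n}{P_b^n}=n\KL{P_a}{P_b}$) gives $I(\pi_{M+1};W_{\cdot j})\le \frac{n}{J^2(J-1)}\sum_{a,b}\KL{P_a}{P_b}$, and summing over the $M$ columns would reproduce the stated constant exactly. The genuine gap is the step you yourself flag and do not close: the subadditivity $I(\pi_{M+1};W)\le\sum_{j=1}^M I(\pi_{M+1};W_{\cdot j})$. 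Every standard route to such a bound (including Lemma~\ref{lemma:mi_ineq}) needs the $W_{\cdot j}$ to be conditionally independent given the variable being estimated; here they are conditionally independent given the full ordering $\pi$, but not given $\pi_{M+1}$ alone, because they remain coupled through the constraint that $\pi_1,\ldots,\pi_M$ are distinct indices drawn without replacement. Iterating Lemma~\ref{lemma:mi_ineq} leaves residual terms of the form $I(W_{\cdot j};W_{\cdot 1:j-1}\mid \pi_{M+1})-I(W_{\cdot j};W_{\cdot 1:j-1})$ whose sign is not controlled (conditioning on $\pi_{M+1}=i$ shrinks the index set from which the columns are sampled without replacement, which can strengthen their dependence), and your alternative fix --- ``show the coupling only helps'' --- is precisely the assertion that requires proof. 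As written, the decomposition does not go through.

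The paper decomposes along the other axis. It first tensorizes over the $n$ rows, $I(\pi_{M+1};W)\le n\,I(\pi_{M+1};w)$, where a row $w$ contains one sample from each of the $M$ task distributions, and then bounds $I(\pi_{M+1};w)$ by writing the conditional law of $w$ given $\pi_{M+1}=i$ as the uniform mixture $\bar{P}_{-i}$ over orderings of product distributions, applying convexity of the KL divergence twice, and coupling orderings so that pairs which can be matched exactly contribute zero while the remaining pairs differ in a single position and each contribute one term $\KL{P_a}{P_b}$; counting those pairs yields the factor $M$. This absorbs the mixture over orderings into a single-row KL bound rather than splitting it across columns, and so never confronts the column-coupling problem. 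To salvage your route you would need to prove the column subadditivity directly for this exchangeable, without-replacement coupling; otherwise, switch to the row-wise decomposition and the ordering-coupling count.
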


\begin{proof}
There are $(J-1)!/(J-M-1)!$ orderings on the first $M$ indices, given the $(M+1)^{th}$. We introduce the following notation,
\[\bar{P}_{-i} := \frac{(J-M-1)!}{(J-1)!} \sum_{\pi|\pi_{M+1}=i} p(W|\pi) \hspace{2cm} \bar{P} := \frac{1}{J} \sum_{i=1}^{J} \bar{P}_{-i}\]
As in previous proofs, we notice that we can write,
\[\bar{P} = \frac{1}{J} \bar{P}_{-i} + \frac{J-1}{J} \frac{1}{J-1} \sum_{j \neq i} \bar{P}_{-j}\]
First, note that we can upper bound $I(\pi_{M+1};W) \leq n I(\pi_{M+1};w)$, where $w$ denotes a single row in $W$. Further,
\begin{align*}
    I(\pi_{M+1}; w) &= \frac{1}{J}\sum_{i=1}^{J} \bbE \log \frac{\bar{P}_{-i}}{\bar{P}}\\
    &= \frac{1}{J}\sum_{i=1}^{J} \KL{\bar{P}_{-i}}{\frac{1}{J} \bar{P}_{-i} + \frac{J-1}{J} \frac{1}{J-1} \sum_{j \neq i} \bar{P}_{-j}} \\
    &\leq \frac{1}{J}\sum_{i=1}^{J} \frac{1}{J}\KL{\bar{P}_{-i}}{\bar{P}_{-i}} + \frac{J-1}{J} \KL{\bar{P}_{-i}}{\frac{1}{J-1} \sum_{j \neq i} \bar{P}_{-j}} \\
    &\leq \frac{1}{J(J-1)} \sum_{1 \leq i\neq j \leq J} \KL{\bar{P}_{-i}}{\bar{P}_{-j}}
\end{align*}
We will use the convexity of the KL divergence to upper bound this quantity. Each distribution $\bar{P}_{-i}$ is an average over a random selection of index orderings.

When applying convexity, all pairs of selections that exactly match will lead to a KL divergence of zero. There are the same number of these in each component of $\bar{P}_{-i}$. Thus we care only about selections that contain either $j$ or $i$ such that matching pairs of distributions exactly is not possible. Further, we need only consider pairs of product distributions who differ only in a single, identical position.

Each of the above described pairs of distributions has KL divergence equal to $\KL{P_{j}}{P_{i}}$. We conclude by counting the total number of orderings producing such pairs. First, there are $M$ choices for the index of $P_j$ and $P_i$. Then, there are $(J-2)!/(J-M-1)!$ total orderings of the remaining $M-1$ elements. Thus, we have,

\begin{align*}
I(\pi_{M+1};w) &\leq \frac{1}{J(J-1)} \sum_{1 \leq i\neq j \leq J} \KL{\bar{P}_{-i}}{\bar{P}_{-j}}\\
&\leq \frac{1}{J(J-1)} \frac{(J-M-1)!}{(J-1)!}\frac{M(J-2)!}{(J-M-1)!} \sum_{1 \leq i\neq j \leq J} \KL{P_{j}}{P_{i}}\\
&= \frac{M}{J(J-1)^2} \sum_{1 \leq i\neq j \leq J} \KL{P_{j}}{P_{i}}\\
\end{align*}
\end{proof}

\subsection{Bounds using mixture distributions}\label{app:mixture_lbounds}

In this section we introduce tools to lower bound the minimax risk when the meta-training set is sampled from a mixture over the meta-training tasks, $\bar{P}_{1:M} = \frac{1}{M}\sum_{i=1}^{M} P_i$. We note first that Theorem~\ref{thm:env_lower_bound} can be reproduced exactly when $W \sim \bar{P}_{1:M}$. Thus, we need only provide a local packing bound for the mixture distribution. In Lemma~\ref{lemma:env_local_mixture_packing} we provide such a lower bound for the special case where $M = J-1$, so that data is sampled from a mixture over the entire environment.

\begin{restatable}[Leave-one-task-out mixture local packing]{lemma}{loopacking}\label{lemma:env_local_mixture_packing}
Let $\calJ \subset \calP$ contain $J$ distinct distributions such that $\metricfn(\theta_{P}, \theta_{P'})~\geq~2\delta$ for all $P,P' \in \calJ$ and let $\bar{P}_{-i} = \frac{1}{J-1} \sum_{j \neq i} P_j$. Let $\pi$ be a random ordering of the $J$ elements, and define $W|\pi$ to be a vector of $n$ \iid samples from $\bar{P}_{-\pi_{M+1}}$. Then,
\[I(\pi_{M+1}; W) \le \frac{1}{(J-1)J^2} \sum_{1 \le i,j \le J} \KL{P_i}{P_j}.\]
\end{restatable}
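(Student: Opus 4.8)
The plan is to mimic the structure of the proof of Lemma~\ref{lemma:env_local_product_packing}, but now exploit the fact that the meta-training sample $W$ is drawn from the leave-one-task-out mixture $\bar{P}_{-\pi_{M+1}}$. First I would set $\bar{P} := \frac{1}{J}\sum_{i=1}^{J}\bar{P}_{-i}$ and observe, exactly as before, that $I(\pi_{M+1};W) \le n\, I(\pi_{M+1};w)$ for a single coordinate $w$ of $W$ is not available here (all $n$ samples share the same mixture index, so they are i.i.d.\ given $\pi_{M+1}$); in fact since the $n$ rows are i.i.d.\ conditioned on $\pi_{M+1}$, we genuinely do get $I(\pi_{M+1};W) \le n\,I(\pi_{M+1};w)$ — wait, but the statement has no factor of $n$. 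So the first real step is to recognize that we only need the bound with $n=1$ in mind (or that the statement is for a single-sample $W$; reading the lemma, $W|\pi$ is a vector of $n$ i.i.d.\ samples, so a factor $n$ should appear, matching Lemma~\ref{lemma:env_local_product_packing} with $M=J-1$ giving $\frac{(J-1)n}{J^2(J-1)} = \frac{n}{J^2}$ — consistent up to the $n$). I would therefore carry the $n$ through and aim for the per-coordinate bound $I(\pi_{M+1};w) \le \frac{1}{(J-1)J^2}\sum_{i,j}\KL{P_i}{P_j}$.

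Next I would expand the mutual information as $I(\pi_{M+1};w) = \frac{1}{J}\sum_{i=1}^{J}\KL{\bar{P}_{-i}}{\bar{P}}$, then use the decomposition $\bar{P} = \frac{1}{J}\bar{P}_{-i} + \frac{J-1}{J}\cdot\frac{1}{J-1}\sum_{j\neq i}\bar{P}_{-j}$ together with convexity of $\KL{\cdot}{\cdot}$ in its second argument to bound each term by $\frac{1}{J}\KL{\bar{P}_{-i}}{\bar{P}_{-i}} + \frac{J-1}{J}\KL{\bar{P}_{-i}}{\frac{1}{J-1}\sum_{j\neq i}\bar{P}_{-j}}$; the first summand vanishes, and a second application of convexity in both arguments (or Jensen) gives $\frac{1}{J}\sum_i I$-terms $\le \frac{1}{J(J-1)}\sum_{i\neq j}\KL{\bar{P}_{-i}}{\bar{P}_{-j}}$. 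This is the same reduction as in Lemma~\ref{lemma:env_local_product_packing}; the new content is the final step of bounding $\KL{\bar{P}_{-i}}{\bar{P}_{-j}}$ where now $\bar{P}_{-i} = \frac{1}{J-1}\sum_{\ell\neq i}P_\ell$ is a \emph{mixture}, not a product of permuted components.

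The main obstacle is this last step: bounding the KL divergence between two leave-one-out mixtures $\bar{P}_{-i}$ and $\bar{P}_{-j}$. Here I would again invoke joint convexity of the KL divergence — but the subtlety is that $\bar{P}_{-i}$ and $\bar{P}_{-j}$ share all components $P_\ell$ with $\ell\notin\{i,j\}$, so I want to pair those shared components with themselves (contributing zero) and only pay for the mismatched pair: $P_j$ appears in $\bar{P}_{-i}$ but not $\bar{P}_{-j}$, and $P_i$ appears in $\bar{P}_{-j}$ but not $\bar{P}_{-i}$. Writing both mixtures over a common index set of size $J-1$ and applying convexity along a coupling that matches shared components, we get $\KL{\bar{P}_{-i}}{\bar{P}_{-j}} \le \frac{1}{J-1}\KL{P_j}{P_i}$. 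Substituting back, $I(\pi_{M+1};w) \le \frac{1}{J(J-1)}\cdot\frac{1}{J-1}\sum_{i\neq j}\KL{P_j}{P_i} = \frac{1}{(J-1)^2 J}\sum_{i,j}\KL{P_i}{P_j}$; then multiplying by $n$ and relabeling gives the claimed $\frac{1}{(J-1)J^2}\sum_{i,j}\KL{P_i}{P_j}$ form after re-deriving the exact constant — I would double-check whether the correct constant is $\frac{1}{(J-1)^2 J}$ or $\frac{1}{(J-1)J^2}$ by carefully tracking the coupling, since these agree up to which factor carries the extra $(J-1)$. The only genuinely delicate point is justifying the coupling in the convexity argument so that shared mixture components cancel exactly; everything else is a direct transcription of the product-case proof.
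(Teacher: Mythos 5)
Your proposal is correct and reaches the stated constant, but the decisive step is genuinely different from the one in the paper. The paper never passes through the pairwise divergences $\KL{\barP_{-i}}{\barP_{-j}}$: it instead uses the exact identity $\barP_{1:J} = \tfrac{J-1}{J}\barP_{-i} + \tfrac{1}{J}P_i$, so a single application of convexity in the second argument collapses $\KL{\barP_{-i}}{\barP_{1:J}}$ to $\tfrac{1}{J}\KL{\barP_{-i}}{P_i}$, and then convexity in the first argument gives $\tfrac{1}{J-1}\sum_{j\neq i}\KL{P_j}{P_i}$; summing over $i$ yields $\tfrac{1}{(J-1)J^2}\sum_{i,j}\KL{P_i}{P_j}$ directly. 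Your route --- decomposing $\barP$ as a mixture of the $\barP_{-j}$, reducing to $\sum_{i\neq j}\KL{\barP_{-i}}{\barP_{-j}}$, and bounding each such term by $\tfrac{1}{J-1}\KL{P_j}{P_i}$ via joint convexity along the coupling that matches the $J-2$ shared components --- is also sound (joint convexity of the KL divergence fully justifies that coupling, so the point you worried about is not actually delicate), and it recovers exactly $\tfrac{1}{(J-1)J^2}$ \emph{provided} you keep the tight intermediate constant $\tfrac{1}{J^2}\sum_{i\neq j}$ rather than the looser $\tfrac{1}{J(J-1)}$ used in Lemma~\ref{lemma:env_local_product_packing}; the looser one would give the slightly weaker $\tfrac{1}{J(J-1)^2}$. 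In short, your argument is a faithful transcription of the product-packing proof, while the paper's is shorter because it exploits that $P_i$ itself, not another leave-one-out mixture, is the natural complement of $\barP_{-i}$ inside $\barP_{1:J}$. You are also right to flag the factor of $n$: the lemma is stated for $n$ \iid samples, yet neither the claimed bound nor the paper's proof carries an $n$ (the opening equality in the paper's proof holds only for $n=1$); for general $n$ the bound should read $\tfrac{n}{(J-1)J^2}\sum_{i,j}\KL{P_i}{P_j}$.
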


\begin{proof}
From Lemma~\ref{lemma:local_packing} (and some simple arithmetic) we have,
\begin{align*}
I(\pi_{M+1}; W) &= \frac{1}{J} \sum_{i=1}^{J} \KL{\barP_{-i}}{\barP_{1:J}}.
\end{align*}
Note that by the definition of the mixture distribution,
\[
\barP_{1:J} = \frac{J-1}{J} \barP_{-i} + \frac{1}{J} P_i.
\]
Using the convexity of the KL divergence,
\begin{align*}
I(\pi_{M+1}; W) &= \frac{1}{J} \sum_{i=1}^{J} D_{\mathrm{KL}}\left(\barP_{-i}\Big\Vert\frac{J-1}{J} \barP_{-i}
+ \frac{1}{J} P_i\right)\\ 
&\le \frac{1}{J} \sum_{i=1}^{J} \frac{J-1}{J} \KL{\barP_{-i}}{\barP_{-i}} + \frac{1}{J} \KL{\barP_{-i}}{P_i}\\
&=\frac{1}{J^2} \sum_{i=1}^{J} \KL{\barP_{-i}}{P_i}\\
&= \frac{1}{J^2} \sum_{i=1}^{J} D_{\mathrm{KL}}\left(\frac{1}{J-1}\sum_{1 \le j \le J, j \neq i} P_j \Big\Vert P_i \right)\\
&\le \frac{1}{(J-1)J^2} \sum_{i=1}^{J} \sum_{1 \le j \le J, j \neq i} \KL{P_j}{P_i}\\
&= \frac{1}{(J-1)J^2} \sum_{1 \le i,j \le J} \KL{P_i}{P_j}.
\end{align*}
Noting for the last step that the KL is zero if and only if the distributions are the same almost everywhere.
\end{proof}
\subsection{Proof of Hierarchical linear model lower bound}
\label{app:proofs:linear_lower}

Recall that the space of distributions we consider is given by,
\[\calP_{LR} = \{ p_{\lregparams}(\by) = \calN(X\lregparams, \sigma^2 I): \lregparams \in \bball_2(1), X \in \bbR^{n \times d}. \} \]

\mlreglbound*

\begin{proof}The proof consists of two steps, we first construct a $2\delta$-packing of $\calP_{LR}$. Then, we upper bound the KL divergence between two distributions in this packing and use Corollary~\ref{corollary:env_lbound_packing} to give the desired bound.

\noindent The maximal packing number $J$ for the unit 2-norm ball can be bounded by the following,
\[ \left(\frac{1}{\delta}\right)^d\leq J \leq \left(1 + \frac{2}{\delta}\right)^d.\]
We use a common scaling trick. First, through this bound, we can build a packing set, $\calV$, with packing radius $1/2$, giving $2^d \leq J \leq 5^d$. We define a new packing set of the same cardinality by taking $\theta_i = 4\delta v_i$ for all $v_i \in \calV$. Giving for all $i \neq j$,
\[ \Vert\theta_i - \theta_j\Vert = 4\delta\Vert v_i - v_j\Vert \geq 2\delta\]
similarly, $\Vert \theta_i - \theta_j\Vert \leq 4\delta$.

\noindent We now proceed with bounding the KL divergences.
\begin{align*}
    \KL{P_i}{P_j} &= \frac{1}{2\sigma^2}\Vert X_i\lregparams_i - X_j\lregparams_j\Vert_2^2\\
    &= \frac{1}{2\sigma^2}\left(\lregparams_i^\top X_i^\top X_i \lregparams_i + \lregparams_j^\top X_j^\top X_j \lregparams_j - 2\lregparams_i^\top X_i^\top X_j \lregparams_j \right)\\
    &\leq \frac{1}{2\sigma^2}\left(n_i \gamma_i^2 \Vert \lregparams_i \Vert^2 + n_j \gamma_j^2 \Vert \lregparams_j \Vert^2 - 2\lregparams_i^\top X_i^\top X_j \lregparams_j \right)
\end{align*}
where $\gamma_i^2 = \sup_{\lregparams} \frac{\Vert X_i \lregparams \Vert}{\sqrt{n_i}\Vert \lregparams \Vert}$. We write $n = \max_k{n_k}$ and $\gamma = \max_{k}{\gamma_k}$, then,
\begin{align*}
    \KL{P_i}{P_j} &\leq \frac{n \gamma^2}{2\sigma^2}\left(\Vert \lregparams_i \Vert^2 + \Vert \lregparams_j \Vert^2 - \frac{2}{n\gamma^2}\lregparams_i^\top X_i^\top X_j \lregparams_j \right)\\
    &\leq \frac{n \gamma^2}{2\sigma^2}\left(\Vert \lregparams_i \Vert^2 + \Vert \lregparams_j \Vert^2 + 2\Vert\lregparams_i\Vert \Vert\lregparams_j\Vert \right)\\
    &= \frac{n \gamma^2}{2\sigma^2}\left( \Vert\lregparams_i\Vert + \Vert\lregparams_j\Vert\right)^2 \leq \frac{32 n \gamma^2 \delta^2}{\sigma^2}
\end{align*}
The second line is derived using the Cauchy-Schwarz inequality, and the final inequality uses $\Vert \lregparams_i \Vert = \Vert 4\delta v_i \Vert \leq 4\delta$.

\noindent Now, using Corollary~\ref{corollary:env_lbound_packing},
\[R^*_{\calP_{LR}} \geq \delta^2\left(1 - \frac{(nM 2^{-d} + k)32\gamma^2\delta^2 / \sigma^2 + 1}{d}\right),\]
Choosing $\delta^2 = d\sigma^2 / 64\gamma^2 (2^{-d} Mn + k)$ gives,
\[1 - \frac{(nM 2^{-d} + k)32\delta^2 / \sigma^2 + 1}{d} = 1 - \frac{d/2 + 1}{d} \geq 1/4,\]
for $d \geq 2$. Thus,
\[R^*_\calP \geq O\left(\frac{d\sigma^2}{\gamma^2 (2^{-d}nM + k)}\right)\]
\end{proof}

\section{Hierarchical Bayesian Linear Regression Upper Bounds}
\subsection{Some useful linear algebra results}
Let $\smax(A)$ denotes the maximum singular value of $A$; $\smin(A)$ denotes the minimum singular value of $A$.

\begin{lemma}\textbf{Singular value of sum of two matrices}\label{lemma:sigma_sum_lemma}
Let $A, B \in \bbR^{m \times n}$, then
$\smax(A) + \smax(B) \ge \smax(A+B)$. Furthermore, if $A, B$ are positive definite, 
$\smin(A) + \smin(B) \le \smin(A+B)$.
\end{lemma}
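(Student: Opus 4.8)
The plan is to handle the two claims separately, in each case invoking a variational (min–max) characterization of the relevant singular value.

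For the first inequality, I would use that $\smax(\cdot)$ is exactly the operator norm induced by the Euclidean norm, $\smax(M) = \sup_{\norm{x}_2 = 1}\norm{Mx}_2$. For any unit vector $x$, the triangle inequality in $\bbR^m$ gives $\norm{(A+B)x}_2 \le \norm{Ax}_2 + \norm{Bx}_2 \le \smax(A)\norm{x}_2 + \smax(B)\norm{x}_2 = \smax(A) + \smax(B)$, and taking the supremum over unit $x$ on the left yields $\smax(A+B) \le \smax(A) + \smax(B)$. (Equivalently, one simply observes that $\smax$ is a norm on $\bbR^{m\times n}$ and hence subadditive.)

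For the second inequality, I would first reduce singular values to eigenvalues: a positive definite matrix is symmetric with all singular values equal to its positive eigenvalues, and $A+B$ is again positive definite, so $\smin(A) = \lambda_{\min}(A) = \min_{\norm{x}_2 = 1} x^\top A x$ by the Rayleigh-quotient/Courant–Fischer characterization, and similarly for $B$ and $A+B$. Then for every unit vector $x$ we have $x^\top(A+B)x = x^\top A x + x^\top B x \ge \smin(A) + \smin(B)$, and minimizing the left side over unit $x$ gives $\smin(A+B) \ge \smin(A) + \smin(B)$, which is the claimed bound.

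The argument is entirely standard, so I do not anticipate a real obstacle; the only point that deserves explicit mention rather than being left implicit is the reduction from singular values to eigenvalues in the second part, which genuinely uses positive definiteness (hence symmetry) — without that hypothesis the $\smin$ statement can fail, so I would spell out that step carefully.
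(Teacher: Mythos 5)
Your proof is correct. For the first inequality you and the paper argue identically: $\smax$ is the operator $2$-norm, hence subadditive by the triangle inequality. For the second inequality your route is genuinely different from the paper's, and in fact more satisfactory. You reduce singular values to eigenvalues of symmetric positive definite matrices and apply the Rayleigh-quotient characterization, so that $\smin(A+B) = \min_{\norm{x}=1} x^\top (A+B)x \ge \min_{\norm{x}=1} x^\top Ax + \min_{\norm{x}=1} x^\top Bx = \smin(A)+\smin(B)$ --- a one-line Weyl-type argument. The paper instead expands $\norm{(A+B)u}^2 = \norm{Au}^2 + \norm{Bu}^2 + 2u^\top A^\top Bu$, shows the cross term is nonnegative (via the similarity of $A^\top B$ to $A^{1/2}BA^{1/2}$), and reaches the intermediate bound $\smin(A+B) \ge \sqrt{\smin^2(A)+\smin^2(B)}$; its final step then invokes concavity of $\sqrt{\cdot}$ to pass to $\smin(A)+\smin(B)$, but that inequality points the wrong way ($\sqrt{a^2+b^2} \le a+b$ for $a,b \ge 0$), so as written the paper's chain only establishes the weaker square-root bound. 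Your Rayleigh-quotient argument proves the full claim directly and sidesteps that issue; your explicit remark that positive definiteness (hence symmetry) is what licenses the singular-value-to-eigenvalue reduction is also exactly the right caveat, since the $\smin$ inequality fails for general matrices.
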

\begin{proof}
The first result follows immediately from the triangle inequality of the matrix norm $\norm{\cdot}_2$.

\noindent For the second result, suppose that $A$ and $B$ are positive definite.
\begin{align}
\smin(A + B) &= \inf_{\norm{u}=1} \norm{(A+B)u} \\
&= \sqrt{\inf_{\norm{u}=1} \norm{(A+B)u}^2} \\
&= \sqrt{\inf_{\norm{u}=1} \norm{Au}^2 + \norm{Bu}^2 + 2\langle Au, Bu \rangle } \\
&= \sqrt{\inf_{\norm{u}=1} \norm{Au}^2 + \norm{Bu}^2 + 2u^\top A^\top Bu } \\
\end{align}
Now, notice that $A^\top B$ is similar to the matrix $A^{1/2}BA^{1/2}$, which exists as $A$ is positive definite. This matrix is itself positive definite, and thus has non-negative eigenvalues, meaning $A^\top B$ also has all positive eigenvalues. Thus, $u^\top A^\top Bu \ge 0$, for all $u$, and,
\begin{align}
\smin(A + B) & \ge \sqrt{\inf_{\norm{u}=1} \norm{Au}^2 + \norm{Bu}^2} & \\
& \ge \sqrt{\inf_{\norm{u}=1} \norm{Au}^2 + \inf_{\norm{v}=1} \norm{Bv}^2} \\
& \ge \sqrt{\smin^2(A) + \smin^2(B)} \\
& \ge \smin(A) + \smin(B) & (\text{Concavity of} \ \ \sqrt{\cdot})
\end{align}
\end{proof}

\begin{lemma}
\textbf{Singular value of product of two matrices}\label{lemma:sigma_prod_lemma}
Let $A, B \in \mathbb{C}^{n \times n}$, then
$\smax(A)\smax(B) \ge \smax(AB)$, and, $\smin(A)\smin(B) \le \smin(AB)$.
\end{lemma}
First we prove the maximum singular value.
\begin{proof}
\begin{align}
\smax(AB) &= \sup_{\norm{v} = 1} \sqrt{v^* B^* A^* A B v} \\
&= \sup_{\norm{v} = 1} \sqrt{\norm{Bv}^2 u^* A^* A u} \ \ \text{for } u = \frac{Bv}{\norm{Bv}}, \\
&\le \sup_{\norm{v} = 1, \norm{u} = 1} \sqrt{\norm{Bv}^2 u^* A^* A u}\\
&= \sqrt{\sup_{\norm{v} = 1}  \norm{Bv}^2 \sup_{\norm{u} = 1} \norm{Au}^2}\\
&= \sqrt{\smax^2(B) \smax^2(A)}\\
&= \smax(A)\smax(B).
\end{align}
The minimum singular value follows a similar structure. Suppose $AB$ is full rank,
\begin{align}
\smin(AB) &= \inf_{\norm{v} = 1} \sqrt{v^* B^* A^* A B v} \\
&= \inf_{\norm{v} = 1} \sqrt{\norm{Bv}^2 u^* A^* A u} \ \ \text{for } u = \frac{Bv}{\norm{Bv}}, \\
&\ge \inf_{\norm{v} = 1, \norm{u} = 1} \sqrt{\norm{Bv}^2 u^* A^* A u} \\
&= \sqrt{\inf_{\norm{v} = 1}  \norm{Bv}^2 \inf_{\norm{u} = 1} \norm{Au}^2}\\
&= \sqrt{\smin^2(B) \smin^2(A)}\\
&= \smin(A)\smin(B).
\end{align}
If $AB$ is not full rank, then $\smin(AB) = \smin(A) \smin(B) = 0$.
\end{proof}

\begin{lemma}(\textbf{Von Neumann's Trace Inequality} \citep{von1937some})\label{lemma:von_neumann_trace}
Given two $n\times n$ complex matrices $A,B$, with singular vales $a_1 \geq\ldots\geq a_n$ and $b_1\geq\ldots\geq b_n$ respectively. We have,
\[\left\vert \tr(AB)\right\vert \leq \sum_{i=1}^{n} a_i b_i\]
\end{lemma}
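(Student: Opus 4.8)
The plan is to reduce the inequality, via singular value decompositions, to a linear optimization over doubly stochastic matrices, and then to invoke the Birkhoff--von Neumann theorem together with the rearrangement inequality.

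First I would write the SVDs $A = U_A \Sigma_A V_A^*$ and $B = U_B \Sigma_B V_B^*$ with $\Sigma_A = \mathrm{diag}(a_1,\dots,a_n)$, $\Sigma_B = \mathrm{diag}(b_1,\dots,b_n)$ and $U_A, V_A, U_B, V_B$ unitary. Using cyclicity of the trace, $\tr(AB) = \tr(\Sigma_A W \Sigma_B Z)$ where $W := V_A^* U_B$ and $Z := V_B^* U_A$ are unitary. Expanding this trace entrywise gives $\tr(AB) = \sum_{i,j} a_i b_j W_{ij} Z_{ji}$, so by the triangle inequality $|\tr(AB)| \le \sum_{i,j} a_i b_j\, |W_{ij}|\, |Z_{ji}|$.

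Next I would apply the AM--GM bound $|W_{ij}|\,|Z_{ji}| \le \tfrac12(|W_{ij}|^2 + |Z_{ji}|^2)$ and set $D_{ij} := \tfrac12(|W_{ij}|^2 + |Z_{ji}|^2)$. Since $W$ and $Z$ are unitary, each of the nonnegative matrices $(|W_{ij}|^2)_{i,j}$ and $(|Z_{ji}|^2)_{i,j}$ has all row sums and column sums equal to $1$, hence their average $D = (D_{ij})$ is doubly stochastic and $|\tr(AB)| \le \sum_{i,j} a_i b_j D_{ij}$. It then remains to show $\sum_{i,j} a_i b_j D_{ij} \le \sum_i a_i b_i$ for every doubly stochastic $D$. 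By the Birkhoff--von Neumann theorem the doubly stochastic matrices form the convex hull of the permutation matrices, and $D \mapsto \sum_{i,j} a_i b_j D_{ij}$ is linear, so its maximum over this polytope is attained at some permutation matrix $P_\sigma$, i.e. equals $\sum_i a_i b_{\sigma(i)}$. Because $a_1 \ge \cdots \ge a_n \ge 0$ and $b_1 \ge \cdots \ge b_n \ge 0$ are sorted in the same order, the rearrangement inequality gives $\sum_i a_i b_{\sigma(i)} \le \sum_i a_i b_i$ for every $\sigma$, which finishes the argument.

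I expect the main obstacle to be the final step bounding $\sum_{i,j} a_i b_j D_{ij}$ by $\sum_i a_i b_i$: one must either cite Birkhoff--von Neumann and the rearrangement inequality as above, or, for a self-contained treatment, prove the underlying majorization directly (for instance by an Abel-summation/exchange argument showing that the vector $D a$ is majorized by $a$, using that $\sum_j D_{ij} = \sum_i D_{ij} = 1$). The remaining ingredients --- the SVD substitution, the entrywise expansion of the trace, and the verification that $D$ is doubly stochastic --- are routine bookkeeping.
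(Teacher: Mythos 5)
The paper deliberately omits a proof of this lemma, citing it as a classical result, so there is no in-paper argument to compare against. Your proposal is a correct and complete proof --- it is essentially Mirsky's standard argument: the SVD substitution and cyclicity give $\tr(AB)=\sum_{i,j}a_i b_j W_{ij}Z_{ji}$ with $W,Z$ unitary, the AM--GM step produces a doubly stochastic matrix $D$ (the squared moduli of the rows and columns of a unitary matrix each sum to one, and the transpose and average of doubly stochastic matrices are doubly stochastic), and Birkhoff--von Neumann plus the rearrangement inequality finish the bound, since $a$ and $b$ are both sorted in decreasing order. Every step checks out; the only stylistic remark is that for a fully self-contained treatment one would either cite Birkhoff--von Neumann or replace it with the direct majorization argument you sketch, but either route is sound.
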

This is a classic result whose proof we exclude.

\noindent As a direct consequence of Lemma~\ref{lemma:von_neumann_trace}, $\left\vert \tr(AB)\right\vert \leq n a_1 b_1$.
\subsection{Posterior estimate}
\label{app:proofs:ubounds}

For reference, we reproduce the posterior estimate for the true parameters $\thetaY$.  As a shorthand, we write $\allData = (\by_1,\ldots, \by_{M+1})$.
\begin{align}
p(\btau \vert \sourceData) &= \mathcal{N}(\mu_{\btau \vert \sourceData}, \Sigma_{\btau \vert \sourceData}),\\
\CovPosXTau^{-1} &= \PrecPosXTauExpanded, \\
\muPosXTau &= \muPosXTauExpanded
\end{align}
\begin{align}
p(\thetaY \vert \sourceData, \novelData) &= \mathcal{N}(\muPosTheta, \CovPosTheta),\\
\CovPosXThetaTau &= \CTheta + \CovPosXTau \\
\CovPosTheta^{-1} &= \XTCYX + \CovPosXThetaTau^{-1}\\
\muPosTheta &=
\CovPosTheta (\XTCY\novelData + \CovPosXThetaTau^{-1} \muPosXTau)
\end{align}

\subsection{Upper bound for meta linear regression}

In this section we prove the main upper bound result of our paper, Theorem~\ref{thm:lregression_bias_variance}.
\mlrbv*

\noindent Before proceeding with the proof, we introduce some additional notation and technical results.

\paragraph{Additional notation} To alleviate (only a little of) the notational clutter, we will define the following quantities,
\begin{itemize}
\item $\CPos = \CovPosTheta$
\item $\CPosT = \CovPosXThetaTau$.
\item $\smin(\DX / \sqrt{\nX}) = \sminX$
\item $\smin(\DY / \sqrt{\nY}) = \sminY$
\item $\smax(\DX / \sqrt{\nX}) = \gamma_1 = \condX \sminX$
\item $\smax(\DY / \sqrt{\nY}) = \gamma_2 = \condY \sminY$
\item $\alphaX = \smaxCX / \smaxP$
\item $\alphaY = \smaxCY / \smaxP$
\item $\LConst = \frac{\alphaY}{(M + \condX^2)\sminY^2}$
\item $\LLConst = \frac{\alphaX}{\sminX^2\sminY^2 \condY^2}$
\item $\LLLConst = \frac{\condtC \condtau \alphaY}{2\sminY^2 \condY^2}$
\item $\MConst = \frac{\sminY^2 \alphaX}{\sminX^2 \alphaY}$
\item $\MMConst = \sminY^2 \condY^2$
\item $\MMMConst = \frac{\alphaX \sminY^2 \condY^2}{\condtau^2 \sminX^2 \alphaY}$
\end{itemize}

As we have uniform bounds on the singular values of all design matrices, we introduced an auxillary matrix $X$ whose largest and smallest singular values are given by $\sqrt{n}\gamma_1$ and $\sqrt{n}s_1$ respectively.

\noindent We will also write $S(A) = \Covop[A, A]$, and $\cond(A) = \smax(A) / \smin(a)$ throughout.

\paragraph{Bias-Variance Decomposition}

As is standard, we can decompose the risk into the bias and variance of the estimator:
\begin{align}
\bbE[\norm{ \thetaEst - \thetaY }^2]  
&= \bbE[\tr( (\thetaEst - \thetaY) (\thetaEst - \thetaY)^\top)] \\
&= \tr( \bbE [(\thetaEst - \thetaY) (\thetaEst - \thetaY)^\top] ) \\
&= \tr( \Covop [\thetaEst, \thetaEst]) + 
\tr( \bbE[(\thetaEst - \thetaY)] \bbE[(\thetaEst - \thetaY)]^\top )
\end{align}

\noindent In the next two sections, we will derive upper bounds on the bias and variance terms above.

\subsubsection{Variance technical lemmas}

We first decompose the variance into contributions from two sources: the variance from data in the novel task and the variance from data in the source tasks.

\begin{lemma}(\textbf{Variance decomposition})\label{lemma:variance_decomp}
Let $\thetaEst = \muPosTheta$ as defined above. Then the variance of the estimator can be written as
\[\tr(S(\thetaEst)) = \tr(\CPos \XTCYX \CPos) + \tr(S(\CPos {\CPosT}^{-1} \muPosXTau)) \]
\end{lemma}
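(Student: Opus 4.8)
The plan is to write the MAP estimator as a sum of a term driven only by the novel-task observations and a term driven only by the meta-training observations, observe that (conditioned on the task parameters) these two terms are independent, and then read off the covariance of each.

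First I would recall from the posterior derivation in Appendix~\ref{app:proofs:ubounds} that $\thetaEst = \muPosTheta = \CPos\bigl(\XTCY\novelData + {\CPosT}^{-1}\muPosXTau\bigr)$, and emphasize that because the design matrices $X_1,\dots,X_{M+1}$ are fixed and the noise variances are known, every matrix here — in particular $\CPos = \CovPosTheta$, $\CPosT = \CovPosXThetaTau$, and $\CovPosXTau$ — is deterministic; only $\novelData$ and $\muPosXTau$ (and $\muPosXTau$ is a function of $\sourceData$ alone, being linear in $\by_1,\dots,\by_M$) carry randomness. So I set $U = \CPos\,\XTCY\,\novelData$ and $V = \CPos\,{\CPosT}^{-1}\,\muPosXTau$, giving $\thetaEst = U + V$.

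Next I would argue that $U$ and $V$ are independent conditioned on $\lregparams_1,\dots,\lregparams_{M+1}$: under that conditioning $\novelData = \DY\lregparams_{M+1} + \beps_{M+1}$ depends only on $\beps_{M+1}$, while $\sourceData$ depends only on $\beps_1,\dots,\beps_M$, and the noise vectors across tasks are mutually independent. Hence the cross-covariances vanish and $\tr(S(\thetaEst)) = \tr(\Covop[\thetaEst,\thetaEst]) = \tr(\Covop[U,U]) + \tr(\Covop[V,V]) = \tr(\Covop[U,U]) + \tr(S(V))$. To finish I would compute $\Covop[U,U]$ directly: since $\Covop[\novelData,\novelData] = \sigma_{M+1}^2 I$, $\CPos$ is symmetric, and $\XTCY = \sigma_{M+1}^{-2}\DY^\top$, we get $\Covop[U,U] = \CPos\,\XTCY\,(\sigma_{M+1}^2 I)\,\XTCY^\top\,\CPos = \CPos\,\XTCYX\,\CPos$, which yields the stated identity after taking the trace.

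The only real subtlety — and the step I would be most careful about — is the conditional-independence claim: one has to confirm that none of the posterior precision/covariance matrices hide any data dependence (they are built purely from the fixed designs and the known $\sigma_i^2$), so that $U$ is a deterministic linear image of $\novelData$ and $V$ a deterministic linear image of $\sourceData$, making the cross term genuinely zero. Everything after that is a one-line linear-algebra manipulation using symmetry of $\CPos$ and the isotropic novel-task noise.
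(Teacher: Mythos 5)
Your proof is correct and follows essentially the same route as the paper's: split $\thetaEst$ into the novel-task term $\CPos\,\XTCY\,\novelData$ and the source-task term $\CPos\,{\CPosT}^{-1}\muPosXTau$, use their (conditional) independence to drop the cross-covariance, and evaluate the first term's covariance via $S(\novelData)=\sigma_{M+1}^2 I$. The paper leaves the independence step implicit, whereas you spell it out; that is the only difference, and your version is the more careful one.
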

\begin{proof}
\begin{align}
\tr(\Covop [\thetaEst, \thetaEst]) &= \tr(S(\CPos (\XTCY \novelData + {\CPosT}^{-1} \muPosXTau)))\\
&=\tr(S(\CPos \XTCY \novelData) + S(\CPos {\CPosT}^{-1}  \muPosXTau))\\
&=\tr(\CPos \XTCY S(\novelData) \CYInv \DY \CPos + S(\CPos {\CPosT}^{-1} \muPosXTau))\\
&=\tr(\CPos \XTCYX \CPos) + \tr(S(\CPos {\CPosT}^{-1} \muPosXTau))\label{eqn:variance_decomp}
\end{align}
\end{proof}

We will now work towards a bound for each of the two variance terms in Lemma~\ref{lemma:variance_decomp} separately. To do so, we will need to produce bounds on the singular values of terms appearing in Lemma~\ref{lemma:variance_decomp}.

\noindent We begin with the covariance term $\CPos$.

\begin{lemma}(\textbf{Novel task covariance singular value bound})\label{lemma:novel_tasks_cov_bound}
Let $\LConst$, $\MConst$ and $\sminY$ be as defined above. Then,
\[\smax(\CPos) \leq \frac{\smaxCY}{\sminY^2}
\left[ \nY + 
\frac{\nX}{\frac{M\nX}{\LConst} + \MConst} \right]^{-1}. \]
\end{lemma}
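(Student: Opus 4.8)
The plan is to bound $\smax(\CPos)$ by controlling $\smin(\CPos^{-1})$ from below, using that $\CPos$ is symmetric positive definite so that $\smax(\CPos) = \smin(\CPos^{-1})^{-1}$. From the posterior formulas of Appendix~\ref{app:proofs:ubounds}, $\CPos^{-1} = \XTCYX + {\CPosT}^{-1}$, and both summands are positive definite ($\DY$ has full column rank under the standing assumption $\sminY > 0$, and ${\CPosT}^{-1} = (\CTheta + \CovPosXTau)^{-1}$ is the inverse of a positive definite matrix). Applying the positive-definite case of Lemma~\ref{lemma:sigma_sum_lemma} gives $\smin(\CPos^{-1}) \ge \smin(\XTCYX) + \smin({\CPosT}^{-1})$, so it suffices to lower bound each term.

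For the novel-task term, $\smin(\XTCYX) = \smaxCY^{-1}\smin(\DY)^2 \ge \smaxCY^{-1}\nY\sminY^2$, using $\smin(\DY/\sqrt{\nY}) \ge \sminY$. For the prior-side term I would write $\smin({\CPosT}^{-1}) = \smax(\CPosT)^{-1}$ and bound $\smax(\CPosT) = \smax(\CTheta + \CovPosXTau) \le \smaxP + \smax(\CovPosXTau)$ by the first part of Lemma~\ref{lemma:sigma_sum_lemma}. It then remains to lower bound $\smin(\CovPosXTau^{-1}) = \smin\bigl(\PrecPosXTauExpanded\bigr)$. Each summand $\DXi{i}^\top(\XPXTi{i} + \CXs I)^{-1}\DXi{i}$ is positive definite and, by the push-through identity, equals $(\smaxP\DXi{i}^\top\DXi{i} + \CXs I)^{-1}\DXi{i}^\top\DXi{i}$, whose eigenvalues are $s^2/(\smaxP s^2 + \CXs)$ as $s$ ranges over the singular values of $\DXi{i}$; since $t \mapsto t/(\smaxP t + \CXs)$ is increasing and $\smin(\DXi{i})^2 \ge \nX\sminX^2$, each summand has minimum eigenvalue at least $\nX\sminX^2/(\smaxP\nX\sminX^2 + \CXs)$. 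Iterating the positive-definite case of Lemma~\ref{lemma:sigma_sum_lemma} over the $M$ summands then gives $\smin(\CovPosXTau^{-1}) \ge M\nX\sminX^2/(\smaxP\nX\sminX^2 + \CXs)$, hence $\smax(\CovPosXTau) \le (\smaxP\nX\sminX^2 + \CXs)/(M\nX\sminX^2)$ and $\smin({\CPosT}^{-1}) \ge \bigl(\smaxP + (\smaxP\nX\sminX^2 + \CXs)/(M\nX\sminX^2)\bigr)^{-1}$.

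Combining the two lower bounds yields $\smin(\CPos^{-1}) \ge \smaxCY^{-1}\nY\sminY^2 + \bigl(\smaxP + (\smaxP\nX\sminX^2 + \CXs)/(M\nX\sminX^2)\bigr)^{-1}$; inverting, factoring $\smaxCY/\sminY^2$ out of the bracket, and using that $x \mapsto (\nY + x)^{-1}$ is decreasing reduces the claim to the inequality
\[ \frac{\sminY^2}{\smaxCY}\Bigl(\smaxP + \frac{\smaxP\nX\sminX^2 + \CXs}{M\nX\sminX^2}\Bigr) \;\le\; \frac{M}{\LConst} + \frac{\MConst}{\nX}. \]
Substituting $\alphaX = \CXs/\smaxP$, $\alphaY = \smaxCY/\smaxP$, $\LConst = \alphaY/((M+\condX^2)\sminY^2)$ and $\MConst = \sminY^2\alphaX/(\sminX^2\alphaY)$ and clearing the common factor $\sminY^2/\alphaY$, this becomes $\frac{M+1}{M} + \frac{\alphaX}{M\nX\sminX^2} \le M(M+\condX^2) + \frac{\alphaX}{\nX\sminX^2}$, which holds because $1/M \le 1$ and $\frac{M+1}{M} \le M(M+1) \le M(M+\condX^2)$ for every integer $M \ge 1$ (the last step using $\condX \ge 1$).

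The one delicate point is this final bookkeeping: choosing how to split the looseness so that the constants land exactly on the defined $\LConst$ and $\MConst$, and in particular recognizing that the condition number $\condX$ of the source design matrices is needed \emph{only} to supply the slack $\frac{M+1}{M} \le M(M+\condX^2)$. Everything preceding it is a routine chain of singular-value inequalities from Lemma~\ref{lemma:sigma_sum_lemma} together with the explicit eigenvalue computation for $\DXi{i}^\top(\XPXTi{i} + \CXs I)^{-1}\DXi{i}$.
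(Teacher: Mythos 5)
Your proof is correct, and its skeleton matches the paper's: both pass to $\smin(\CPos^{-1})$ with $\CPos^{-1} = \XTCYX + \CPosT^{-1}$, split the minimum singular value of the sum via Lemma~\ref{lemma:sigma_sum_lemma}, bound the novel-task term by $\smin(\DY^\top\DY) \ge \nY\sminY^2$, and reduce the prior-side term to an upper bound on $\smax(\CovPosXTau)$. The one genuinely different step is how that last bound is obtained. The paper stays inside the singular-value calculus: it applies Lemma~\ref{lemma:sigma_prod_lemma} to get $\smin(\DX^\top\posCX^{-1}\DX) \ge \smin(\DX^\top\DX)/\smax(\posCX)$ and then bounds $\smax(\posCX) \le \smaxP \nX \sminX^2 \condX^2 + \smaxCX$, which is exactly where the $\condX^2$ inside $\LConst$ enters. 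You instead diagonalize $\DX^\top(\XPXT+\CX)^{-1}\DX$ exactly via the push-through identity, obtaining the sharper intermediate bound $\smax(\CovPosXTau) \le (\smaxP\nX\sminX^2+\CXs)/(M\nX\sminX^2)$, i.e.\ an $(M+1)$ where the paper's route produces $(M+\condX^2)$; the price is that you must then deliberately give back this slack in the closing bookkeeping (via $\tfrac{M+1}{M} \le M(M+\condX^2)$ and $M \ge 1$) to land on the stated constants, and your final inequality does check out under the substitutions $\alphaX = \CXs/\smaxP$, $\alphaY = \smaxCY/\smaxP$. Note that some such weakening is unavoidable: the paper's own derivation naturally yields $\nY + \tfrac{M\nX}{\nX/\LConst + \MConst}$ inside the bracket and silently relaxes it to the displayed $\nY + \tfrac{\nX}{M\nX/\LConst + \MConst}$, which is a valid (larger) upper bound for $M \ge 1$. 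So your route is tighter en route and slightly more work at the end; the paper's is looser en route and reads off the constants directly.
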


\begin{proof}
Using Lemma~\ref{lemma:sigma_sum_lemma}, we can bound $\smax(\CPos)$ as follows,
\begin{align}
\smax(\CPos) &= \smax(\CovPosTheta) \\
&= \smax((\XTCYX + \CovPosXThetaTau^{-1})^{-1}) \\
&= 1/\smin(\XTCYX + \CovPosXThetaTau^{-1}) \\
&\leq 1/(\smin(\XTCYX) + \smin(\CovPosXThetaTau^{-1})) \\
&= 1/(\smin(\XTCYX) + 1/\smax(\CovPosXThetaTau))
\end{align}
Now, using the auxillary matrix $X$,
\begin{align}
\smax(\CPos) &\le \left[ \smin(\XTCYX) + \frac{1}{\smaxP + \frac{1}{M}\smax((\DX^\top \posC{}^{-1} \DX)^{-1})}\right]^{-1}\\
&= \left[ \frac{\smin(\DY^\top \DY)}{\smaxCY} + \frac{1}{\smaxP + \frac{1}{M}\frac{1}{\smin(\DX^\top \posC{}^{-1} \DX)}} \right]^{-1}\\
&\le \left[ \frac{\nY\smin^2(\DY/\sqrt{\nY})}{\smaxCY} + \frac{1}{\smaxP + \frac{1}{M}\frac{1}{\smin(\DX^\top \posC{}^{-1} \DX)}} \right]^{-1} \\
&\le \left[\frac{\nY\sminY^2}{\smaxCY} + \frac{1}{\smaxP + \frac{1}{M}\frac{1}{\smin(\DX^\top (\DXP\DX^\top + \Ci{})^{-1} \DX)}} \right]^{-1}
\end{align}

Above we have used Lemma~\ref{lemma:sigma_sum_lemma} repeatedly, alongside the standard identity, $\smax(A^{-1}) = \smin(A)^{-1}$. We continue now, additionally using Lemma~\ref{lemma:sigma_prod_lemma},
\begin{align}
\smax(\CPos) &\le \left[ \frac{\nY\sminY^2}{\smaxCY} + \frac{1}{\smaxP + \frac{1}{M}\frac{1}{\smin(\DX^\top \DX) \smin((\DXP\DX^\top + \CX)^{-1})} } \right]^{-1} \\
&= \left[ \frac{\nY\sminY^2}{\smaxCY} + \frac{1}{\smaxP + \frac{\smax(\DXP\DX^\top + \CX)}{\smin(\DX^\top \DX)}} \right]^{-1} \\
&\le \left[ \frac{\nY\sminY^2}{\smaxCY} + \frac{1}{\smaxP + \frac{1}{M}\frac{\smax(\DXP\DX^\top) +\smaxCX}{\nX \sminX^2}} \right]^{-1} \\
&\le \left[ \frac{\nY\sminY^2}{\smaxCY} + \frac{1}{\smaxP +\frac{\smaxP\smax(\DX\DX^\top) + \smaxCX}{\nX \sminX^2}} \right]^{-1} \\
&= \left[ \frac{\nY\sminY^2}{\smaxCY} + \frac{1}{\smaxP + \frac{1}{M}\frac{\smaxP\nX \sminX^2 \condX^2 + \smaxCX}{\nX \sminX^2}} \right]^{-1} \\
&=\smaxCY
\left[ \nY\sminY^2 + 
\frac{M\nX}{\frac{\nX (M +\condX^2)}{\alphaY} + \frac{\alphaX}{\sminX^2 \alphaY}} \right]^{-1} \\
&=\frac{\smaxCY}{\sminY^2}
\left[ \nY + 
\frac{\nX}{\frac{M\nX}{\LConst} + \MConst} \right]^{-1}.
\end{align}
\end{proof}

Next, we deal with terms appearing corresponding to the data from the source tasks.

\begin{lemma}(\textbf{Source tasks covariance singular value bound})\label{lemma:source_tasks_cov_bound}
Let $\posCX = \XPXT + \CX$, and write $\condtC = \cond(\posCX)$ and $\condtau = \cond(\CovPosXTau)$. Then,
\[\smax^2(\CovPosXThetaTau^{-1} \CovPosXTau)
\le \frac{1}{\frac{2M\sminP\nX \sminX^2}{\smax(\posCX) \condtau} + \frac{1}{\condtau^2}} =: D_1 \]
and,
\[\smax(\CXs \posCX^{-1}) \le \frac{1}{ \frac{\nX\sminX^2}{\alphaX} + 1} =: D_2 \]
\end{lemma}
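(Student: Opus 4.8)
The plan is to read both inequalities as bounds on the top singular value of a symmetric positive-definite matrix assembled from $\CovPosXTau$ and $\posCX = \XPXT + \CX$, to reduce each to a scalar inequality among eigenvalues, and then to insert the uniform design-matrix bounds through Lemmas~\ref{lemma:sigma_sum_lemma} and~\ref{lemma:sigma_prod_lemma}. I will use throughout that, by the posterior formulas, $\CovPosXTau^{-1} = \sum_{i=1}^{M}\DXi{i}^\top(\XPXTi{i} + \CXs I)^{-1}\DXi{i}$ and $\CovPosXThetaTau = \CTheta + \CovPosXTau$, that $\CovPosXTau$ and $\CovPosXThetaTau$ are symmetric positive definite and commute, and that the auxiliary matrix $X$ is chosen so that $\smax(\XPXTi{i} + \CXs I) \le \smax(\posCX)$ and $\smin(\DXi{i}/\sqrt{\nX}) \ge \sminX$ for every $i \le M$.

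I would dispatch the second bound first. Writing $\smax(\CXs\posCX^{-1}) = \CXs/\smin(\posCX)$ and applying the positive-definite case of Lemma~\ref{lemma:sigma_sum_lemma} to $\posCX = \XPXT + \CX$ gives $\smin(\posCX) \ge \smin(\XPXT) + \smin(\CX) \ge \smaxP\,\nX\sminX^2 + \CXs$, using the lower bound on $\smin(X/\sqrt{\nX})$. Dividing numerator and denominator by $\smaxP$ and recalling $\alphaX = \smaxCX/\smaxP$ turns this into $\smax(\CXs\posCX^{-1}) \le (\nX\sminX^2/\alphaX + 1)^{-1} = D_2$.

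For the first bound I would note that $\CovPosXThetaTau^{-1}\CovPosXTau = (\CTheta + \CovPosXTau)^{-1}\CovPosXTau$ is the monotone increasing scalar map $\lambda \mapsto \lambda/(\smaxP + \lambda)$ applied to the symmetric positive-definite matrix $\CovPosXTau$; hence its largest singular value equals $\smax(\CovPosXTau)/(\smaxP + \smax(\CovPosXTau)) = (1 + \smaxP/\smax(\CovPosXTau))^{-1}$. Squaring, $\smax^2(\CovPosXThetaTau^{-1}\CovPosXTau) = (1 + \smaxP/\smax(\CovPosXTau))^{-2} \le (1 + 2\smaxP/\smax(\CovPosXTau))^{-1}$ after discarding the nonnegative quadratic term, and then, since $\condtau \ge 1$, I would replace the constant $1$ by $1/\condtau^2$ and insert a factor $1/\condtau$ into the middle term. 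Finally I would lower-bound $1/\smax(\CovPosXTau) = \smin(\CovPosXTau^{-1})$: by Lemma~\ref{lemma:sigma_sum_lemma} it is at least $\sum_{i=1}^M \smin(\DXi{i}^\top(\XPXTi{i}+\CXs I)^{-1}\DXi{i})$, and by Lemma~\ref{lemma:sigma_prod_lemma} each summand is at least $\smin(\DXi{i})^2/\smax(\XPXTi{i}+\CXs I) \ge \nX\sminX^2/\smax(\posCX)$, so $1/\smax(\CovPosXTau) \ge M\nX\sminX^2/\smax(\posCX)$. Substituting (and using $\sminP = \smaxP$) turns the middle term into $2M\sminP\nX\sminX^2/(\smax(\posCX)\condtau)$, reproducing exactly the denominator of $D_1$.

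The delicate part is the $D_1$ estimate: matching the stated expression forces two deliberately lossy moves (dropping $\smaxP^2/\smax^2(\CovPosXTau)$ and weakening the middle term by $1/\condtau$, rather than keeping the tighter $1 + 2\smaxP/\smax(\CovPosXTau)$), so I would double-check that this is indeed the form wanted downstream in the proof of Theorem~\ref{thm:lregression_bias_variance}. The other thing to watch is the chained use of Lemmas~\ref{lemma:sigma_sum_lemma}--\ref{lemma:sigma_prod_lemma}: their minimum-singular-value halves require positive definiteness, so under the standing assumption $0 < \sminX \le \smin(X_i/\sqrt{\nX})$ I must verify that each $\DXi{i}^\top(\XPXTi{i}+\CXs I)^{-1}\DXi{i}$ (and $\XPXT$) is positive definite, and that $\smax(\XPXTi{i}+\CXs I) \le \smax(\posCX)$ holds uniformly in $i \le M$ for the auxiliary matrix $X$.
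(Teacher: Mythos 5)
Your proposal is correct and follows essentially the same route as the paper: the $D_2$ bound is identical (split $\smin(\posCX)$ with Lemma~\ref{lemma:sigma_sum_lemma}, then normalize by $\smaxP$ to surface $\alphaX$), and the $D_1$ bound ends at the same intermediate expression $\bigl(\tfrac{2\sminP}{\smax(\CovPosXTau)\condtau} + \tfrac{1}{\condtau^2}\bigr)^{-1}$ before the same final lower bound $1/\smax(\CovPosXTau) \ge M\nX\sminX^2/\smax(\posCX)$. The one place you diverge is the opening step of $D_1$: the paper applies $\smax(AB)\le\smax(A)\smax(B)$ and then Lemma~\ref{lemma:sigma_sum_lemma} to get $\smax^2(\CovPosXThetaTau^{-1}\CovPosXTau)\le\smax^2(\CovPosXTau)/(\sminP+\smin(\CovPosXTau))^2$, whereas you exploit the fact that $\CTheta$ is isotropic to compute $\smax$ of the spectral map $\lambda\mapsto\lambda/(\smaxP+\lambda)$ exactly; your version is marginally sharper at that point, and both then make the same lossy moves (dropping the quadratic term and inserting $\condtau$ factors) to reach the stated form. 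Your caveat about positive definiteness is well taken: $\XPXT=\sigma_\theta^2 XX^\top$ is only positive semidefinite when $\nX>\dim$, so the step $\smin(\XPXT)\ge\nX\sminX^2\sminP$ in the $D_2$ bound (which the paper also makes) implicitly requires reading $\smin$ as the smallest nonzero singular value or restricting to the column space of $X$.
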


\begin{proof}
Using Lemma~\ref{lemma:sigma_sum_lemma} and Lemma~\ref{lemma:sigma_prod_lemma} we have,

\begin{align}
\smax^2(\CovPosXThetaTau^{-1} \CovPosXTau)
&= \smax^2(\CovPosXThetaTau^{-1}\CovPosXTau) \\
&\le \smax^2(\CovPosXThetaTau^{-1}) \smax^2(\CovPosXTau) \\
&= \smin^{-2}(\CovPosXThetaTau) \smax^2(\CovPosXTau) \\
&= \smin^{-2}(\CTheta + \CovPosXTau) \smax^2(\CovPosXTau) \\
&\le \frac{\smax(\CovPosXTau)^2}{(\sminP + \smin(\CovPosXTau))^2}
\end{align}
Now, using $\sminP > 0$,
\begin{align}
\frac{\smax(\CovPosXTau)^2}{(\sminP + \smin(\CovPosXTau))^2} &\le \frac{\smax(\CovPosXTau)^2}{2\sminP\smin(\CovPosXTau) + \smin(\CovPosXTau)^2} \\
&\le \frac{1}{\frac{2\sminP}{\smax(\CovPosXTau) \condtau} + \frac{1}{\condtau^2}}
\end{align}
Introducing the auxillary matrix $X$ and using Lemma~\ref{lemma:sigma_sum_lemma} and Lemma~\ref{lemma:sigma_prod_lemma} on $\CovPosXTau$, we have
\begin{align}
\frac{1}{\frac{2\sminP}{\smax(\CovPosXTau) \condtau} + \frac{1}{\condtau^2}} \le \frac{1}{\frac{2M\sminP\smin(\DX^\top \posC{}^{-1} \DX)}{\condtau} + \frac{1}{\condtau^2}},
\end{align}
where,
\begin{align}
\smin(\DX^\top \posCX^{-1} \DX) &\geq \frac{\smin(\DX^\top \DX)}{\smax(\posCX)}\\
&= \frac{\nX \sminX^2}{\smax(\posCX)}.
\end{align}
This gives the first stated inequality,
\begin{align}
\smax^2(\CovPosXThetaTau^{-1} \CovPosXTau)
&\le \frac{1}{\frac{2M\sminP\nX \sminX^2}{\smax(\posCX) \condtau} + \frac{1}{\condtau^2}} =: D_1
\end{align}
The second follows as,
\begin{align}
\smax(\CXs \posCX^{-1}) &= \frac{\smaxCX}{\smin(\XPXT + \CX)} \\
&\le\frac{\smaxCX}{\smin(\XPXT) + \smin(\CX)}\\
&\le\frac{\smaxCX}{\nX\sminX^2 \sminP + \sminCX}\\
&=\frac{1}{ \frac{\nX\sminX^2}{\alphaX} + 1} =: D_2
\end{align}
\end{proof}

\noindent In Lemma~\ref{lemma:source_tasks_cov_bound}, we introduced additional condition numbers, which we can bound as follows,
\begin{align}
\condtC &= \cond(\posCX) = \cond(\XPXT + \CX) 
\le \cond(\XPXT) \le \cond(\XPXT) \cond(\CTheta) = \condX^2,\\
\condtau &= \cond(\CovPosXTau) \le \cond(\DX^\top \DX) \cond(\posCX) = \condX^2 \condtC \le \condX^4.
\end{align}

\subsubsection{Variance upper bound}
We are now ready to put the above technical results together to achieve a bound on the variance of the estimator. 

\begin{lemma}(\textbf{Variance bound})\label{lemma:variance_ubound}
\[ \tr(S(\thetaEst)) \le \frac{\condY^2 \sCY^2}{\sminY^2} \dim \left[ \nY + \frac{\nX}{\frac{\nX}{\LConst} + \MConst} \right]^{-2} 
\left[ \nY + \frac{M\nX}{ 
(\frac{\nX}{\LLConst} + \MMConst ) 
(\frac{M\nX}{\LLLConst} + \MMMConst)} \right] \]
\end{lemma}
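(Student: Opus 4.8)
The plan is to start from the variance decomposition of Lemma~\ref{lemma:variance_decomp}, which writes $\tr(S(\thetaEst))$ as the sum of a \emph{novel-task} term $\tr(\CPos \XTCYX \CPos)$ and a \emph{source-task} term $\tr(S(\CPos \CPosT^{-1} \muPosXTau))$, and to bound each separately. I expect the first term to account for the bare $\nY$ sitting inside $D(M,\nX,\nY)$ and the second to produce the $\tfrac{M\nX}{(\cdots)(\cdots)}$ summand. Throughout, the workhorse for converting matrix products into products of operator norms is the corollary $\tr(AB)\le d\,\smax(A)\smax(B)$ of Von Neumann's trace inequality (Lemma~\ref{lemma:von_neumann_trace}), together with the sub-/super-additivity and submultiplicativity of singular values (Lemmas~\ref{lemma:sigma_sum_lemma} and~\ref{lemma:sigma_prod_lemma}); I would invoke Lemmas~\ref{lemma:novel_tasks_cov_bound} and~\ref{lemma:source_tasks_cov_bound} directly rather than re-derive the covariance bounds.

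For the novel-task term, since $\CPos \XTCYX \CPos$ is symmetric PSD its trace is at most $d\,\smax(\CPos)^2\,\smax(\XTCYX)$. I would then use $\smax(\XTCYX)=\smaxCY^{-1}\smax(\DY^\top\DY)=\smaxCY^{-1}\nY\smaxY^2=\smaxCY^{-1}\nY\condY^2\sminY^2$ (from $\smax(\DY/\sqrt{\nY})=\smaxY=\condY\sminY$) and plug in $\smax(\CPos)^2\le\smaxCY^2\sminY^{-4}C^{-2}$ from Lemma~\ref{lemma:novel_tasks_cov_bound}, where $C$ denotes the bracket appearing there --- the same quantity that appears squared in the claim (and as $C(M,\nX,\nY)$ in Theorem~\ref{thm:lregression_bias_variance}). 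This gives $\tr(\CPos \XTCYX \CPos)\le \tfrac{\condY^2\sCY^2}{\sminY^2}\,d\,\nY\,C^{-2}$, which is at most the full claimed bound because $\nY\le D(M,\nX,\nY)$.

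For the source-task term, I would expand $\muPosXTau=\CovPosXTau\sum_{i=1}^{M}\DXi{i}^\top\posC{i}^{-1}\by_i$ with $\posC{i}=\XPXTi{i}+\CX$, and note that, conditioning on $\lregparams_1,\dots,\lregparams_{M+1}$, the only remaining randomness is the independent observation noise with $S(\by_i)=\CX$, so that
\[ S(\CPos \CPosT^{-1} \muPosXTau)=\CPos\CPosT^{-1}\CovPosXTau\Bigl(\sum_{i=1}^{M}\CXs\,\DXi{i}^\top\posC{i}^{-2}\DXi{i}\Bigr)\CovPosXTau\CPosT^{-1}\CPos. \]
Taking the trace of this PSD matrix and peeling off operator norms with Lemmas~\ref{lemma:sigma_sum_lemma}--\ref{lemma:von_neumann_trace} reduces the problem to bounding three quantities: $\smax(\CPos)^2$ (Lemma~\ref{lemma:novel_tasks_cov_bound}); $\smax^2(\CPosT^{-1}\CovPosXTau)\le D_1$ and $\smax(\CXs\posC{i}^{-1})\le D_2$ (Lemma~\ref{lemma:source_tasks_cov_bound}); and $\smax(\DXi{i}^\top\posC{i}^{-1}\DXi{i})\le\nX\smaxX^2/\smin(\posCX)$ (elementary). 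Each of the $M$ summands contributes one factor of $\nX$, so the sum contributes $M\nX$; substituting the definitions of $D_1,D_2$, the identity $\alphaY=\smaxCY/\smaxP$, and the condition-number bounds $\condtC\le\condX^2$, $\condtau\le\condX^4$, the accumulated constants are arranged so that the denominator collapses to $(\tfrac{\nX}{\LLConst}+\MMConst)(\tfrac{M\nX}{\LLLConst}+\MMMConst)$, so that this term is bounded by $\tfrac{\condY^2\sCY^2}{\sminY^2}\,d\,C^{-2}$ times the second summand of $D(M,\nX,\nY)$. Adding the two contributions gives the stated bound.

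The main obstacle will be the source-task term, and specifically the constant bookkeeping: carrying every singular-value inequality through the nested product $\CPos\CPosT^{-1}\CovPosXTau(\sum_i\cdots)\CovPosXTau\CPosT^{-1}\CPos$ while keeping each inequality pointing in the right direction, choosing the decomposition of $\CXs\DXi{i}^\top\posC{i}^{-2}\DXi{i}$ that actually lands on $D_2$ (splitting off the commuting factor $\CXs\posC{i}^{-1}$), and then recognizing that the resulting tangle of $\sminX,\sminY,\condX,\condY,\alphaX,\alphaY,\smax(\posCX),\smaxP$ factors coincides with $\LLConst,\MMConst,\LLLConst,\MMMConst$. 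A subtlety worth flagging is that $\DXi{i}\DXi{i}^\top$ is rank-deficient whenever $\nX>d$, so the super-additivity bound $\smin(A+B)\ge\smin(A)+\smin(B)$ of Lemma~\ref{lemma:sigma_sum_lemma} cannot be applied to it directly; one must route through $\DXi{i}^\top\DXi{i}$ and the full-rank auxiliary matrix $X$ used in Lemmas~\ref{lemma:novel_tasks_cov_bound} and~\ref{lemma:source_tasks_cov_bound}, which is exactly why invoking those lemmas as black boxes is the safe move.
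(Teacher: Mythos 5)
Your proposal follows essentially the same route as the paper's proof: the same split via Lemma~\ref{lemma:variance_decomp}, von Neumann's trace inequality for the novel-task term, and the $D_1,D_2$ bounds of Lemma~\ref{lemma:source_tasks_cov_bound} combined with Lemma~\ref{lemma:novel_tasks_cov_bound} for the source-task term (your exact expression for $S(\CPos\CPosT^{-1}\muPosXTau)$ as a sum of $M$ independent PSD contributions is a slightly cleaner version of the paper's ``$M$ times one term'' bound). The remaining work is exactly the constant bookkeeping you flag, which the paper carries out in the same way.
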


\begin{proof}
First, by Lemma~\ref{lemma:variance_decomp} we can decompose the overall variance into two terms:
\[\tr(S(\thetaEst)) = \tr(\CPos \XTCYX \CPos) + \tr(S(\CPos {\CPosT}^{-1} \muPosXTau)) \]
We deal with the left term first.

Using trace permutation invariance and the von Neumann trace inequality (Lemma~\ref{lemma:von_neumann_trace}). We can upper bound the left variance term as follows,
\begin{align}
\tr(\CPos \XTCYX \CPos) &=  \sminCYInv \tr(\CPos\CPos \DY^\top \DY)\\
&\le \dim \nY \sminCYInv \smax(\CPos)^2 \smax^2(\DY/\sqrt{\nY})\\
&= \dim \nY \sminCYInv \smax(\CPos)^2 \sminY^2 \condY^2 
\end{align}
For the second variance term, we observe that,
\begin{align}
& \ \ \ \ \ \tr(\CPos \CPosT^{-1} S(\muPosXTau) \CPosT^{-1} \CPos) \\
&=\tr(\CPos \CPosT^{-1} S(\muPosXTauExpanded) \CPosT^{-1} \CPos)\\
&\leq M\tr(\CPos \CPosT^{-1} \CovPosXTau \DX^\top \posC{}^{-1} S(y_1) \posC{}^{-1} \DX \CovPosXTau \CPosT^{-1} \CPos)\\
&=M\tr(\CPos \CPosT^{-1} \CovPosXTau \DX^\top \posCX^{-1} \CX \posCX^{-1} \DX \CovPosXTau \CPosT^{-1} \CPos)\\
&\le M\smax(\CPos)^2 \smax^2(\CovPosXThetaTau^{-1} \CovPosXTau) \smaxCX \tr(\DX^\top \posC{}^{-1} \posC{}^{-1} \DX)
\end{align}
Using Lemma~\ref{lemma:source_tasks_cov_bound}, we have,
\begin{align}
\tr(\CPos \CPosT^{-1} S(\muPosXTau) \CPosT^{-1} \CPos) &\le \smax(\CPos)^2 M D_1 D_2 \tr(\DX^\top \DX) \smax(\posCX^{-1})\\
& \le \smax(\CPos)^2 M D_1 D_2 \min(\nX,\dim) \nX \smax(\posCX^{-1})\\
& \le \smax(\CPos)^2 D_2 \frac{M\min(\nX,\dim) \nX}{\frac{2M\sminP\nX \sminX^2 \smin(\posCX)}{\smax(\posCX)\condtau} + \frac{\smin(\posCX)}{\condtau^2}} \ \\
& \le \smax(\CPos)^2 D_2 \frac{M\min(\nX,\dim) \nX}{\frac{2M\sminP\nX \sminX^2 \smin(\posCX)}{\smax(\posCX)\condtau} + \frac{\sminCX}{\condtau^2}} \ \\
& \le \smax(\CPos)^2 D_2 \frac{\min(\nX,\dim) \nX}{\sCY^2} \frac{M}{\frac{2M\nX}{\condtC \condtau \alphaY} + \frac{\alphaX}{\condtau^2 \sminCX^2 \alphaY}} \ \\
& \le  \frac{\smax(\CPos)^2}{\sCY^2}  \frac{\nX \dim}{ \frac{\nX\sminX^2}{\alphaX} + 1}  \frac{M}{\frac{2M \nX}{\condtC \condtau \alphaY} + \frac{\alphaX}{\condtau^2 \sminX^2 \alphaY}}
\end{align}
Finally, rearranging and using Lemma~\ref{lemma:novel_tasks_cov_bound}, we can bound the sum of the two terms in the variance as follows,
\begin{align}
\tr(S(\thetaEst))
& \le \frac{\smax(\CPos)^2}{\sCY^2}
(\nY \dim \sminY^2 \condY^2  + 
\frac{\nX \dim}{ \frac{\nX\sminX^2}{\alphaX} + 1} \frac{M}{\frac{2M\nX}{\condtC \condtau \alphaY} + \frac{\alphaX}{\condtau^2 \sminX^2 \alphaY}})\\
& \le \frac{\smax(\CPos)^2 \sminY^2 \condY^2}{\sCY^2}
(\nY \dim + 
\frac{\nX \dim}{ \frac{\nX\sminX^2\sminY^2 \condY^2}{\alphaX} + \sminY^2 \condY^2} \frac{M}{\frac{2M \nX \sminY^2 \condY^2}{\condtC \condtau \alphaY} + \frac{\alphaX \sminY^2 \condY^2}{\condtau^2 \sminX^2 \alphaY}})\\
& \le \frac{\condY^2 \sCY^2}{\sminY^2} \left[ \nY + \frac{\nX}{\frac{\nX}{\LConst} + \MConst} \right]^{-2} 
\left[ \nY \dim + \frac{M\nX \dim}{ 
(\frac{\nX\sminX^2\sminY^2 \condY^2}{\alphaX} + \sminY^2 \condY^2 ) 
(\frac{2M \nX \sminY^2 \condY^2}{\condtC \condtau \alphaY} + \frac{\alphaX \sminY^2 \condY^2}{\condtau^2 \sminX^2 \alphaY})} \right]\\
& \le \frac{\condY^2 \sCY^2}{\sminY^2} \dim \left[ \nY + \frac{\nX}{\frac{\nX}{\LConst} + \MConst} \right]^{-2} 
\left[ \nY + \frac{M\nX}{ 
(\frac{\nX}{\LLConst} + \MMConst ) 
(\frac{M\nX}{\LLLConst} + \MMMConst)} \right]
\end{align}
\end{proof}

\subsubsection{Bounding the Bias}
\begin{lemma}(\textbf{Bias upper bound})\label{lemma:bias_ubound}
Given $\theta_1,\ldots,\thetaY \in \bball_2(1)$, we have,
\[\bbE[(\thetaEst - \thetaY)] \le O\left(\dim \left[ \nY + 
\frac{M\nX}{\frac{\nX (M +\condX^2)\sminY^2}{\alphaY} + A} \right]^{-2}\right) \]
\end{lemma}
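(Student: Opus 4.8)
The plan is to compute the bias $\bbE[\thetaEst]-\thetaY$ in closed form and then bound its norm by products of operator norms already controlled in the preceding lemmas. First I would substitute $\thetaEst=\muPosTheta=\CPos\big(\XTCY\novelData+\CPosT^{-1}\muPosXTau\big)$ and take the expectation over the data conditioned on $\lregparams_1,\dots,\thetaY$. Using $\bbE[\novelData]=\DY\thetaY$, the identity $\XTCY\DY=\XTCYX$, and $\CPos^{-1}=\XTCYX+\CPosT^{-1}$, the $\XTCYX\thetaY$ terms cancel, leaving
\[\bbE[\thetaEst]-\thetaY=\CPos\,\CPosT^{-1}\big(\bbE[\muPosXTau]-\thetaY\big).\]
Writing $B_i:=\DXi{i}^\top(\XPXTi{i}+\CXs I)^{-1}\DXi{i}\succeq0$ and $A_i:=\CovPosXTau B_i=\big(\sum_jB_j\big)^{-1}B_i$, the Empirical-Bayes posterior mean satisfies $\bbE[\muPosXTau]=\sum_iA_i\lregparams_i$ with $\sum_iA_i=I$ (since $\CovPosXTau^{-1}=\sum_jB_j$), hence
\[\bbE[\muPosXTau]-\thetaY=\sum_{i=1}^{M}A_i(\lregparams_i-\thetaY),\]
a matrix-weighted average of the deviations $\lregparams_i-\thetaY$, each of norm at most $2$ since $\lregparams_i,\thetaY\in\bball_2(1)$ (the only place the parameter-ball constraint enters). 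I would then bound $\norm{\bbE[\thetaEst]-\thetaY}^2\le\smax(\CPos)^2\,\smax(\CPosT^{-1})^2\,\norm{\bbE[\muPosXTau]-\thetaY}^2$ and handle the three factors separately.

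For $\smax(\CPosT^{-1})$: since $\CPosT=\CTheta+\CovPosXTau\succeq\CTheta=\smaxP I$, it is at most $1/\smaxP$, a constant. For $\smax(\CPos)$: Lemma~\ref{lemma:novel_tasks_cov_bound} gives $\smax(\CPos)=O\big(C(M,\nX,\nY)^{-1}\big)$, absorbing into $O(\cdot)$ the $\sigma$- and $\condX$-dependent prefactors and a factor polynomial in $M$ reconciling the $\tfrac{M\nX}{\cdot}$ term inside $C(M,\nX,\nY)$ with the $\tfrac{\nX}{\cdot}$ term that lemma produces --- exactly as is done for the variance bound in Lemma~\ref{lemma:variance_ubound}. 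For $\norm{\bbE[\muPosXTau]-\thetaY}$: each weight $A_i$ is similar to the PSD matrix $\big(\sum_jB_j\big)^{-1/2}B_i\big(\sum_jB_j\big)^{-1/2}\preceq I$, so $\smax(A_i)\le\sqrt{\cond(\sum_jB_j)}=\sqrt{\condtau}\le\condX^2$ by the appendix bound $\condtau\le\condX^4$; therefore $\norm{\bbE[\muPosXTau]-\thetaY}\le2M\sqrt{\condtau}=O(M\condX^2)$, uniformly in $\nX$ and $\nY$. Multiplying the three factors gives $\norm{\bbE[\thetaEst]-\thetaY}^2=O\big(C(M,\nX,\nY)^{-2}\big)$, and the extra $\dim$ in the stated bound comes from the same $\tr(\cdot)\le\dim\,\smax(\cdot)$ step used for the variance term; the supremum over $\lregparams_1,\dots,\thetaY\in\bball_2(1)$ is then free, since every bound above was uniform over the ball.

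The main obstacle is controlling $\norm{\bbE[\muPosXTau]-\thetaY}$ while keeping it bounded in $\nX$. The naive estimate $\smax(\CovPosXTau)\sum_i\smax(B_i)\,\norm{\lregparams_i-\thetaY}$ is $O(\nX)$ because it decouples $\CovPosXTau=(\sum_jB_j)^{-1}$ from the $B_i$, and even peeling off $\smax(\CPosT^{-1}\CovPosXTau)^2\le D_1$ from Lemma~\ref{lemma:source_tasks_cov_bound} and bounding the remainder crudely makes the estimate diverge as $\nX\to\infty$ --- although the true bias converges to a finite, generically nonzero limit there, consistent with the asymptotic observation that meta-training data alone cannot reduce the risk to zero when $\nY$ is small. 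The resolution is to recognise $\CovPosXTau\sum_iB_i(\lregparams_i-\thetaY)$ as the averaging operator $\sum_iA_i(\lregparams_i-\thetaY)$ whose weights are $\nX$-uniformly bounded via the similarity argument above; everything else is routine algebra with the posterior formulas and the singular-value lemmas (Lemmas~\ref{lemma:sigma_sum_lemma} and \ref{lemma:sigma_prod_lemma}).
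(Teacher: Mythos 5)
Your proof follows essentially the same route as the paper's: the same closed form for the bias, $\CPos\CPosT^{-1}\bigl(\bbE[\muPosXTau]-\thetaY\bigr)$, the same insertion of $\CovPosXTau\CovPosXTau^{-1}\thetaY$ to rewrite the residual as a matrix-weighted combination of the deviations $\lregparams_i-\thetaY$, and the same three-factor operator-norm bound with $\smax(\CPos)$ controlled by Lemma~\ref{lemma:novel_tasks_cov_bound}. The one place you diverge is the last factor: bounding each weight by $\smax(A_i)\le\sqrt{\condtau}$ via the similarity argument gives $2M\sqrt{\condtau}$, hence an extra $M^2$ in the squared bias, whereas the paper keeps the product $\smax(\CovPosXTau)\,\smax(\DX^\top\posC{}^{-1}\DX)$ together and uses $\smax(\CovPosXTau)\le\bigl(M\,\smin(\DX^\top\posC{}^{-1}\DX)\bigr)^{-1}$ so that the sum over $i$ collapses to an $M$-free constant of order $\cond(\DX^\top\posC{}^{-1}\DX)$. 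Since the lemma's $O(\cdot)$ is only meant to absorb quantities independent of $M$, you should swap the similarity step for this bound (your own observation that the naive decoupled estimate is $O(\nX)$ is correct, and this fix is equally $\nX$-uniform); everything else in your argument matches the paper and is sound.
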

\begin{proof}
The bias can be computed as follows,
\begin{align}
\bbE[(\thetaEst - \thetaY)] &= \bbE \muPosTheta - \thetaY\\
&= \CovPosTheta \bbE (\XTCY y_2 + \CovPosXThetaTau^{-1} \muPosXTau) - \thetaY \\
&= \CovPosTheta (\XTCYX \thetaY + \CovPosXThetaTau^{-1} \bbE\muPosXTau) -\thetaY\\
&= (\XTCYX + \CovPosXThetaTau^{-1})^{-1}(\XTCYX \thetaY + \CovPosXThetaTau^{-1} \bbE\muPosXTau) - \thetaY \\
&= (F+G)^{-1} (F \thetaY + G \muPosXTau) - \thetaY\\
&= (F+G)^{-1} F \thetaY - (F+G)^{-1} (F + G) \thetaY + (F+G)^{-1} G \bbE\muPosXTau\\
&= (F+G)^{-1} G (\bbE\muPosXTau- \thetaY),
\end{align}
where we wrote $F = \XTCYX$, and $G = \CovPosXThetaTau^{-1}$. Thus,
\[\bbE[(\thetaEst - \thetaY)]^\top\bbE[(\thetaEst - \thetaY)] \le \norm{(F+G)^{-1}}_2^2\norm{G}_2^2 \norm{\bbE\muPosXTau- \thetaY}_2^2\]
We can bound each term in turn. First, note that $\norm{G}_2^2 \le 1/\sminP$, and we have bounded $\norm{(F+G)^{-1}}_2^2$ above. We can write,
\begin{align*}
    \norm{\bbE\muPosXTau- \thetaY}_2^2 &= \bignorm{\CovPosXTau\left(\sum^{M}_{i=1}\DXi{i}^\top \posC{i}^{-1}\DXi{i}\theta_i \right) - \thetaY}_2^2\\
    \norm{\bbE\muPosXTau- \thetaY}_2^2 &= \bignorm{\CovPosXTau\left(\sum^{M}_{i=1}\DXi{i}^\top \posC{i}^{-1}\DXi{i}\theta_i \right) - \CovPosXTau\CovPosXTau^{-1}\thetaY}_2^2\\
    &= \bignorm{\CovPosXTau \sum^{M}_{i=1}\DXi{i}^\top\posC{i}^{-1}\DXi{i}(\theta_i - \thetaY)}_2^2\\
    &\le \left(\sum^{M}_{i=1}\norm{\CovPosXTau}_2 \ \norm{\DXi{i}^\top\posC{i}^{-1}\DXi{i}(\theta_i - \thetaY)}_2\right)^2\\
    &\le \left(\sum^{M}_{i=1}\norm{\CovPosXTau}_2 \ \norm{\DXi{i}^\top\posC{i}^{-1}\DXi{i}}_2\right)^2
\end{align*}
The last line follows from the fact that the parameters lie in a ball of unit radius. We now proceed by bounding the sum by $M$ times the supremum --- with some light abuse of notation,
\begin{align*}
    \norm{\muPosXTau- \thetaY}_2^2 &\le (\smax(X^\top \posC{}^{-1} X)\smax(\DXi{}\posC{}^{-1}\DXi{}))^2\\
    &= \smax(X^\top \posC{}^{-1} X)^4 \le O(1)\\
\end{align*}
Thus, overall the convergence of the bias is bounded by,
\[\bbE[(\thetaEst - \thetaY)]^\top\bbE[(\thetaEst - \thetaY)] \le O(\smax(\CPos)^2) \le O\left(\dim \left[ \nY + 
\frac{M\nX}{\frac{\nX (M +\condX^2)\sminY^2}{\alphaY} + A} \right]^{-2}\right)\]
\end{proof}

The proof of Theorem~\ref{thm:lregression_bias_variance} is given by the combination of Lemma~\ref{lemma:variance_ubound} and Lemma~\ref{lemma:bias_ubound}, and the bias-variance decomposition of the risk .
\section{Additional Experiment Details}
\label{app:exp_details}

\subsection{Hierarchical Bayes Evaluation}
We sample $M$ linear models according to the hierarchical model in Section~\ref{sec:hierarchical_bayes}, with design matrices constructed by uniformly sampling points, $x \sim U[-1,1]$, and storing the vector $\bx_j = x^j$, for $i=0,\ldots,d$ in each row of $\DXi{i}$.

To produce the plots in Figure~\ref{fig:hierarchical_lreg_simulation} we computed the average loss over 100 random draws of the training data and labels from the same set of fixed $\btheta_{1:M+1}$ values. The $\btheta$ values were sampled once from the hierarchical model with $\tau = [0, 1, 2, 0, 0, 3, 1]$, and $\sigma^2_\theta = 0.1$

\subsection{Sinusoid Regression with MAML}

\begin{table}[H]
    \centering
    \begin{tabular}{l|c}
    Hyper parameters & Description \\
    \hline
    $\sigma$ & noise at test time.\\
    M & number of tasks at the training tasks\\
    $M_q$     & number of tasks at the testing tasks \\
    eps\_per\_batch & episode per batch          \\
    train\_ampl\_range &    range of amplitude at training          \\
    train\_phase\_range &  range of phase at training \\
    val\_ampl\_range &  range of amplitude  at testing \\
    val\_phase\_range & range of phase at testing\\
    inner\_steps &  number of steps of Maml  \\
    inner\_lr &  learning rate used to optimize parameter of the model \\
    meta\_lr & used to optimize parameter of the meta-learner \\
    n & number of datapoints at training tasks(support set) \\
    k & number of datapoints at testing  tasks (support set)\\

    $n_q$ &  number of datapoints at training  tasks (query set)  .\\
    $k_q$ & number of datapoints at testing  tasks (query set).\\
    
    \end{tabular}
\end{table}

For all of these experiments we used a fully connected network with 6 layers and 40 hidden units per layer. The network is trained using the MAML algorithm \citep{finn2017model} with 5 inner steps using SGD with an inner learning rate of $10^{-3}$. We used Adam for the outer loop learning with a learning rate of $10^{-3}$.

The expected error was computed after 500 epochs of optimization and was averaged over 30 runs. We produced our results through a comprehensive grid search over 72 combinations of the settings below and it required around 30 minutes to produce the output of each setting, using a system with 1 gpu and 3 cpus. This experiment therefore lasted 20 hours in total.
\\
$M=50, n \in \{20, 200\} , k \in \{100,1000\},
\sigma \in [ 10^{-8}, 1.5], 
M_q = 100 , \text{ eps\_per\_batch} = 25, \text{ train\_ampl\_range} = [1,4] ,
\text{ train\_phase\_range} =[0, \pi / 2 ],
\text{ val\_ampl\_range} = [3,5],
\text{ val\_phase\_range}= [0, \pi / 2 ],
\text{ inner\_steps} =  5,
\text{ inner\_lr} = 10^{-3},
\text{ meta\_lr} = 10^{-3} $

\end{document}